\def\BibTeX{{\rm B\kern-.05em{\sc i\kern-.025em b}\kern-.08em
    T\kern-.1667em\lower.7ex\hbox{E}\kern-.125emX}}
\newtheorem{assumption}{Assumption} 
\newtheorem{theorem}{Theorem}
\newtheorem{lemma}{Lemma}
\newtheorem{corollary}{Corollary}
\newcommand{\defeq}{\stackrel{\text{\tiny \text{def}}}{=}}
\DeclareMathOperator*{\argsup}{arg\, sup}
\DeclareMathOperator*{\argmax}{arg\, max}
\DeclareMathOperator*{\argmin}{arg\, min}
\begin{document}
\title{Multi-Agent Probabilistic Ensembles with Trajectory Sampling for Connected Autonomous Vehicles}

\author{Ruoqi Wen, Jiahao Huang, Rongpeng Li, Guoru Ding, and Zhifeng Zhao
    \thanks{Copyright (c) 2024 IEEE. Personal use of this material is permitted. However, permission to use this material for any other purposes must be obtained from the IEEE by sending a request to pubs-permissions@ieee.org. (Corresponding Author: Rongpeng Li)}
    \thanks{R. Wen, J. Huang, and R. Li are with the College of Information Science and Electronic Engineering, Zhejiang University, Hangzhou 310058, China (email: \{wenruoqi, 3190103130, lirongpeng\}@zju.edu.cn).}
    \thanks{G. Ding is College of Communications and Engineering, Army Engineering University of PLA, Nanjing 210007, China (e-mail: dr.guoru.ding@ieee.org).}
    \thanks{Z. Zhao is with Zhejiang Lab, Hangzhou 311121, China, and also with the College of Information Science and Electronic Engineering, Zhejiang University, Hangzhou 310058, China (email: zhaozf@zhejianglab.com).}
    \thanks{Part of the paper has been accepted by IEEE Globecom 2023 (NativeAI Workshop) \cite{wen_multiagent_2023}.}
}
\maketitle

\begin{abstract}
%研究目的意义、研究内容、研究过程和最后的研究效果如何
Autonomous Vehicles (AVs) have attracted significant attention in recent years and Reinforcement Learning (RL) has shown remarkable performance in improving the autonomy of vehicles. In that regard, the widely adopted Model-Free RL (MFRL) promises to solve decision-making tasks in connected AVs (CAVs), contingent on the readiness of a significant amount of data samples for training. Nevertheless, it might be infeasible in practice, possibly leading to learning instability. In contrast, Model-Based RL (MBRL) manifests itself in sample-efficient learning, but the asymptotic performance of MBRL might lag behind the state-of-the-art MFRL algorithms. Furthermore, most studies for CAVs are limited to the decision-making of a single AV only, thus underscoring the performance due to the absence of communications. In this study, we try to address the decision-making problem of multiple CAVs with limited communications and propose a decentralized Multi-Agent Probabilistic Ensembles (PEs) with Trajectory Sampling (TS) algorithm namely \texttt{MA-PETS}. In particular, to better capture the uncertainty of the unknown environment, \texttt{MA-PETS} leverages PE neural networks to learn from communicated samples among neighboring CAVs. Afterward, \texttt{MA-PETS} capably develops TS-based model-predictive control for decision-making. On this basis, we derive the multi-agent group regret bound affected by the number of agents within the communication range and mathematically validate that incorporating effective information exchange among agents into the multi-agent learning scheme contributes to reducing the group regret bound in the worst case. Finally, we empirically demonstrate the superiority of \texttt{MA-PETS} in terms of the sample efficiency comparable to MFRL.
 
\end{abstract}

\begin{IEEEkeywords}
Autonomous vehicle control, multi-agent model-based reinforcement learning, probabilistic ensembles with trajectory sampling.
\end{IEEEkeywords}

\section{Introduction}
Recently, there has emerged significant research interest towards Connected Autonomous Vehicles (CAVs) with a particular emphasis on developing suitable Reinforcement Learning (RL)-driven controlling algorithms \cite{kiran2021deep, Hua2023MultiAgentRL} for the optimization of intelligent traffic flows \cite{trafficflows}, decision-making \cite{9793564}, and control of AVs \cite{gonzalez2015review}. Notably, these RL methods capably learn complex tasks for CAVs through effective interaction between agents and the environment, and existing RL works for CAVs can be classified as Model-Free Reinforcement Learning (MFRL) and Model-Based Reinforcement Learning (MBRL), the key differences of which lie in whether agents estimate an explicit environment model for the policy learning \cite{gronauer2022multi}.

Conventionally, MFRL, which relies on collected rewards on recorded state-action transition pairs, has been widely applied to model complex mixed urban traffic systems in multi-vehicle scenarios, showing excellent performance in various situations of autonomous driving \cite{MAPPO,zhang2022multi}. Typical examples of MFRL include MADDPG~\cite{MADDPG}, COMA~\cite{foerster2018counterfactual}, QMIX~\cite{rashid2020monotonic}, SVMIX\cite{SVMIX}. However, the computational complexity of most MFRL algorithms grows exponentially with the number of agents. To solve training data scarcity-induced out-of-distribution (OOD) problems, MFRL is typically required to repetitively interact with the real world to collect a sufficient amount of training data, which might be infeasible in practice and possibly lead to learning instability and huge overhead. On the contrary, due to the impressive sample efficiency, MBRL promises to solve CAVs issues\cite{9694460} more capably and starts to attain some research interest \cite{wu2022uncertainty,pan2021integrated}.  

Typically, MBRL is contingent on learning an accurate probabilistic dynamics model that can clearly distinguish between \textit{aleatoric} and \textit{epistemic} uncertainty~\cite{der2009aleatory}, where the former is inherent to the system noise, while the latter stems from sample scarcity and contributes to solving the OOD problem to a certain extent. Afterward, based on the learned dynamics model from the collected data, MBRL undergoes a planning and control phase by simulating model-consistent transitions and optimizing the policy accordingly. However, the asymptotic performance of MBRL algorithms has lagged behind state-of-the-art MFRL methods in common benchmark tasks, especially as the environmental complexity increases. In other words, although MBRL algorithms tend to learn faster, they often converge to poorer results~\cite{nagabandi2018neural, PETS}. Consequently, the deployment of model-based strategies to intricate tasks often encounters substantial compounded errors, which may impede trajectory learning. However, by extending single-agent RL \cite{MBRL_robotics} to multi-agent contexts through efficient communication protocols, one can compensate for the learning deficiency to some extent\cite{pmlr-v162-sessa22a, Tuynman2022TransferIR}.  In that regard, despite the simplicity of assuming the existence of a central controller, it might be practically infeasible or cost-effective to install such a controller in many real-world scenarios. Meanwhile, it is often challenging to establish fully connected communication between all agents, where the required communication overhead can scale exponentially\cite{MAPPO, MADDPG}. Instead, a more complex communication-limited decentralized multi-agent MBRL for CAVs becomes feasible, and specifying a suitable protocol for cooperation between agents turns crucial.

On the other hand, to theoretically characterize the sampling efficiency of RL, the concept of the regret bound emerges, which targets to theoretically measure the $T$-time-step difference between an agent's accumulated rewards and the total reward that an optimal policy (for that agent) would have achieved. Without loss of generality, for an $H$-episodic RL environment with $S$ states and $A$ actions, contingent on Hoeffding inequality and Bernstein inequality, tabular upper-confidence bound (UCB) algorithms can lead to a regret bound of $\tilde{O}(\sqrt{H^{4}SAT})$ and $\tilde{O}(\sqrt{H^{3}SAT})$ respectively \cite{NEURIPS2018_d3b1fb02,zhang2020almost}, where $\tilde{\mathcal{O}}(\cdot)$ hides the logarithmic factors. On the other hand, for any communicating Markov decision process (MDP) with the diameter $D$, Jaksch \textit{et al.} propose a classical confidence upper bound algorithm \texttt{UCRL2} algorithm (abbreviation of \textit{Upper Confidence bound for RL}) to achieve the regret bound $\tilde{O}(DS\sqrt{AT})$. In the literature, there only sheds little light on the regret bound of collaborative Multi-Agent Reinforcement Learning (MARL). In this paper, inspired by the regret bound of \texttt{UCRL2}, we investigate the regret bound of decentralized communication-limited MBRL, and demonstrate how communication among the multi-agents can be used to reduce the regret bound and boost the learning performance.
 
In this paper, targeted at addressing the sample efficiency issue in a communication-limited multi-agent scenario, we propose a fully decentralized Multi-Agent Probabilistic Ensembles with Trajectory Sampling (\texttt{MA-PETS}) algorithm. In particular, \texttt{MA-PETS} could effectively exchange collected samples from individual agents to neighboring vehicles within a certain range, and extend the widely adopted single-agent Probabilistic Ensemble (PE) technique to competently reduce both aleatoric and epistemic uncertainty while fitting the multi-agent environmental transition. Furthermore, \texttt{MA-PETS} employs Model Predictive Control (MPC) \cite{camacho2013model,rawlings2017model}, due to its excellence in scenarios that necessitate the joint prediction and optimization of future behavior, to generate appropriate control actions from a learned-model-based Trajectory Sampling (TS) approach. Compared to the existing literature, the contribution of the paper can be summarized as follows:

\begin{itemize}

\item We formulate the decentralized multi-agent decision-making issue for CAVs as a parallel time-homogeneous Markov Decision Process (MDP). On this basis, we devise a sample-efficient MBRL solution \texttt{MA-PETS} which utilizes a multi-agent PE to learn the unknown environmental transition dynamics model from data exchanged within a limited range, and calibrate TS-based MPC for model-based decision-making.

\item We analyze the group regret bound of \texttt{MA-PETS}, which is based on \texttt{UCRL2} and fundamentally different from the approach in \cite{Tuynman2022TransferIR} that utilizes a mere superposition of single-agent regret bounds. By incorporating the concept of clique cover \cite{CORNEIL1988109} for a limited-communication, undirected graph, we theoretically demonstrate that, even in the worst case, augmenting a multi-agent algorithm with additional communication information still results in a sub-linear group regret relative to the number of agents, and accelerates convergence. This validates that multiple agents jointly exploring the state-action space in similar environments could communicate to discover the optimal policy faster than individual agents operating in parallel.

\item We further illustrate our approach experimentally on a CAV simulation platform SMARTS~\cite{zhou2020smarts} and validate the superiority of our proposed algorithm over other MARL methods in terms of sample efficiency. Besides, we evaluate the impact of communication ranges on \texttt{MA-PETS} and demonstrate the contributing effects of information exchange.
\end{itemize}
%主要参数表格。
\begin{table}
    \caption{\centering{The Key Parameters for the Algorithms.}}%居中标题
    \begin{center}
    \label{tb: parameters}
            \begin{tabular}{|c|c|}
                \toprule	
                \multicolumn{1}{|c|}{\textbf{Notation}}& \multicolumn{1}{c|}{\textbf{Parameters Description}} \\
                \midrule
                $I$& Number of RL agents \\\hline
                $d$ & Communication range \\\hline
                $v_t$ & Speed per vehicle at time-step $t$ \\\hline
                $z_t$ & Position per vehicle at $t$  \\\hline
                $\bar{v}_{t}$ & Target velocity per vehicle at $t$  \\\hline
                $v_{t,a},v_{t,e}$ & Speed of vehicles \underline{a}head and b\underline{e}hind  \\\hline
                $l_{t,a},l_{t,e}$ & Distance of vehicles \underline{a}head and b\underline{e}hind \\\hline
                ${\Lambda}x_t$ & Travel distance per vehicle at $t$  \\\hline
                $K$ & No. of episodes  \\\hline
                $H$ & Length per episode  \\\hline
                $B$ & No. of Ensembles for dynamics Model  \\\hline
                $w$ & Horizon of MPC   \\\hline
                $Q$ & Candidate action sequences in CEM  \\\hline
                $X$ & Elite candidate action sequences in CEM \\\hline 
                $P$ & No. of particles  \\\hline
                $Y$ & Max iteration of CEM \\\hline
                $u_k(s,a)$ & Number of state-action $(s,a)$ visits in the episode $k$  \\\hline
                $N_k(s, a)$ & Number of state-action $(s,a)$ visits before episode $k$\\\hline
                $\mathbf{C}_{d}$ & Cliques cover of $\mathcal{G}_{d}$\\\hline
                $\bar{\chi}\left(\mathcal{G}_{d}\right)$ & Minimum number of cliques cover $\mathbf{C}_{d}$ \\
                \bottomrule
            \end{tabular}
    \end{center}
\end{table}

The remainder of the paper is organized as follows. Section \ref{sec: related works} briefly introduces the related works. In Section \ref{sec: preliminaries}, we introduce the preliminaries of MDPs and formulate the system model. In Section \ref{sec: MA-PETS} and Section \ref{sec: convergence proof}, we present the details of \texttt{MA-PETS} and unveil the effect of communication range on the convergence of the MARL via the group regret bound, respectively. Finally, Section \ref{sec: numerical experiments} demonstrates the effectiveness and superiority \texttt{MA-PETS} through extensive simulations. We conclude the paper in Section \ref{sec: conclusion}.

\section{Related Works}
\label{sec: related works}
\subsection{Multi-Agent Reinforcement Learning of CAVs}
Decision-making and planning are crucial components of CAV systems, with significant implications for enhancing the safety, driving experience, and efficiency of CAVs\cite{Hua2023MultiAgentRL,9564518}. The decision-making of CAVs in high-density mixed-traffic intersections belongs to one of the most challenging tasks and attracts significant research interest \cite{9762548,9939107,wevj15030099}, towards improving traffic efficiency and safety. In this regard, Centralized Training and Decentralized Execution (CTDE) has become one of the most popular paradigms. For instance, within the actor-critic MFRL framework, \cite{foerster2018counterfactual, MADDPG} proposes using a central critic for centralized training coupled with multiple actors for distributed execution. However, as the number of agents increases or the action space expands, the computational complexity of MFRL algorithms like \texttt{MA-DDPG} can rise exponentially, presenting a significant challenge. To mitigate this computational burden, \cite{rashid2020monotonic} proposes \texttt{QMIX} to involve the value decomposition of the joint value function into separate individual value functions for each agent. In addition, \cite{SVMIX} proposes to adopt a stochastic graph neural network to capture dynamic topological features of time-varying graphs while decomposing the value function. To avoid the large number of communication overheads that frequent information exchange incurs, \cite{FIRL} develops a consensus-based optimization scheme on top of the periodic averaging method, which introduces the consensus algorithm into Federated Learning (FL) for the exchange of a model's local gradients.

Although these studies showcase the robust post-training performance of RL in real-time applications, the MFRL algorithm is still plagued by high overhead, stemming from the extensive computational and sampling requirements. This issue is particularly pronounced in scenarios such as CAVs\cite{drl_AV}, where data acquisition is challenging, interactions between agents and the environment are inefficient, and each iteration yields only a limited amount of data. Therefore, as data samples become scarce, the performance of MFRL can quickly degrade and become unstable.

\subsection{Model-Based Reinforcement Learning} 
To address the sampling efficiency and communication overhead issues in MFRL, MBRL naturally emerges as an alternative solution. Unfortunately, MBRL suffers from performance deficiency, as it might fail to accurately estimate the uncertainty in the environment and characterize the dynamics model, which belongs to a critical research component in MBRL. For example, \cite{huseljic2021separation} proposes a DNN-based method that is, to some extent, qualified to separate aleatoric and epistemic uncertainty while maintaining appropriate generalization capabilities, while PILCO~\cite{deisenroth2011pilco} marginalizes aleatoric and epistemic uncertainty of a learned dynamics model to optimize the expected performance. \cite{PETS} takes advantage of MPC to optimize the RL agent’s behavior policy by predicting and planning within the modeled virtual environment at each training step. Another type of MBRL falls within the scope of Dyna-style methods \cite{MB-DQN}, where additional data is generated from interactions between the RL and the virtual environment, thus improving the efficiency of the RL. \cite{wu2022uncertainty} proposes an uncertainty-aware MBRL and verifies that it has competitive performance as the state-of-the-art MFRL. \cite{MAPPO} proposes an RL training algorithm \texttt{MAPPO} that incorporates a prior model into \texttt{PPO} algorithm to speed up the learning process using current centralized coordination methods. However, centralized coordination methods face issues such as high resource-intensiveness, inflexibility, and annoying delays, since the central node must process a large amount of information and handle decision-making for the entire system. 
In addition, there is still little light shed on the multi-agent MBRL scenario, especially in the communication-limited case \cite{survey_mbrl}. Considering the substantial communication overhead associated with centralized coordination methods, we mainly focus on the fully decentralized multi-agent MBRL algorithm under communication constraints.   

\subsection{Regret bounds of Reinforcement Learning}
Understanding the regret bound of online single-agent RL-based approaches to a time-homogeneous MDP has received considerable research interest. For example, \cite{kearns_nearoptimal_2002} discusses the performance guarantees of a learned policy with polynomial scaling in the size of the state and action spaces. \cite{UCRL} introduces a \texttt{UCRL} algorithm and shows that its expected online regret in unichain MDPs is $O(\log T)$ after $T$ steps. Furthermore,  
\cite{UCRL2} proposes the a \texttt{UCRL2} algorithm % designed for un-discounted communicating MDPs. 
which is capable of identifying an optimal policy through Extended Value Iteration (EVI), by conjecturing a set of plausible MDPs formed within the confidence intervals dictated by the Hoeffding inequality. Moreover, \cite{UCRL2} demonstrates that the total regret for an optimal policy can be effectively bounded by $\tilde{O}(DS\sqrt{AT})$. Afterward, many variants of \texttt{UCRL2} have been proposed for the generation of tighter regret bounds. For instance, \cite{ortner_online_2012} proposes a \texttt{UCCRL} algorithm that derives sub-linear regret bounds $\tilde{O}\left(T^{\frac{2+\alpha}{2+2 \alpha}}\right)$ with a parameter $\alpha$ for un-discounted RL in continuous state space. By using more efficient posterior sampling for episodic RL, \cite{osband_more_2013} achieves the expected regret bound $\tilde{O}(\iota S\sqrt{AT})$ with an episode length $\iota$. \cite{hao_bootstrapping_2019} introduces a non-parametric tailored multiplier bootstrap method, which significantly reduces regret in a variety of linear multi-armed bandit challenges. Similar to the single-agent setting, agents in MARL attempt to maximize their cumulative reward by estimating value functions, and the regret bound can be analyzed as well. For example, \cite{9867146} proposes that a specific class of online, episodic, tabular multi-agent Q-learning problems with UCB-Hoeffding exploration through communication yields a regret $\mathcal{O}\left(\sqrt{IH^4SAT\iota} \right)$, where $I$ is the number of RL agents and $\iota:=$ $\log (SATI/p)$. Nevertheless, despite the progress on single-agent regret bound and the group regret bound for fully connected multi-agent cases \cite{Tuynman2022TransferIR}, the group regret of MBRL in communication-limited multi-agent scenarios remains an unexplored area of research.  Consequently, we integrate dynamic graph theory, akin to that is found in \cite{9867146}, with an analysis of small blocks to explore the group regret bounds within a distinct MARL algorithm. % 文献【20】需要在这里补充一两句话
 
\section{Preliminaries and System Model}
\label{sec: preliminaries}

In this section, we briefly introduce some fundamentals and necessary assumptions of the underlying MDPs and the framework of MBRL. On this basis, towards the decision-making issue for CAVs, we highlight how to formulate MBRL-based problems.

\subsection{Preliminaries}
\subsubsection{Parallel Markov decision process}
The decision-making problem of CAVs can be formulated as a collection of \textit{parallel time-homogeneous stochastic MDPs} $\mathcal{M}:= \left\{{\rm MDP}\left(\mathcal{S}^{(i)}, \mathcal{A}^{(i)}, \mathcal{P}^{(i)}, \mathcal{R}^{(i)}, H\right)\right\}_{i=1}^I $ among agents $i \in [I]:=\{1,2,\cdots, I\}$ for $H$-length episodes \cite{bernstein2002complexity}. Despite its restrictiveness, parallel MDPs provide a valuable baseline for generalizing to more complex environments, such as heterogeneous MDPs. Notably, agents have access to identical state and action space (i.e., $\mathcal{S}^{(i)}=\mathcal{S}^{(j)}$ and $\mathcal{A}^{(i)}=\mathcal{A}^{(j)}$, $\forall i, j \in[I]$). We assume bounded rewards, where all rewards are contained in the interval $\left[0, r_{\max}\right]$ with mean $\bar{r}(s, a)$. Besides, we assume the transition functions $\mathcal{P}^{(i)}$ and reward functions $\mathcal{R}^{(i)}$ only depend on the current state and chosen action of agent $i$, and conditional independence applies between the transition function and the reward function. Furthermore, we focus exclusively on stationary policies, denoted as $\pi^{(i)}: \mathcal{S}^{(i)} \rightarrow \mathcal{A}^{(i)}$, which indicates the taken action $a_t^{(i)}\in\mathcal{A}^{(i)}$ for an agent $ i\in[I]$ after observing a state $s_t^{(i)} \in \mathcal{S}^{(i)}$ at time-step $t$. Based on the taken action, the agent $i$ receives a reward $r_t^{(i)} = \mathcal{R}^{(i)}\left(s_t^{(i)},a_t^{(i)}\right)$, and the environment transitions to the next state $s_{t+1}^{(i)}$ according to an unknown dynamics function $\mathcal{P}^{(i)}\left(s_{t+1}^{(i)}\mid s_t^{(i)},a_t^{(i)}\right): \mathcal{S}^{(i)} \times \mathcal{A}^{(i)} \rightarrow \mathcal{S}^{(i)}$. In other words, each agent interacts with the corresponding homogeneous MDP and calculates an expectation over trajectories where action $a_{t+1}^{(i)}$ follows the distribution $\pi^{(i)}(s_t^{(i)})$. 

On the other hand, an MDP is called \textit{communicating}, if for any two states $s,s^{\prime} \in \mathcal{S}$, there always exists a policy $\pi$ that guarantees a finite number of steps $L_{\pi}(s,s^{\prime})$ to transition from state $s$ to state $s^{\prime}$ when implementing policy $\pi$. In such a communicating MDP, the opportunity for recovery remains viable even if an incorrect action is executed. As previously mentioned, the diameter $D$ of the communicating MDP $M^{(i)}$, which measures the maximal distance between any two states in the communicating MDP, can be defined as
\begin{align}
    D(M^{(i)}):=\max _{s, s^{\prime} \in \mathcal{S}} \min _{\pi^{(i)}} L_{\pi^{(i)}}\left(s, s^{\prime}\right).
\end{align}
\noindent Moreover, unlike cumulative rewards over $T$ steps, we take account of average rewards, which can be optimized through a stationary policy $\pi^{\ast(i)}$ \cite{puterman1994markov}. The objective is for each agent to learn a policy $\pi^{\ast(i)}$ that maximizes its individual average reward after any $T$ steps, i.e,
\begin{align}
    \pi^{\ast(i)}=\argsup_{\pi \in \Pi^{(i)}} \left\{\liminf _{T \rightarrow+\infty} \mathbb{E}_{\pi}\left[\frac{1}{T} \sum\nolimits_{t=1}^T r_t^{(i)}\right]\right\}, %\mathcal{R}^{(i)}\left(s_t^{(i)}, a_t^{(i)}\right)
\end{align}
where $\Pi^{(i)}$ is the set of the plausible stationary randomized policies. 

\begin{lemma}[Lemma 10 of \cite{gajane_variational_2019}]
\label{lem:optimal_T_bound}
Consider $M^{(i)}$ as a time-homogeneous and communicating MDP with a diameter of $D$. Let $\mathcal{R}^{\ast(i)}_T(M^{(i)})$ represent the optimal $T$-step reward, and $\rho^{\ast(i)}$ denote the optimal average reward under the reward function $\mathcal{R}^{(i)}$. It can be asserted that for any MDP $M^{(i)}$, the disparity between the optimal $T$-step reward and the optimal average reward is minor, capped at a maximum of order $D$. Mathematically,
\begin{align}
    \mathcal{R}^{\ast(i)}_T(M^{(i)}) \leq T\rho^{\ast(i)}(M^{(i)})+D\cdot r_{\max}^{(i)}.
\end{align}
\end{lemma}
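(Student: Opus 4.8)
The plan is to route the finite-horizon optimum through the average-reward \emph{bias function} and then to control that function's span by the diameter. Throughout I fix the agent index $i$ and suppress it, writing $\rho^{\ast}$, $r_{\max}$, and $D$ for $\rho^{\ast(i)}$, $r_{\max}^{(i)}$, and $D(M^{(i)})$. Because $M^{(i)}$ is communicating, the average-reward optimality equation admits a solution given by the scalar optimal gain $\rho^{\ast}$, which is independent of the starting state, together with a bias vector $h:\mathcal{S}\to\mathbb{R}$ obeying
\begin{align}
\rho^{\ast} + h(s) = \max_{a\in\mathcal{A}}\Big\{\bar{r}(s,a) + \sum_{s'\in\mathcal{S}} \mathcal{P}(s'\mid s,a)\, h(s')\Big\}, \quad \forall s\in\mathcal{S}.
\end{align}

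First I would identify the optimal $T$-step reward with the value of a finite-horizon dynamic program. Writing $V_T(s)$ for the maximal expected total reward collectible from $s$ in $T$ steps, we have $V_0\equiv 0$ and the backward recursion $V_T(s)=\max_{a}\{\bar{r}(s,a)+\sum_{s'}\mathcal{P}(s'\mid s,a)V_{T-1}(s')\}$, so that $\mathcal{R}^{\ast(i)}_T = V_T(s_1)$ for the initial state $s_1$. The core step is a backward induction establishing
\begin{align}
V_T(s) \le T\rho^{\ast} + h(s) - \min_{s'\in\mathcal{S}} h(s'), \qquad \forall\, s,\,T.
\end{align}
The base case $T=0$ is immediate since $V_0\equiv 0 \le h(s)-\min_{s'}h(s')$, and the inductive step inserts the hypothesis into the recursion and then collapses the resulting bracket using the optimality equation, which turns $\max_a\{\bar{r}+\sum\mathcal{P}h\}$ into $\rho^{\ast}+h(s)$. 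Consequently $V_T(s)\le T\rho^{\ast}+\spfunc(h)$, where $\spfunc(h):=\max_s h(s)-\min_s h(s)$.

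What remains — and what I expect to be the genuine obstacle — is the span bound $\spfunc(h)\le D\,r_{\max}$. Here I would exploit the reading of $h$ as a relative (differential) value and iterate the optimality inequality $h(s)\ge(\bar{r}(s,\pi(s))-\rho^{\ast})+\sum_{s'}\mathcal{P}(s'\mid s,\pi(s))h(s')$, valid for any policy $\pi$. Let $s^{+}$ and $s^{-}$ attain the maximum and minimum of $h$. Choosing $\pi$ to be a fastest policy steering the chain from $s^{-}$ to $s^{+}$ and unrolling the inequality up to the hitting time $\tau$ of $s^{+}$, the terminal bias term equals $h(s^{+})$ and cancels, leaving $h(s^{+})-h(s^{-})\le \mathbb{E}[\sum_{t<\tau}(\rho^{\ast}-r_t)] \le \rho^{\ast}\,\mathbb{E}[\tau] \le D\,r_{\max}$, where the second inequality uses $r_t\ge 0$ and the last uses $\rho^{\ast}\le r_{\max}$ together with the diameter definition $\mathbb{E}[\tau]\le D$. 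The delicate points are justifying the optional-stopping/unrolling step over a random and a priori unbounded hitting time, and invoking the existence and state-independence of the optimal gain and bias for a communicating MDP; these are precisely the classical facts that underpin the span bound. Combining the span bound with the induction bound yields $\mathcal{R}^{\ast(i)}_T = V_T(s_1) \le T\rho^{\ast} + D\,r_{\max}$, as claimed.
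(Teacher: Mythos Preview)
Your proof is correct and follows the standard route: bounding the finite-horizon value by $T\rho^{\ast}+\spfunc(h)$ via backward induction, then bounding the bias span by $D\,r_{\max}$ through a hitting-time argument. The paper itself does not prove this lemma; it is merely quoted with attribution to Gajane \textit{et al.}\ (Lemma~10 of \cite{gajane_variational_2019}), so there is no in-paper proof to compare against. Your argument is essentially the classical one underlying the UCRL2 analysis of Jaksch \textit{et al.}\ \cite{UCRL2}, and the two technical points you flag---optional stopping over the hitting time and existence of the gain/bias pair---are indeed the places where care is needed, but both are routine in a finite-state communicating MDP (the diameter being finite guarantees integrable hitting times, which validates the unrolling).
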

Hence, by Lemma \ref{lem:optimal_T_bound}, the optimal average reward $\rho^{\ast(i)}$ serves as an effective approximation for the expected optimal reward over $T$-steps. To evaluate the convergence of the RL algorithms for each agent $i$, we consider its regret after an arbitrary number of steps $T$, defined as
\begin{align}
    \operatorname{Regret}^{(i)}(T):=T\rho^{\ast(i)}-\sum\nolimits_{t=1}^T r_t^{(i)}.
\end{align}
%where $r_t^{(i)}$ is the reward that the agent $i$ actually obtains at step $t$. 
In the CAV scenario, as it is challenging to consider each agent's regret bound independently, we introduce the concept of multi-agent group regret, which is defined as 
\begin{align}
\label{eq:group_regret}
    \operatorname{Regret}_G(T)=\sum\nolimits_{i=1}^I \operatorname{Regret}^{(i)}(T).
\end{align}

\subsubsection{Model-Based Reinforcement Learning}
An MBRL framework typically involves two phases (i.e., dynamics model learning, and planning \& control). In the dynamics model learning phase, each agent $i \in [I]$ estimates the dynamics model $\tilde{\mathcal{P}}^{(i)}$ from collected environmental transition samples by a continuous model (e.g., DNNs). Afterward, based on the approximated dynamics model $\tilde{\mathcal{P}}^{(i)}$, the agent simulates the environment and makes predictions for subsequent action selection. In that regard, we can evaluate $w$-length action sequences $\mathbf{A}_{t:t+w-1}^{(i)}=\left(a_t^{(i)}, \ldots, a_{t+w-1}^{(i)}\right)$ by computing the expected reward over possible state-action trajectories $\tau_{t:t+w-1}^{(i)}=\left({s}_t^{(i)}, {a}_t^{(i)}, \cdots, {s}_{t+w-1}^{(i)}, {a}_{t+w-1}^{(i)}, {s}_{t+w}^{(i)}\right)$, and optimize the policy accordingly. Mathematically, this planning \& control phase can be formulated as
\begin{align}
\label{eq: expected reward}
    \mathbf{A^\ast}_{t:t+w-1}^{(i)}&=\argmax\nolimits_{\mathbf{A}_{t:t+w-1}^{(i)}} \sum\nolimits_{t^{\prime}=t}^{t+w-1} r_{t^{\prime}}^{(i)}\\
    & s.t.\quad s_{t+1}^{(i)} \sim \tilde{\mathcal{P}}^{(i)}\left(s_t^{(i)}, a_t^{(i)}\right).\nonumber %\mathcal{R}^{(i)}\left(s_{t^{\prime}}^{(i)}, a_{t^{\prime}}^{(i)}\right)
\end{align}

Limited by the non-linearity of the dynamics model, it is usually difficult to calculate the exact optimal solution of \eqref{eq: expected reward}. However, many methods exist to obtain an approximate solution to the finite-level control problem and competently complete the desired task. Common methods include the traversal method, Monte Carlo Tree Search (MCTS)~\cite{MCTS}, Iterative Linear Quadratic Regulator (ILQR)~\cite{ILQR}, etc. 

\subsection{System Model}
\label{sec:systemModel}
\begin{figure}
    \centering
    \includegraphics[width=0.95\linewidth]{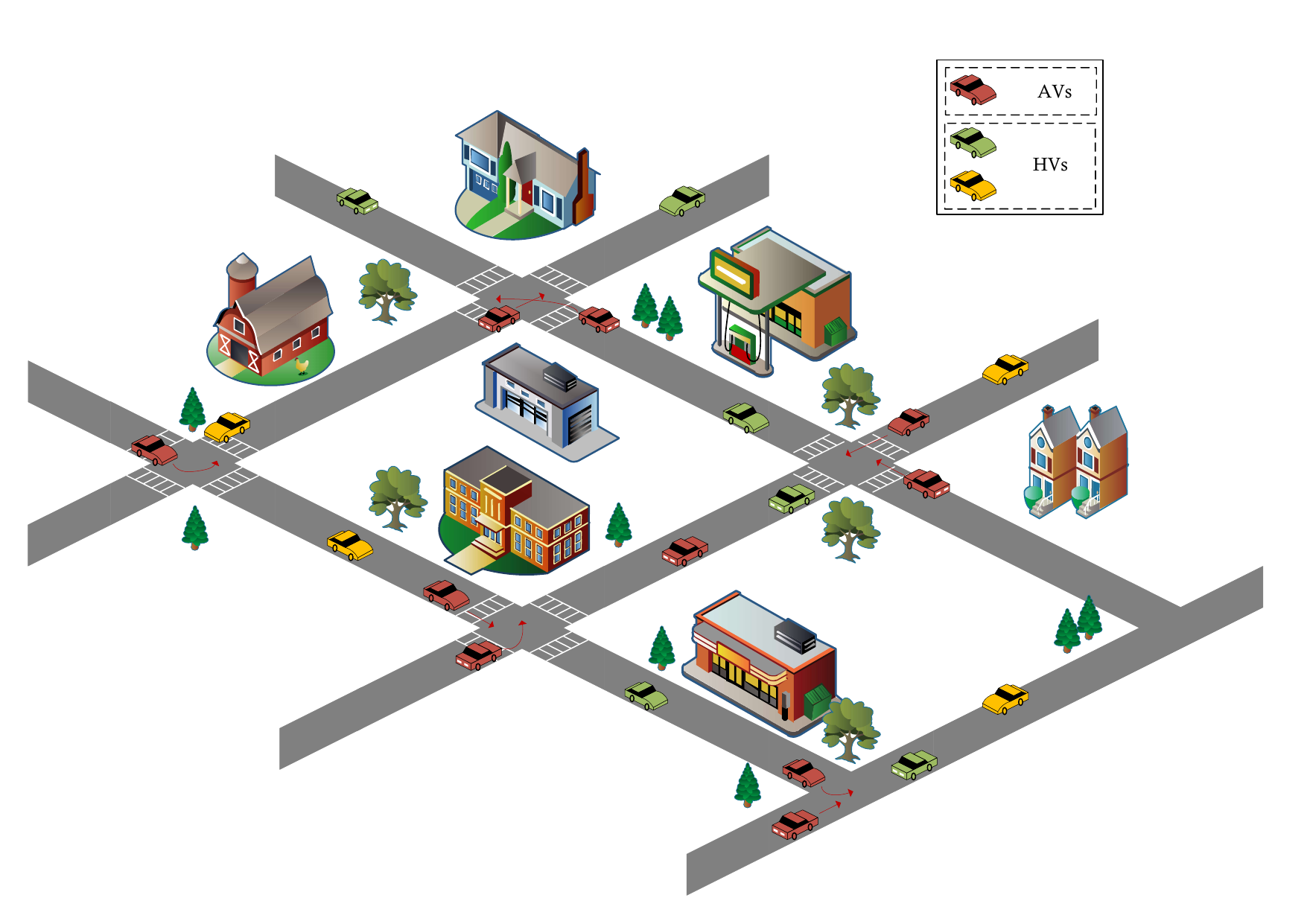}
    %\vspace{-1cm}
    \caption{The complex mixed urban traffic control scenario with unsignalized lane merging and intersections.}
    %\vspace{-0.4cm}
    \label{fig1}
\end{figure}
As illustrated in Fig. \ref{fig1}, we consider a mixed autonomy traffic system model with $I$ CAVs and some human-driving vehicles (HVs). Consistent with the terminology of parallel MDPs, the state $s_{t}^{(i)} \in \mathcal{S}^{(i)} = \left(v_t^{(i)},x_t^{(i)}, y_t^{(i)},v_{t,a}^{(i)},v_{t,e}^{(i)},l_{t,a}^{(i)},l_{t,e}^{(i)} \right)$ for agent $i\in [I]$ at time-step $t$ encompasses information like its velocity $v_t^{(i)} \in \mathbb{R}$, its position $z_t^{(i)}=\left(x_t^{(i)}, y_t^{(i)}\right) \in \mathbb{R}^2$, the velocity and relative distance of the vehicles \underline{a}head and b\underline{e}hind (i.e., $v_{t,a}^{(i)},v_{t,e}^{(i)},l_{t,a}^{(i)},l_{t,e}^{(i)} \in \mathbb{R}$).
Meanwhile, each vehicle $i$ is controlled by an adjustable target velocity $\bar{v}_{t}^{(i)} \in \mathbb{R}$ (i.e., $a_{t}^{(i)}=\left(\bar{v}_{t}^{(i)}\right)$). Furthermore, we assume the dynamics model shall be learned via interactions with the environment. At the same time, the reward function can be calibrated in terms of the velocity and collision-induced penalty, thus being known beforehand.

% And for probabilistic dynamics, we represent the transition function $f^{i}$ as some parameterized distribution family: ${f_\theta^i}\left(s_{t+1}^{(i)}\mid s_t^{(i)},a_t^{(i)}\right)=\mathcal{P}^i\left(s_{t+1}^{(i)}\mid s_t^{(i)},a_t^{(i)}:\theta\right)$, where $\theta$ the parameter vector of the neural network. The environment transition function $f^{i}$ is unknown and can only be learned via interactions with the environment. For ease of notation, we assume that vehicles’ reward functions $\mathcal{R}^i$ are known and are often designed depending on the tasks’ goals. Our proposed algorithm can be easily extended to unknown reward functions using standard techniques. In our scenario, the goal of vehicles is to maintain maximum velocity while avoiding a collision. Besides, safety is the most important criterion for a vehicle. We give the vehicle a greater penalty when a collision happens in order to hope the vehicle learns to drive safely to avoid collisions. 
\begin{figure*}[!ht]
    \centering
    \includegraphics[width=.95\linewidth]{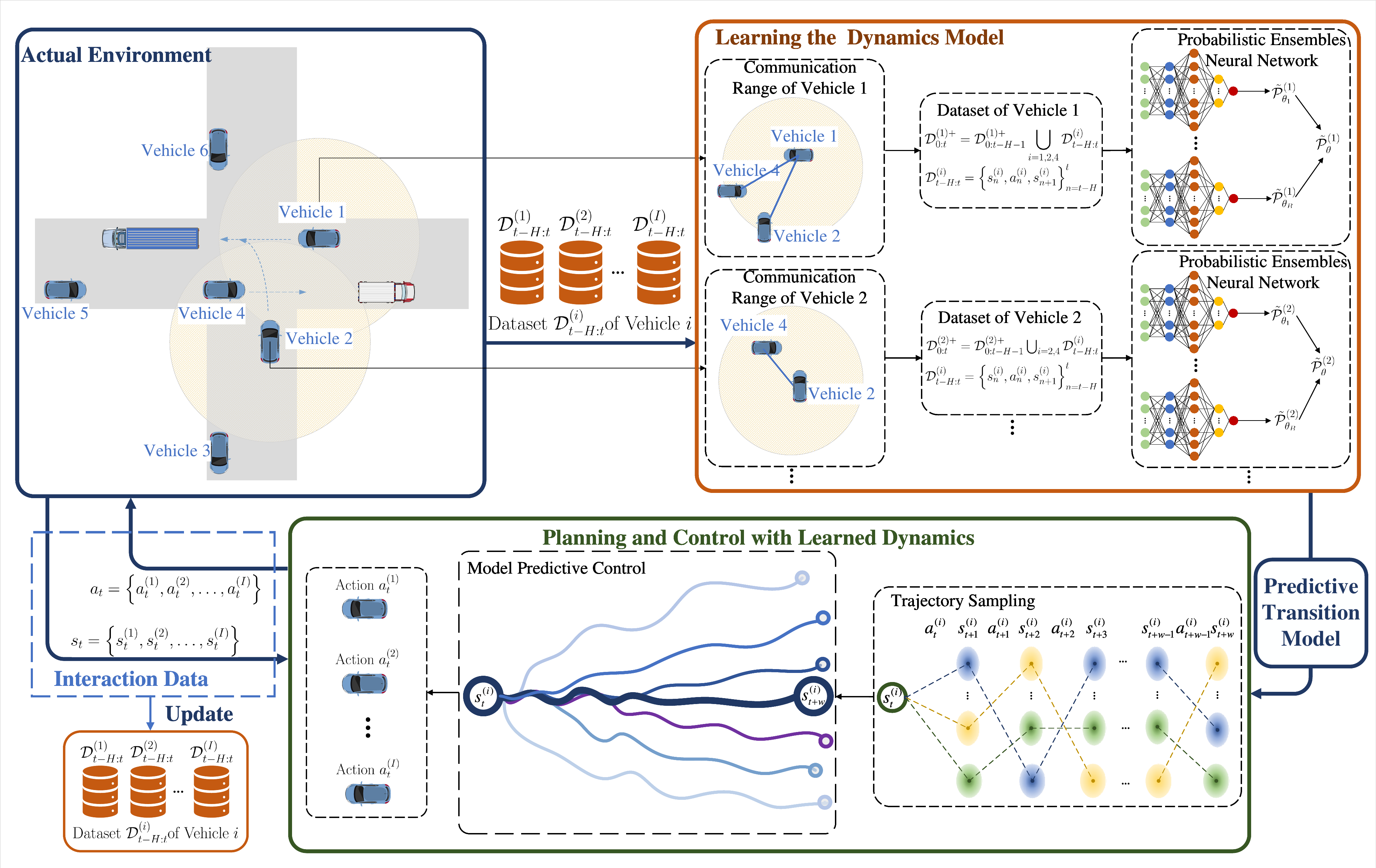}
    \caption{The illustration of the \texttt{MA-PETS} algorithm for CAVs.}
    \label{fig2}
\end{figure*}

For simplicity, let $t_k$ be the start time of episode $k$. Within the framework of decentralized multi-agent MBRL, we assume the availability of state transition dataset $\mathcal{D}_{0:t}^{(i)}=\left\{s_{t^{\prime}}^{(i)}, a_{t^{\prime}}^{(i)}, s_{t^{\prime}+1}^{(i)}\right\}_{t^{\prime}=0}^t$ and each agent approximates $\tilde{\mathcal{P}}^{(i)}, \forall i\in[I]$ by DNNs parameterized by $\theta$ based on historical dataset and exchanged samples from neighbors. In particular, for a time-varying undirected graph $\mathcal{G}_k$ constituted by $I$ CAVs, each CAV can exchange its latest $H$-length\footnote{Notably, the applied length could be episode-dependent but is limited by the maximum value $H$, as will be discussed in Section \ref{sec: MA-PETS}.} dataset $\mathcal{D}_{t_k-H:t_k}^{(i)}$ with neighboring CAVs within the communication range $d\in [0, D(\mathcal{G}_k)]$ at the end of an episode $k$, where $D(\mathcal{G}_k)$ is the dynamic diameter\footnote{We slightly abuse the notation of the diameter for an MDP and that for a time-varying undirected graph.} of graph $\mathcal{G}_k$. Therefore, when $d=0$, the multi-agent MBRL problem reverts to $I$ parallel single-agent cases. Mathematically, the dataset for dynamics modeling at the end of an episode $k$ could be denoted as $\mathcal{D}_{0:t_k}^{(i)+}:= \mathcal{D}_{0:t_k-H-1}^{(i)+} \cup \{ \mathcal{D}_{t_k-H:t_k}^{(j)}\}$ for all CAV $j \in \mathcal{F}_{d,k}^{(i)}$, where the set $\mathcal{F}_{d,k}^{(i)}$ encompasses all CAVs satisfying $\text{dis}_t(i,j) \leq d$ with $\text{dis}_t(i,j)$ computing the Euclidean distance between CAV $i$ and $j$ at the last time-step $t_k+H$. Furthermore, the planning and control objective for multi-agent MBRL in a communication-limited scenario can be re-written as
\begin{align}
\label{eq:model-based}
     &J\left(\tilde{\mathcal{P}}^{(i)}, d\right)=\mathbb{E}_{\boldsymbol{\pi} \sim \tilde{\mathcal{P}}^{(i)}}\left[\sum\nolimits_{t^{\prime}=0}^{H} r_{t^{\prime}}^{(i)}\mid s_0^{(i)}\right]\nonumber\\
    &s.t.\quad\tilde{\mathcal{P}}^{(i)}\propto \mathcal{D}_{0:t}^{(i)+}, 0\leq\textit{d}\leq D(\mathcal{G}_k) \\
    &\quad\quad s_0^{(i)} \sim p(s_0), s_{t+1}^{(i)} \sim \tilde{\mathcal{P}}^{(i)}(s_t^{(i)},a_t^{(i)}),\forall t\in[H], \nonumber % \mathcal{R}^{(i)}\left(s_{t^{\prime}}^{(i)}, a_{t^{\prime}}^{(i)}\right)
\end{align}
where  $\tilde{\mathcal{P}}^{(i)}$ is the learned dynamics model based on $\mathcal{D}_{0:t}^{(i)+}$. In this paper, we resort to a model-based PETS solution for solving \eqref{eq:model-based} and calculating the group regret bound.

\section{The \texttt{MA-PETS} Algorithm}
\label{sec: MA-PETS}
In this section, we discuss how to extend PETS~\cite{PETS} to a multi-agent case and present the dynamics model learning and planning \& control phases in \texttt{MA-PETS}, which can be depicted as in Fig.~\ref{fig2}.

\subsection{Learning the Dynamics Model}
In \texttt{MA-PETS}, we leverage an ensemble of bootstrapped probabilistic DNNs to reduce both aleatoric and epistemic uncertainty. In particular, in order to combat the aleatoric uncertainty, we approximate the dynamics model at time-step $t$ (i.e., $\tilde{\mathcal{P}}\left(s_{t+1}^{(i)}\mid s_t^{(i)},a_t^{(i)}\right),\forall i \in [I]$) by a probabilistic DNN, by assuming that the conditioned probability distribution of $s_{t+1}$ follows a Gaussian distribution $\mathcal{N}(\mu_\theta, \sigma_\theta)$ with mean $\mu_\theta$ and diagonal covariance $\sigma_\theta$ parameterized by $\theta$. In other words,
\begin{align}
    \tilde{\mathcal{P}}_{\theta}^{(i)}=\argmin_{\theta} \frac{1}{\left\vert \hat{\mathcal{D}}^{(i)+}\right\vert}\operatorname{loss}_{\mathrm{Gauss}}(\theta),
\end{align}
where $\hat{\mathcal{D}}^{(i)+}$ denotes a sub-set sampled from $\mathcal{D}^{(i)+}$ and $\operatorname{loss}_{\mathrm{Gauss}}(\theta)$ is defined as in \eqref{eq:loss_gauss}. Consistent with the definition of states and actions in Section \ref{sec:systemModel}, $\mu_\theta\left(s, a\right) \in \mathbb{R}^7$ and $\sigma_\theta\left(s, a\right) \in \mathbb{R}^{7 \times 7}$.
\begin{figure*}[!h]
    \begin{align}
    \operatorname{loss}_{\mathrm{Gauss}}(\theta)
    & =-\sum\nolimits_{\left(s_t, a_t, s_{t+1}\right) \in \hat{\mathcal{D}}^{(i)+} } \log \tilde{\mathcal{P}}^{(i)}_{\theta}\left(s_{t+1}\mid s_t, a_t\right) \label{eq:loss_gauss}\\
    & =\sum\nolimits_{\left(s_t, a_t, s_{t+1}\right) \in \hat{\mathcal{D}}^{(i)+} } \left\{ \left[\mu_{\theta}\left(s_t, a_t\right)-s_{t+1}\right]^{\top} \sigma_{\theta}^{-1}\left(s_t, a_t\right) \left[ \mu_{\theta}\left(s_t, a_t\right)-s_{t+1}\right]+\log \operatorname{det} \sigma_{\theta}\left(s_t, a_t\right)\right\}\nonumber
    \end{align}
    \hrulefill
\end{figure*}    
    % \begin{align}
    % \tilde{\mathcal{P}}\left(s_{t+1}\mid s_t,a_t\right)
    % & =\mathcal{N}\left(\mu_\theta\left(s_t, a_t\right), \sigma_\theta\left(s_t, a_t\right)\right) \\
    % & \sim \operatorname{Pr}\left(s_{t+1} \mid s_t,a_t\right)\nonumber
    % \end{align}

In addition, in order to mitigate the epistemic uncertainty, which arises primarily from the lack of sufficient data, a PE method is further adopted. Specifically, \texttt{MA-PETS} consists of $B$ bootstrap models in the ensemble, each of which is an independent and identically distributed probabilistic DNN $\tilde{\mathcal{P}}_{\theta_b}^{(i)},b \in [B]$ from a unique dataset $\hat{\mathcal{D}}_b^{(i)+}$ uniformly sampled from $\mathcal{D}^{(i)+}$ with the same size. 
\begin{comment}
\cite{lakshminarayanan2017simple} treats the ensemble model as a uniformly weighted mixed model, that is, the prediction is a mixture of Gaussian distributions. Consequently, the mean and variance of the ensemble model\footnote{Notably, though the ensemble model $\tilde{\mathcal{P}}_{\theta}^i$ is directly applicable, \cite{PETS} shows that particle-based TS method on a random selection of bootstrap models, which will be introduced in the next sub-section, could achieve superior results.} $\tilde{\mathcal{P}}_{\theta}^{(i)}$ can be further constructed from bootstrap models as
\begin{align}
\label{eq:ensembles_mean}
    \mu_{\theta}^{(i)}\left(s_t, a_t\right)
    &=\frac{1}{B} \sum\nolimits_{b=1}^B \mu_{\theta_b}^{(i)} %
\end{align}
and 
\begin{align}
\label{eq:ensembles_variance}
    \sigma_{\theta}^{(i)}\left(s_t, a_t\right)=\sqrt{\frac{1}{B}\sum\nolimits_{b=1}^B\left[(\sigma_{\theta_b}^i)^2+ (\mu^i_{\theta_b})^2\right]-(\mu_{\theta}^i)^2}
\end{align}
respectively. 
\end{comment}
Typically, \cite{PETS} points out that $B=5$ yields satisfactory results. 
% $ \left(\mu_{\theta_b}^{(i)}(s_t, a_t), \sigma_{\theta_b}^{(i)} (s_t, a_t) \right) $

\subsection{Planning and Control with Learned Dynamics}
Based on a learned dynamics model $\tilde{\mathcal{P}}^{(i)}$, \texttt{MA-PETS} tries to obtain a solution to \eqref{eq: expected reward} by resorting to a sample-efficiency controller MPC~\cite{camacho2013model}. Generally speaking, MPC excels in providing high-precision trajectory planning, thus reducing the risk of collisions during vehicle operation. Moreover, it can predict both aleatoric and epistemic uncertainty accurately within the learned dynamics model \cite{wevj15030099}. Without loss of generality, assume that for a time-step $t$, agent $i$ observes the state $s_t^{(i)}$. Afterward, agent $i$ leverages the Cross-Entropy Method (CEM) \cite{botev2013cross} to generate $Q$ candidate action sequences within a $w$-horizon of MPC. Initially, each candidate action sequence $\mathbf{A}_{t:t+w-1}^{(q,i)}, \forall q \in [Q]$ is sampled following a Gaussian distribution $\mathcal{N}(\mu_\phi,\sigma_\phi )$ parameterized by $\phi$. Meanwhile, agent $i$ complements possible next-time-step states from the dynamics model ensembles, by adopting particle-based TS \cite{girard2002multiple}. In other words, for each candidate action sequence $\mathbf{A}_{t:t+w-1}^{(q, i)}$, TS predicts plausible state-action trajectories by simultaneously creating $P$ particles that propagate a set of Monte Carlo samples from the state $s_{t}^{(i)}$. Due to the randomness in the learned and assumed time-invariant dynamics model, each particle $p \in [P]$ can be propagated by $s_{t'+1,p}^{(i)}={\tilde{\mathcal{P}}_{\theta_b}^{(i)}}\left(s_{t',p}^{(i)}, where a_{t'}^{(i)}\right)$ ($t' \in [t, t+w-2]$), and $\tilde{\mathcal{P}}_{\theta_b}^{(i)}, b\in [B]$ is a randomly selected dynamics model in the ensemble model. Therefore, the set of plausible state-action trajectories for each candidate action sequence $\mathbf{A}_{t:t+w-1}^{(q, i)}=\left(a_t^{(q, i)}, \ldots, a_{t+w-1}^{(q, i)}\right)$ ($q\in[Q]$) consists of $P$ parallel propagated states with the same action sequence $\mathbf{A}_{t:t+w-1}^{(q, i)}$. Afterward, given the calibrated known reward function $\mathcal{R}^{(i)}$, the evaluation of a candidate action sequence can be derived from the average cumulative reward of $P$ different parallel action-state trajectories, that is, 
\begin{align}
\label{eq: evaluate action sequances}
J(\mathbf{A}_{t:t+w-1}^{(q,i)})=\mathbb{E}\left[
    \frac{1}{P} \sum_{p=1}^P \sum_{t^{\prime}=t}^{t+w-1} \mathcal{R}^{(i)}\left(s_{p,t^\prime}^{(i)}, a_{t^\prime}^{(q,i)}\right)\right].
\end{align}
After sorting $Q$ candidate action sequences in terms of the evaluation $J(\mathbf{A}_{t:t+w-1}^{(i)})$, elite candidate sequences $X \leq Q$  can be selected to update the Gaussian-distributed sequential decision-making function $\mathcal{N}(\mu_\phi,\sigma_\phi )$. Such a TS-based CEM procedure can repeat until convergence of $\phi$. Therefore, it can yield the desired optimal action sequence satisfying \eqref{eq: expected reward} and the MPC controller executes only the first action $a_t^{\ast(i)}$, transitions the actual environment into the state $s_{t+1}^{(i)}$, and re-calculates the optimal action sequence at the next time-step.
Finally, we summarize our model-based MARL method \texttt{MA-PETS} in Algorithm \ref{al: MA_PETS}. 
\begin{algorithm}[t]
    \caption{The \texttt{MA-PETS} algorithms}
	\label{al: MA_PETS}
	\hspace*{\algorithmicindent} 
        \textbf{Input}: communication range $d$, initial state $\mathcal{N}(\mu_{\phi_0},\sigma_{\phi_0} )$, rarity parameters $\alpha$, max iteration of CEM $Y$, accuracy $\epsilon$.
	\begin{algorithmic}[1]
		\STATE Initialize vehicle’ dataset $\mathcal{D}^{(i)},\mathcal{D}^{(i)+}=\emptyset, \forall i \in[I]$;
        \STATE Randomly sampling action in the initial episode and update $\mathcal{D}_{0:H}^{(i)+},\forall i \in [I]$;
		\FOR{Vehicle $i$ and episode $k$, $\forall i \in [I], \forall k \in [K]$}
            \STATE Train the virtual environment dynamics ensemble model $\tilde{\mathcal{P}}^{i}$ given $\mathcal{D}^{(i)+}$ for each vehicle $i$;
            \FOR{step $t=1,2, \ldots H$ do}
                \WHILE{ $j\leq Y$ and $\sigma_{\phi_j}-\sigma_{\phi_{j-1}} \leq \epsilon $ }
                \STATE Each vehicle samples $Q$ candidate action sequences $\mathbf{A}_{t:t+H-1}^{(1,i)}, \cdots,\mathbf{A}_{t:t+H-1}^{(Q,i)}\sim_{iid}\mathcal{N}(\mu_{\phi_j},\sigma_{\phi_j})$;
                \STATE Propagate state particles $s_{t,p}^{(q,i)}$ by using ensemble model and TS method;
                \STATE Evaluate action sequences by \eqref{eq: evaluate action sequances};
                \STATE Select the top $\alpha\%$ of action sequences with the high rewards;
		      \STATE  Update $\mathcal{N}(\mu_{\phi_{j+1}},\sigma_{\phi_{j+1}})$ distribution by high reward action sequences;
		        \ENDWHILE
                \STATE Only execute first action $a_{t}^{\ast(i)},\forall i \in [I]$ from optimal actions $\mathbf{A^\ast}_{t:t+w-1}^{(i)}$;
                \STATE Obtain reward $r_{t}^{(i)},$ and observe next-step state $s_{t+1}^{(i)}$;
                \ENDFOR
                \STATE Determine neighboring vehicle $i^{\prime},\forall \text{dis}(i,i^{\prime}) \leq d$ (i.e., within the communication range $d$ of vehicle $i$) to exchange information;%信息传递
		      \STATE Update dataset $\mathcal{D}^{(i)+} \leftarrow {\mathcal{D}^{(i)+} \cup \{ \mathcal{D}_{t-H:t}^{(i^{\prime})}\}}$.
                \ENDFOR             
	\end{algorithmic}
\end{algorithm}

\section{Regret Bound of \texttt{MA-PETS}}
\label{sec: convergence proof}
Next, we investigate its group regret bound of \texttt{MA-PETS}, by first introducing how to construct an optimistic MDP from a set of plausible MDPs through EVI. Based on that, we derive the group regret bound facilitated by communications within a range of $d$. 
\subsection{EVI for an Optimistic MDP}
Beforehand, in order to better exemplify the sample efficiency, we make the following assumptions.
\begin{assumption}
    \label{assump: quantization}
    The continuous state and action space (i.e., $\mathcal{S}^{(i)}$ and $\mathcal{A}^{(i)}$) can be quantized. Correspondingly, $S=|\mathcal{S}^{(i)}|$ and $A=|\mathcal{A}^{(i)}|$ denote the size of the discrete state space and action space, respectively. 
\end{assumption}

\begin{assumption}
    \label{assump:episode_length}
    All agents in \texttt{MA-PETS} could enter into the next episode in a ``simultaneous'' manner, while the episodic length can be tailored to meet some pre-defined conditions.
\end{assumption}

\begin{assumption}
    \label{assump:rmax1}
  For simplicity of representation, we can scale the reward $r$ from $[0,r_{\max}]$ to $[0,1]$.
\end{assumption}

\noindent Notably, Assumption \ref{assump: quantization} holds naturally, if we neglect the possible discretization error of DNN in \texttt{MA-PETS}, as it has a trivial impact on understanding the sample efficiency incurred by information exchange. The quantization discussed here involves discretizing continuous state and action spaces. Common discretization methods \cite{sutton2018reinforcement} include uniform quantization, which divides each dimension of the continuous state space into intervals of equal width (bins). These intervals define the precision of discretization, with each interval representing a discrete state. Besides, non-uniform quantization may discretize based on the distribution or importance of states, and clustering algorithms (such as k-means or DBSCAN \cite{Ratliff-2009-10259},\cite{azizzadenesheli2018efficient}) can group points in the state space, with each cluster center representing a discrete state. Meanwhile, text encoding (e.g., Tile Coding \cite{Barto2003RecentAI}), which involves overlaying different tiles on the states and assigning one bit to each tile, belongs to an alternative solution. Therefore, if an agent is in a certain position, the corresponding discrete state can be represented by one bit-vector, with activated positions set to $1$ and all other positions set to $0$. On the other hand, Assumption \ref{assump:episode_length} could be easily met by intentionally ignoring the experienced visits of some ``diligent" agents. Moreover, Assumption \ref{assump:rmax1} implies unanimously scaling the group regret bound by $r_{\max}$, which does not affect learning the contributing impact from inter-agent communications.

Based on these assumptions, we incorporate the concept of a classical MBRL algorithm \texttt{UCRL2} into the analyses of \texttt{MA-PETS}. % For example, both of them estimate the reward and transition probability functions from existing historical data, and then iteratively compute the choice of action based on the currently learned model. Therefore, it becomes of vital importance to devise a multi-agent variant of \texttt{UCRL2} in multi-agent scenarios with a limited range of information. 
% In the parallel MDP construction, we assume that there is no coupling between reward and transition probability. 
\iffalse
\begin{assumption}
    \begin{itemize}
        \item \texttt{MA-PETS} utilizes a continuous action and state space, while \texttt{MA-UCRL2} operates on a discrete action and state space.%量化
        \item  Both systems, \texttt{MA-PETS} and \texttt{MA-UCRL2}, feature dynamic episode lengths, but under different conditions: \texttt{MA-PETS} triggers a new episode upon a collision or when reaching the episode's upper limit. In contrast, \texttt{MA-UCRL2} initiates a new model learning when the visits of part of state-action pairs doubles. The core principle for both is to ensure the validity of the learned model. In \texttt{MA-PETS}, a collision is deemed a sign of the model's inadequacy in mitigating major collision risks. Consequently, both systems initiate model training only after accumulating sufficient new data, ensuring the model's stability and reducing major collision risks.
    \end{itemize}
\end{assumption}
\fi
In particular, \texttt{UCRL2} primarily implements the ``optimism in the face of uncertainty'', and performs an EVI through episodes $k=1,2,\dots$, each of which consists of multiple time steps. The term $N_k^{(i)}(s, a)$ denotes the number of visits to the state-action pair $(s, a)$ by agent $i$ before episode $k$. Next, \texttt{UCRL2} determines the optimal policy for an optimistic MDP, choosing from a collection of plausible MDPs $\mathcal{M}^i$ constructed based on the agents' estimates and their respective confidence intervals, as commonly governed by the Hoeffding inequality for an agent \cite{hoeffding1994probability}.

Consistent with \texttt{UCRL2}, we allow each agent $i$ to enter into a new episode $k + 1$ once there exists a state-action pair $(s, a)$ that has just been played and satisfies $u_k^{(i)}(s, a)=N^{(i)+}_k(s, a)$. Here $u_k^{(i)}(s,a)$ denotes the number of state-action $(s,a)$ visits by agent $i$ in the episode $k$ and $N^{(i)+}_k(s,a)= \max\left\{1, N^{(i)}_k(s,a)\right\}$. By Assumption \ref{assump:episode_length}, all agents could enter into the next episode in a ``simultaneous'' manner, by intentionally ignoring the experienced visits of a ``diligent" agent $i$ after meeting $u_k^{(i)}(s, a)=N^{(i)+}_k(s, a)$ in episode $k$. Therefore, at the very beginning of episode $k+1$, for each agent $i$, $N_{k+1}^{(i)}(s,a) = N_{k}^{(i)}(s,a) + u_k^{(i)}(s,a)$ for all state-action pairs$(s, a)$ . This setting is similar to the doubling criterion in single-agent \texttt{UCRL2} and ensures that each episode is long enough to allow sufficient learning. Furthermore, different from the single-agent \texttt{UCRL2}, at the end of an episode $k$, consistent with in \texttt{MA-PETS}, each agent $i$ has access to all the set of neighboring CAVs $\mathcal{F}_{d,k}^{(i)}$, so as to obtain the latest dataset $\mathcal{D}_{t_k-H:t_k}^{(j)}$, $\forall j \in \mathcal{F}_{d,k}^{(i)}$ and constitute $\mathcal{D}_{0:t_k}^{(i)+}$. Again, according to Assumption \ref{assump:episode_length}, $H = \sum_{s,a}u_k^{(i)}(s,a)$. Afterward, the EVI proceeds to estimate the transition probability function $\mathcal{P}^{(i)}(\cdot \mid s, a)$. In particular, various high-probability confidence intervals of the true MDP $M^{(i)}$ can be conjectured according to different concentration inequalities. For example, based on Hoeffding’s inequality \cite{hoeffding1994probability}, the confidence interval for the estimated transition probabilities can be given as
 %只计算转移概率置信区间
\begin{align}
 % &\mid \tilde{r}^{(i)}(s, a)-\hat{r}^{(i)}_k(s, a) \mid \leq \sqrt{\frac{7 \log ({2SANt_k^{(i)}}/{\delta})}{2 N_t^{(i)+}(s, a)}}, \\
&\left\|\tilde{\mathcal{P}}^{(i)}(\cdot \mid s, a)-\hat{\mathcal{P}}^{(i)}_k(\cdot \mid s, a)\right\|_1 \leq \sqrt{\frac{14S \log ({2At_k}/{\delta})}{N_k^+(s, a)}},
\label{eq: confidence interval}
\end{align} 
and it bounds the gap between the EVI-conjectured (estimated) transition matrix $\tilde{\mathcal{P}}^{(i)}(\cdot\mid s, a)$ derived from the computed policy $\tilde{\pi}^{(i)}_k$ and the empirical average one $\hat{\mathcal{P}}^{(i)}_k(\cdot\mid s, a)$ in episode $k$. Besides, $\delta \in \left[0,1\right]$ is a pre-defined constant.

Furthermore, a set $\mathcal{M}_k^{(i)}$ of plausible MDPs \eqref{eq:plausible_mdp} is computed for each agent $i$ as
\begin{align}
\label{eq:plausible_mdp} 
\mathcal{M}^{(i)}_k \defeq & \bigg\{\tilde{M}^{(i)} = \langle \mathcal{S}^{(i)}, \mathcal{A}^{(i)}, \mathcal{R}^{(i)}, \tilde{\mathcal{P}}^{(i)}\rangle\bigg\}.
\end{align}
\noindent Correspondingly, a policy for an optimistic MDP can be attained from learning in the conjectured plausible MDPs, and then the impact of the communication range $d$ on the performance of \texttt{MA-PETS} can be analyzed, in terms of the multi-agent group regret in \eqref{eq:group_regret}.

\subsection{Analyses of Group Regret Bound}
The exchange of information in \texttt{MA-PETS} enhances the sample efficiency, which will be further shown in the derived multi-agent group regret bound. In other words, the difference between single-agent and multi-agent group regret bounds manifests the usefulness of information sharing among agents. 
\subsubsection{Results for Single-Agent Regret Bound}
\label{sec:single-regret-boud}
Before delving into the detailed analyses of multi-agent group regret bound, we introduce the following useful pertinent results derived from single-agent regret bound \cite{UCRL2}.
% 遗憾由在不在置信范围内两部分组成

Consider $M^{(i)}$ as a time-homogeneous and communicating MDP with a policy $\pi^{\prime\ast(i)}$. Meanwhile, by Lemma \ref{lem:optimal_T_bound}, the optimal $T$-step reward $\mathcal{R}^{\ast(i)}_T(M^{(i)})$ can be approximated by the optimistic average reward $\rho^{\ast(i)}$. Therefore, if $\pi^{\prime \ast}(i)$ is implemented in $M^{(i)}$ for $T$ steps, and we define the regret in a single episode $k$ as $\Delta^{(i)}_k:= \sum_{(s, a)}u^{(i)}_k(s, a)\left(T\rho^{\ast(i)}-\bar{r}^{(i)}(s, a)\right)$, which captures the difference between the $T$-step optimal reward $\rho^{\ast(i)}$ and the mean reward $\bar{r}^{(i)}(s, a)$, with $\sum_k u_k^{(i)}(s, a)=N_{k+1}^{(i)}(s, a)$ and $\sum_{(s, a)}N_{k}^{(i)}(s, a)=T$, we have the following lemma.
\begin{lemma}[Eqs. (7)-(8) of \cite{UCRL2}]
\label{lem:ucrl_rb}
Under Lemma \ref{lem:optimal_T_bound}, the differences between the observed rewards $r_t^{(i)}$, acquired when agent $i$ follows the policy $\pi^{\prime\ast(i)}$ to choose action $a_t^{(i)}$ in state $s_t^{(i)}$ at step $t$, and the aforementioned optimistic average reward $\rho^{\ast(i)}$ forms a martingale difference sequence. Besides, based on Hoeffding’s inequality, with a probability at least $1-\frac{\delta}{12(T)^{5/4}}$, the single-agent bound can be bounded as
\begin{align}
\operatorname{Regret}^{(i)}(T)  &= T\mathcal{R}^{\ast(i)}_T(M^{(i)}) - \sum\nolimits_{t=1}^T r_t^{(i)}(s_t,a_t) \nonumber\\
&\leq \sum\nolimits_k \Delta^{(i)}_k +\sqrt{C_1T \log ({8T}/{\delta})},
% &\leq \sum\nolimits_k \left[\Delta_{k,M^{(i)} \in \mathcal{M}^{(i)}_k}^{(i)} +\Delta_{k,M^{(i)} \notin \mathcal{M}^{(i)}_k}^{(i)}\right]\nonumber\\
% &\quad +\sqrt{C_1T \log ({8T}/{\delta})}
\label{eq:single_agent_rb}
\end{align}
where $C_1=\frac{5}{8}$ is a constant.
\end{lemma}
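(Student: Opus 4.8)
The plan is to reproduce the single-agent \texttt{UCRL2} regret decomposition: split the regret into a purely structural per-episode term $\sum_k \Delta_k^{(i)}$ and a zero-mean reward-fluctuation term, then control the latter with a martingale concentration argument, leaving the structural term to be handled by later lemmas via the confidence intervals \eqref{eq: confidence interval}. First I would return to the definition $\operatorname{Regret}^{(i)}(T)=T\rho^{\ast(i)}-\sum_{t=1}^{T}r_t^{(i)}$, invoking Lemma~\ref{lem:optimal_T_bound} to justify using the optimal average reward $\rho^{\ast(i)}$ as the per-step benchmark in place of the $T$-step optimum $\mathcal{R}_T^{\ast(i)}$ (the two agree up to the additive slack $D\,r_{\max}^{(i)}$). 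Inserting and subtracting the true mean rewards $\bar r^{(i)}(s_t,a_t)$ then gives the decomposition
\[
\operatorname{Regret}^{(i)}(T)=\Big(T\rho^{\ast(i)}-\sum\nolimits_{t=1}^{T}\bar r^{(i)}(s_t,a_t)\Big)+\sum\nolimits_{t=1}^{T}\big(\bar r^{(i)}(s_t,a_t)-r_t^{(i)}\big).
\]

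The second step is the routine accounting that identifies the first bracket with $\sum_k\Delta_k^{(i)}$. Using the visit counts one has $\sum_{t=1}^{T}\bar r^{(i)}(s_t,a_t)=\sum_{(s,a)}N^{(i)}(s,a)\,\bar r^{(i)}(s,a)$ and $T=\sum_{(s,a)}N^{(i)}(s,a)$; combining these with the clean episodic partition $N^{(i)}(s,a)=\sum_k u_k^{(i)}(s,a)$ (which the simultaneous-episode Assumption~\ref{assump:episode_length} guarantees) yields exactly $\sum_k\sum_{(s,a)}u_k^{(i)}(s,a)\big(\rho^{\ast(i)}-\bar r^{(i)}(s,a)\big)=\sum_k\Delta_k^{(i)}$.

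The substantive step is to bound the fluctuation term. I would verify that $Y_t:=\bar r^{(i)}(s_t,a_t)-r_t^{(i)}$ is a martingale difference sequence with respect to the natural filtration generated by all states, actions, and rewards up to time $t$: conditioned on the past, $s_t$ and $a_t$ are determined, and by the conditional independence of reward and transition the reward has conditional mean $\bar r^{(i)}(s_t,a_t)$, so $\mathbb{E}[Y_t\mid\mathcal{H}_t]=0$. Under Assumption~\ref{assump:rmax1} each $Y_t$ is bounded, so the Azuma–Hoeffding (martingale) version of Hoeffding's inequality applies. Solving the tail for the confidence level $\delta/\big(12T^{5/4}\big)$ — the $T^{5/4}$ factor being chosen so that a subsequent union bound over horizons converges — gives $\sum_t Y_t\le\sqrt{\tfrac{5}{8}T\log(8T/\delta)}$ with probability at least $1-\delta/\big(12T^{5/4}\big)$, and adding this to $\sum_k\Delta_k^{(i)}$ produces the claimed bound with $C_1=\tfrac{5}{8}$.

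The main obstacle is not the arithmetic but the rigor of the martingale argument: the policy $\tilde\pi_k^{(i)}$ is recomputed each episode from accumulated data, and the number and lengths of episodes are themselves random, so one must fix $T$ as a deterministic horizon (not a stopping time) and confirm that $Y_t$ remains conditionally mean-zero and bounded under this adaptive, data-dependent schedule, so that Azuma–Hoeffding is legitimately applicable and the constant $C_1=\tfrac{5}{8}$ together with the $\log(8T/\delta)$ factor matches the stated confidence exactly.
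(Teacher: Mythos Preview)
Your proposal is correct and matches the standard \texttt{UCRL2} argument that the paper is citing (the paper does not give its own proof of this lemma, only the reference to Eqs.~(7)--(8) of \cite{UCRL2}). The decomposition by adding and subtracting $\bar r^{(i)}(s_t,a_t)$, the episodic bookkeeping via $\sum_k u_k^{(i)}(s,a)$, and the Azuma--Hoeffding step are exactly what Jaksch et al.\ do.

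One small point worth flagging: the lemma \emph{as stated in the paper} says that the differences between $r_t^{(i)}$ and the optimal average reward $\rho^{\ast(i)}$ form a martingale difference sequence, but as you correctly identify, it is actually $Y_t=\bar r^{(i)}(s_t,a_t)-r_t^{(i)}$ that is the martingale difference; the quantity $\rho^{\ast(i)}-r_t^{(i)}$ does not have conditional mean zero. Your version is the right one and is what \cite{UCRL2} uses. Similarly, the paper's displayed definition $\Delta_k^{(i)}=\sum_{(s,a)}u_k^{(i)}(s,a)\big(T\rho^{\ast(i)}-\bar r^{(i)}(s,a)\big)$ carries a spurious factor $T$; your identity $\sum_k\Delta_k^{(i)}=\sum_k\sum_{(s,a)}u_k^{(i)}(s,a)\big(\rho^{\ast(i)}-\bar r^{(i)}(s,a)\big)$ is the correct one. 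These are wording slips in the paper, not errors in your argument.
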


Lemma \ref{lem:ucrl_rb} effectively transforms the cumulative $T$-step regrets into individual episodes of regret. Moreover, the regret bound $\sum_k \Delta^{(i)}_k$ in Lemma \ref{lem:ucrl_rb} can be classified into two categories, depending on whether the true MDP $M^{(i)}$ falls into the scope of plausible MDPs $\mathcal{M}^{(i)}_k$. As for the case where the true MDP $M^{(i)}$ is not included in the set of plausible MDPs $\mathcal{M}^{(i)}_k$, the corresponding regret $\Delta^{(i)}_{k,M^{(i)} \notin \mathcal{M}^{(i)}_k}$ can be bounded by the following lemma. 

\begin{lemma}[Regret with Failing Confidence Intervals, Eqs. (13) of \cite{ortner_online_2012}]
\label{lem:failing_CI_regret}
As detailed in Sec. 5.1 of \cite{ortner_online_2012}, at each step $t$, the probability of the true MDP not being encompassed within the set of plausible MDPs is given by $P\{M^{(i)} \notin \mathcal{M}^{(i)}_k)\}\leq\frac{\delta}{15t^6}$. Correspondingly, the regret attributable to the failure of confidence intervals is
\begin{align}
    \sum\nolimits_k\Delta_{k,M^{(i)} \notin \mathcal{M}^{(i)}_k}^{(i)}\leq\sqrt{T}. 
\end{align}
\end{lemma}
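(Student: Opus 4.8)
The plan is to follow the classical template for handling failing confidence regions from \texttt{UCRL2} and \texttt{UCCRL}, since the per-step failure probability $P\{M^{(i)}\notin\mathcal{M}^{(i)}_k\}\leq\frac{\delta}{15t^6}$ is already supplied by the lemma statement. The starting observation is that, thanks to Assumption~\ref{assump:rmax1}, the instantaneous regret $\rho^{\ast(i)}-\bar{r}^{(i)}(s,a)$ is bounded by a constant (at most $1$ after the rescaling), so on any episode $k$ for which the true MDP escapes the plausible set the episodic regret is crudely bounded by the number of steps played, $\Delta^{(i)}_{k,M^{(i)}\notin\mathcal{M}^{(i)}_k}\leq\sum_{(s,a)}u^{(i)}_k(s,a)$. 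Summing over all such episodes and re-indexing by time gives
\begin{align}
\sum\nolimits_k \Delta^{(i)}_{k,M^{(i)}\notin\mathcal{M}^{(i)}_k} \leq \sum\nolimits_{t=1}^T \mathbbm{1}\{M^{(i)}\notin\mathcal{M}^{(i)}_{k(t)}\},
\end{align}
where $k(t)$ denotes the episode containing step $t$. This reduces the task to controlling the number of time steps at which the confidence region fails.

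Next I would split the horizon at the threshold $t_0=T^{1/4}$. For the early steps $t\leq t_0$ I bound the indicator sum trivially by $T^{1/4}$. For the late steps $t>t_0$ I invoke the supplied tail bound together with a union bound, estimating
\begin{align}
P\Big\{\exists\, t>t_0:\ M^{(i)}\notin\mathcal{M}^{(i)}_{k(t)}\Big\} \leq \sum\nolimits_{t>t_0}\frac{\delta}{15 t^6} \leq \frac{\delta}{15}\int_{t_0}^{\infty}\frac{dt}{t^6} = \frac{\delta}{75\,t_0^{5}} = \frac{\delta}{75\,T^{5/4}}.
\end{align}
Hence, on the complementary event of probability at least $1-\frac{\delta}{12T^{5/4}}$, which is the same confidence level already used in Lemma~\ref{lem:ucrl_rb}, no failure occurs after step $t_0$, so the late-step contribution vanishes. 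Combining the two pieces yields $\sum_k\Delta^{(i)}_{k,M^{(i)}\notin\mathcal{M}^{(i)}_k}\leq T^{1/4}\leq\sqrt{T}$ for all $T\geq 1$, which is the claimed bound.

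The main obstacle, and the reason the argument is phrased per time step rather than per episode, is the coupling between the random episode boundaries $k(t)$ and the per-step failure events: because the plausible set $\mathcal{M}^{(i)}_k$ is itself data-dependent, one cannot simply sum episodic failure probabilities as if they were independent. Reducing everything to the time-indexed indicator $\mathbbm{1}\{M^{(i)}\notin\mathcal{M}^{(i)}_{k(t)}\}$ sidesteps this issue, so the only remaining delicacy is to choose the threshold $t_0$ and the tail constant so that simultaneously the early-step term stays below $\sqrt{T}$ and the tail failure probability matches the $\frac{\delta}{12T^{5/4}}$ budget inherited from Lemma~\ref{lem:ucrl_rb}; the choice $t_0=T^{1/4}$ accomplishes both at once.
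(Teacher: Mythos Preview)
There is a subtle but genuine gap in your union-bound step for the late time steps. The failure bound $\frac{\delta}{15t^6}$ supplied by the lemma refers to the confidence set built from the observations available at time $t$ (equivalently, $P\{M^{(i)}\notin\mathcal{M}^{(i)}_k\}\le\frac{\delta}{15\,t_k^6}$ where $t_k$ is the \emph{start} of episode $k$). At a step $t>t_0$ lying inside episode $k(t)$, the active plausible set $\mathcal{M}^{(i)}_{k(t)}$ was constructed at the earlier time $t_{k(t)}\le t$, so its failure probability is only guaranteed to be at most $\frac{\delta}{15\,t_{k(t)}^6}$, not $\frac{\delta}{15t^6}$. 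Consequently, the estimate $\sum_{t>t_0}\frac{\delta}{15t^6}$ does not control $P\{\exists\,t>t_0:\ M^{(i)}\notin\mathcal{M}^{(i)}_{k(t)}\}$ as written---precisely because of the coupling between $k(t)$ and the data that you claim the time-indexed reduction sidesteps.

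The argument in the cited \texttt{UCRL2}/\texttt{UCCRL} references (which the paper simply invokes) repairs this by working at the episode level and using the doubling criterion: since $u^{(i)}_k(s,a)\le N^{(i)+}_k(s,a)$ throughout episode $k$, one has $\Delta^{(i)}_{k,M^{(i)}\notin\mathcal{M}^{(i)}_k}\le\sum_{s,a}u^{(i)}_k(s,a)\le\sum_{s,a}N^{(i)}_k(s,a)\le t_k$. One then splits over \emph{episodes} according to whether $t_k\le T^{1/4}$ or $t_k>T^{1/4}$. Because $t_k\ge k$, there are at most $T^{1/4}$ early episodes, each contributing at most $T^{1/4}$, which already gives $\sqrt{T}$; for late episodes the union bound $\sum_{k:\,t_k>T^{1/4}}\frac{\delta}{15\,t_k^6}\le\frac{\delta}{12T^{5/4}}$ is now legitimate since the summation index matches the time at which the confidence set was formed. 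Your threshold choice and overall decomposition are sound; only the object you take the union over must be changed from the running time $t$ to the episode start time $t_k$.
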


On the other hand, under the assumption that $M^{(i)}$ falls within the set $\mathcal{M}^{(i)}_k$, in light of Eq. (8) from \cite{Ortner2014SelectingNA}, we ascertain that due to the approximation error of the EVI,
\begin{align}
\label{eq:gap_rho}
    \tilde{\rho}^{(i)}_k \geq \rho^{\ast(i)} - \frac{1}{\sqrt{t_k}},
\end{align}
where $\tilde{\rho}^{(i)}_k$ represents the optimistic average reward obtained by the optimistically chosen policy $\tilde{\pi}^{(i)}_k$, while $\rho^{\ast(i)}$ signifies the actual optimal average reward.

Define $\Delta_{k, M^{(i)} \in \mathcal{M}^{(i)}_k}^{r(i)}:=\sum_{s, a} u_{k}^{(i)}(s, a)\bigg(\tilde{r}^{(i)}_k(s, a)-\bar{r}^{(i)}(s, a)\bigg)$ as the accumulated regret in reward estimation over all state-action pairs. $\tilde{r}^{(i)}_k(s, a)$ denotes the maximal possible reward according to a confidence interval similar to \eqref{eq: confidence interval}. Meanwhile, $\Delta_{k, M^{(i)} \in \mathcal{M}^{(i)}_k}^{p(i)}:= \left(\boldsymbol{u}^{(i)}_k\right)^{\top}\bigg(\left(\tilde{\mathcal{P}}^{(i)}_k\left(\cdot \mid s, \tilde{\pi}_k(s)\right)\right)_{s}-\mathcal{I}\bigg) \boldsymbol{w}^{(i)}_k$ quantifies the regret from the estimation of transition probabilities. Additionally, $\boldsymbol{u}^{(i)}_k:=\left( u_k^{(i)}(s,\tilde{\pi}_k(s))\right)_{s}$ denotes the vector representing the final count of state visits by agent $i$ in episode $k$, as determined under the policy $\tilde{\pi}_k(s)$, where the operator $(\cdot)_s$ concatenates the values for all $s\in \mathcal{S}$.  $\left(\tilde{\mathcal{P}}^{(i)}_k\left(\cdot \mid s, \tilde{\pi}_k(s)\right)\right)_{s}$ is the transition matrix of the policy $\tilde{\pi}^{(i)}_k$ in the optimistic MDP $\tilde{M}^{(i)}_k$ computed via EVI and $\mathcal{I}$ typically represents the identity matrix. Furthermore, $\boldsymbol{w}_k^{(i)}$ denotes a modified value vector to indicate the state value range of EVI-iterated episode $k$ specific to agent $i$, where $w_k^{(i)}(s):=\eta_k^{(i)}(s)-\frac{1}{2}\left(\min _{s \in \mathcal{S}^{(i)}} \eta_k^{(i)}(s)+\max _{s \in \mathcal{S}^{(i)}} \eta_k^{(i)}(s)\right)$ and $\eta_{k}^{(i)}(s)$ is the state value given by EVI. We can then decompose the aggregate regret considering $M^{(i)} \in \mathcal{M}^{(i)}_k$, which leads to the result in Lemma \ref{lem:fallin_regret}.

\begin{lemma}[Regret with $M^{(i)} \in \mathcal{M}^{(i)}_k$, Sec. 5.2 of \cite{Ortner2014SelectingNA}]
\label{lem:fallin_regret}
By the assumption $M^{(i)}\in \mathcal{M}^{(i)}_k$ and the gap derived from \eqref{eq:gap_rho}, the regret $\Delta^{(i)}_k$ of agent $i$ accumulated in episode $k$ could be upper bounded by   
\begin{align}
\label{eq:delta_sum}
\Delta_{k,M^{(i)} \in \mathcal{M}^{(i)}_k}^{(i)} % &:= u_{k,M^{(i)} \in \mathcal{M}^{(i)}_k}^{(i)}(s, a)\left(\rho^{\ast, {(i)}}-\bar{r}^{(i)}(s, a)\right)\nonumber\\
& \leq \underbrace{\Delta_{k,M^{(i)} \in \mathcal{M}^{(i)}_k}^{r(i)}}_{(a)} + \underbrace{\Delta_{k,M^{(i)} \in \mathcal{M}^{(i)}_k}^{p(i)}}_{(b)}\nonumber \\
&\quad +2 \sum_{s, a} \frac{u_{k(M{(i)} \in \mathcal{M}^{(i)}_k)}^{(i)}(s, a)}{\sqrt{t_k}},
\end{align}
%w_k^{(i)}:=u_i(s)-\frac{\min _s u_i(s)+\max _s u_i(s)}{2}
\end{lemma}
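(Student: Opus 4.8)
The plan is to follow the standard \texttt{UCRL2}-style episodic decomposition, starting from the per-episode regret $\Delta_{k}^{(i)} = \sum_{s,a} u_k^{(i)}(s,a)\big(\rho^{\ast(i)} - \bar{r}^{(i)}(s,a)\big)$ and successively peeling off the two sources of estimation error (reward and transition) together with the two residual optimization-gap terms that will jointly produce the factor $2$. First I would invoke the EVI near-optimality guarantee \eqref{eq:gap_rho}, namely $\rho^{\ast(i)} \le \tilde{\rho}_k^{(i)} + 1/\sqrt{t_k}$, to replace the true optimal average reward $\rho^{\ast(i)}$ by the optimistic average reward $\tilde{\rho}_k^{(i)}$. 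This substitution costs exactly one additive term $\sum_{s,a} u_k^{(i)}(s,a)/\sqrt{t_k}$, accounting for the first half of the claimed factor $2$.

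Next I would insert the optimistic per-step reward $\tilde{r}_k^{(i)}(s,a)$ by adding and subtracting it inside the summand, splitting $\tilde{\rho}_k^{(i)} - \bar{r}^{(i)}(s,a)$ into $\big(\tilde{r}_k^{(i)}(s,a) - \bar{r}^{(i)}(s,a)\big) + \big(\tilde{\rho}_k^{(i)} - \tilde{r}_k^{(i)}(s,a)\big)$. The first piece, once summed against the visit counts $u_k^{(i)}(s,a)$, is precisely the reward-estimation regret $\Delta_{k,M^{(i)}\in\mathcal{M}_k^{(i)}}^{r(i)}$, term $(a)$.

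The crux is the second piece, $\tilde{\rho}_k^{(i)} - \tilde{r}_k^{(i)}(s,a)$. Here I would use the approximate optimality (Bellman) equation produced by EVI in the optimistic MDP $\tilde{M}_k^{(i)}$: for the greedy action $a = \tilde{\pi}_k(s)$, the EVI stopping rule guarantees that $\tilde{\rho}_k^{(i)} + \eta_k^{(i)}(s)$ and $\tilde{r}_k^{(i)}(s,a) + \sum_{s'} \tilde{\mathcal{P}}_k^{(i)}(s'\mid s,a)\,\eta_k^{(i)}(s')$ differ by at most $1/\sqrt{t_k}$, so that $\tilde{\rho}_k^{(i)} - \tilde{r}_k^{(i)}(s,a) \le \sum_{s'} \tilde{\mathcal{P}}_k^{(i)}(s'\mid s,a)\,\eta_k^{(i)}(s') - \eta_k^{(i)}(s) + 1/\sqrt{t_k}$. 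This yields the second residual term $\sum_{s,a} u_k^{(i)}(s,a)/\sqrt{t_k}$, completing the factor $2$. Summing the value-difference part against $\boldsymbol{u}_k^{(i)}$ gives the quadratic form $(\boldsymbol{u}_k^{(i)})^{\top}\big((\tilde{\mathcal{P}}_k^{(i)}(\cdot\mid s,\tilde{\pi}_k(s)))_s - \mathcal{I}\big)\boldsymbol{\eta}_k^{(i)}$; because every row of the stochastic matrix $(\tilde{\mathcal{P}}_k^{(i)})_s$ sums to one, subtracting a common constant from $\eta_k^{(i)}$ leaves this form unchanged, so I may replace $\boldsymbol{\eta}_k^{(i)}$ by the centered vector $\boldsymbol{w}_k^{(i)}$, recovering exactly $\Delta_{k,M^{(i)}\in\mathcal{M}_k^{(i)}}^{p(i)}$, term $(b)$.

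I expect the main obstacle to be establishing the $1/\sqrt{t_k}$ EVI-error bound rigorously, since it hinges on the span-contraction property of extended value iteration and its stopping criterion rather than on any elementary estimate; the accompanying constant-shift (centering) argument, though conceptually simple, must also be justified carefully, because it is precisely what keeps the span of $\boldsymbol{w}_k^{(i)}$ controlled by the diameter $D$ and hence usable in the subsequent bounding of term $(b)$. The remaining steps are bookkeeping: collecting the two $1/\sqrt{t_k}$ contributions into the final $2\sum_{s,a} u_k^{(i)}(s,a)/\sqrt{t_k}$ summand and identifying terms $(a)$ and $(b)$ with their definitions.
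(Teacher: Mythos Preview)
Your proposal is correct and follows precisely the standard \texttt{UCRL2} episodic decomposition (Jaksch et al., Sec.~4.3; Ortner, Sec.~5.2) that the paper itself merely cites rather than reproves: replace $\rho^{\ast(i)}$ by $\tilde{\rho}_k^{(i)}$ via \eqref{eq:gap_rho}, add and subtract $\tilde{r}_k^{(i)}$ to isolate term~(a), apply the EVI $1/\sqrt{t_k}$-accurate Bellman equation to obtain term~(b) plus the second residual, and center $\boldsymbol{\eta}_k^{(i)}$ to $\boldsymbol{w}_k^{(i)}$ using the row-stochasticity of $\tilde{\mathcal{P}}_k^{(i)}$. There is no gap; your identification of the two $1/\sqrt{t_k}$ contributions and the centering step matches the cited argument exactly.
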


As previously noted in Section \ref{sec:systemModel}, we posit that the agents' reward functions $\mathcal{R}^i$ are predetermined, often designed based on task objectives. In line with \texttt{MA-PETS}, the regret contributed by the term (a) in \eqref{lem:fallin_regret} can be disregarded. Therefore, we have the following corollary.
\begin{corollary}
\label{cor:fallin_regret}
   By the assumption $M^{(i)}\in \mathcal{M}^{(i)}_k$ and the gap derived from \eqref{eq:gap_rho}, the regret $\Delta^{(i)}_k$ of agent $i$ accumulated in episode $k$ could be upper bounded by   
\begin{align}
\label{eq:delta_sum_no_r}
 \Delta_{k,M^{(i)} \in \mathcal{M}^{(i)}_k}^{(i)}
 \leq &\underbrace{\Delta_{k,M^{(i)} \in \mathcal{M}^{(i)}_k}^{p(i)}}_{(b)} +2 \sum\nolimits_{s, a} \frac{u_{k,M^{(i)} \in \mathcal{M}^{(i)}_k}^{(i)}(s, a)}{\sqrt{t_k}}.
\end{align} 
\end{corollary}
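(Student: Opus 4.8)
The plan is to obtain the corollary as a direct specialization of Lemma \ref{lem:fallin_regret}, by showing that the reward-estimation term $(a)$ vanishes identically in the \texttt{MA-PETS} setting. First I would recall from the system model in Section \ref{sec:systemModel} that the reward function $\mathcal{R}^{(i)}$ is calibrated beforehand in terms of velocity and the collision-induced penalty, and is therefore known exactly to every agent rather than being learned from collected transitions. This is precisely the feature that distinguishes \texttt{MA-PETS} from vanilla \texttt{UCRL2}, in which rewards are also estimated.

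Second, I would inspect the definition $\Delta_{k, M^{(i)} \in \mathcal{M}^{(i)}_k}^{r(i)} := \sum_{s, a} u_{k}^{(i)}(s, a)\left(\tilde{r}^{(i)}_k(s, a) - \bar{r}^{(i)}(s, a)\right)$, where $\tilde{r}^{(i)}_k(s,a)$ is the optimistic reward selected by EVI from a Hoeffding-type confidence interval around $\bar{r}^{(i)}(s,a)$. In standard \texttt{UCRL2} this term is strictly positive because the optimistic reward can exceed the true mean whenever the reward confidence interval is nondegenerate. Under a known reward, however, that confidence interval collapses to a single point, so EVI has no optimistic slack to exploit on the reward side; hence $\tilde{r}^{(i)}_k(s,a) = \bar{r}^{(i)}(s,a)$ for every visited pair $(s,a)$, giving $\Delta_{k, M^{(i)} \in \mathcal{M}^{(i)}_k}^{r(i)} = 0$. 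This step is reinforced by the construction of the plausible-MDP set in \eqref{eq:plausible_mdp}, where $\mathcal{R}^{(i)}$ is held fixed across all candidate MDPs $\tilde{M}^{(i)}$, confirming that optimism is applied only over the transition kernel $\tilde{\mathcal{P}}^{(i)}$.

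Third, I would substitute $\Delta_{k, M^{(i)} \in \mathcal{M}^{(i)}_k}^{r(i)} = 0$ into the bound \eqref{eq:delta_sum} of Lemma \ref{lem:fallin_regret}. Dropping term $(a)$ leaves exactly the transition-estimation term $(b)$ together with the EVI approximation-error contribution $2\sum_{s,a} u^{(i)}_{k}(s,a)/\sqrt{t_k}$ inherited from the gap \eqref{eq:gap_rho}, which is precisely the claimed bound \eqref{eq:delta_sum_no_r}.

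I do not anticipate a substantive obstacle, since the statement is a specialization rather than a new estimate. The only point demanding care is rigorously justifying that a predetermined reward forces $\tilde{r}^{(i)}_k = \bar{r}^{(i)}$, i.e. confirming that the EVI in \texttt{MA-PETS} never inflates the fixed rewards; this follows from the fixed-reward structure of $\mathcal{M}^{(i)}_k$ in \eqref{eq:plausible_mdp}, so the remaining argument is immediate.
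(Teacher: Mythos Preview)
Your proposal is correct and matches the paper's own justification: the paper simply observes that, since the reward functions $\mathcal{R}^{(i)}$ are predetermined in the \texttt{MA-PETS} setting, the reward-estimation term $(a)$ in Lemma~\ref{lem:fallin_regret} can be disregarded, yielding the corollary immediately. Your argument is in fact more explicit than the paper's---you spell out why a known reward collapses the confidence interval so that $\tilde{r}^{(i)}_k = \bar{r}^{(i)}$ and point to the fixed-$\mathcal{R}^{(i)}$ structure of \eqref{eq:plausible_mdp}---but the underlying idea is identical.
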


% \begin{align}
% \,   &\Delta_{k,M^{(i)} \in \mathcal{M}^{(i)}_k}^{r(i)}\nonumber\\
% =& \sum_{s, a} v_{k,M^{(i)} \in \mathcal{M}^{(i)}_k}^{(i)}(s, a)\left(\tilde{r}^{(i)}_k(s, a)-\bar{r}^{(i)}(s, a)\right)\nonumber\\
% \leq& \sqrt{C_2 \log \left({2SAT}/{\delta}\right)} 
%    \sum_{(s, a)} \frac{v_{k,M^{(i)} \in \mathcal{M}^{(i)}_k}^{(i)}(s, a)}{\sqrt{N_{t_k}^{(i)+}(s, a)}}
% &\overset{(a)}{\leq} (1+\sqrt{2})\left(\sqrt{14 \log \left(\frac{2SANT}{\delta}\right)}+2\right)  \sum_{i}^N\sum_{(s, a)}\sqrt{N_T^{(i)}(s, a)}
% \label{eq:bound_r}
% \end{align}
% where the constant $C_2=14$ and similarly, $ \Delta_{k,M^{(i)} \in \mathcal{M}^{(i)}_k}^{p(i)}$ can be bounded by
\begin{figure*}[t]
\begin{align}
\sum_k\Delta_{k,M^{(i)} \in \mathcal{M}^{(i)}_k}^{p(i)}
 % =&\sum_k\sum_s u^{(i)}_k(s)\left(\tilde{\mathcal{P}}^{(i)}_k-\mathcal{I}\right) \boldsymbol{w}^{(i)}_k\nonumber\\
 \leq& \underbrace{\sum_k\sum_s\left\| u^{(i)}_k(s,\tilde{\pi}_k(s))\left(\bigg(\tilde{\mathcal{P}}^{(i)}_k\left(\cdot \mid s, \tilde{\pi}_k(s)\right)\right)_{s}-\left(\mathcal{P}^{(i)}_k\left(\cdot \mid s, \pi_k(s)\right)\right)_{s}\bigg)\right\|_1\left\|\boldsymbol{w}_k^{(i)}\right\|_{\infty}}_{\Delta_{k}^{p_1(i)}}\nonumber\\
 +&\underbrace{\sum_k\sum_s \sum_{s^\prime}u^{(i)}_k(s,\tilde{\pi}_k(s))\left(\mathcal{P}^{(i)}_k\left(s^\prime \mid s, \pi_k(s)\right)-\mathcal{I}_{s,s^\prime}\right) w_k^{(i)}(s^\prime)}_{\Delta_{k}^{p_2(i)}}
\label{eq:bound_p}
\end{align}
    \hrulefill
\end{figure*} 
On the other hand, the item (b) (i.e., $\Delta_{k, M^{(i)} \in \mathcal{M}^{(i)}_k}^{(i)}$) in Lemma \ref{lem:fallin_regret} (as well as Corollary \ref{cor:fallin_regret}) can be further decomposed and bounded as \eqref{eq:bound_p}, where, as demonstrated in Eq. (23) of \cite{Ortner2014SelectingNA}, the modified value vector $\boldsymbol{w}_k^{(i)}$ satisfies $\left\|\boldsymbol{w}_k^{(i)}\right\|_{\infty} \leq \frac{D(M^{(i)})}{2}$. In other words, the state value range is bounded by the diameter $D$ of the MDP at any EVI iteration. By slight rearrangement of $\Delta_{k,M^{(i)} \in \mathcal{M}^{(i)}_k}^{(i)}$, we have \eqref{eq:bound_p}, which can be bounded by  the following lemma.

\begin{lemma}[Eqs. (16)-(17) of \cite{UCRL2}]
\label{lem:bound_p}
Contingent on the confidence interval \eqref{eq: confidence interval}, based on our assumption that both $\tilde{M}^{(i)}_k$ and $M^{(i)}$ belong to the set of plausible MDPs $\mathcal{M}^{(i)}_k$, the term $\Delta_{k}^{p_1(i)}$ in \eqref{eq:bound_p} can be bounded as
\begin{align}
    % &\sum_k\sum_s\left\| u^{(i)}_k(s)\left(\tilde{\mathcal{P}}^{(i)}_k-\mathcal{P}^{(i)}_k\right)\right\|_1\left\|\boldsymbol{w}_k^{(i)}\right\|_{\infty}\nonumber\\
    \Delta_{k}^{p_1(i)}
    \leq & D \sqrt{C_2S \log \left({2AT}/{\delta}\right)} \sum_k \sum_{s, a} \frac{u_k^{(i)}(s, a)}{\sqrt{ N_{t_k}^+(s, a)}},
\end{align}
\noindent where $C_1$ denotes a constant. 
Moreover, by applying the Azuma-Hoeffding inequality, the term $\Delta_{k}^{p_2(i)}$ in \eqref{eq:bound_p} can be bounded as
\begin{align}
     % &\sum_k\sum_s u^{(i)}_k(s)\left(\mathcal{P}^{(i)}_k-\mathcal{I}\right) \boldsymbol{w}_k^{(i)}\nonumber\\
     \Delta_{k}^{p_2(i)}
     \leq & D \sqrt{4C_1T \log \left({8T}/{\delta}\right)}+DSA \log _2\left({8T}/{S A}\right),
\end{align}
where $C_2$ denotes a constant. 
\end{lemma}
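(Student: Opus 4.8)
The plan is to bound the two pieces $\Delta_k^{p_1(i)}$ and $\Delta_k^{p_2(i)}$ of \eqref{eq:bound_p} separately, exploiting the hypothesis that both the optimistic model $\tilde{M}^{(i)}_k$ and the true model $M^{(i)}$ lie in the plausible set $\mathcal{M}^{(i)}_k$, so that both transition rows are pinned to within the confidence radius of the common empirical estimate $\hat{\mathcal{P}}^{(i)}_k(\cdot\mid s,a)$.

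For $\Delta_k^{p_1(i)}$, I would first control the $\ell_1$ deviation of the two transition rows. Since both $\tilde{\mathcal{P}}^{(i)}_k(\cdot\mid s,a)$ and $\mathcal{P}^{(i)}_k(\cdot\mid s,a)$ obey the confidence interval \eqref{eq: confidence interval} around $\hat{\mathcal{P}}^{(i)}_k(\cdot\mid s,a)$, the triangle inequality gives
\[
\left\|\tilde{\mathcal{P}}^{(i)}_k(\cdot\mid s,a)-\mathcal{P}^{(i)}_k(\cdot\mid s,a)\right\|_1 \leq 2\sqrt{\frac{14S\log(2At_k/\delta)}{N_k^+(s,a)}}.
\]
Combining this with the stated value-range bound $\left\|\boldsymbol{w}^{(i)}_k\right\|_{\infty}\leq D(M^{(i)})/2$, the factor $2$ cancels against the $1/2$ from the diameter, so each summand over $s$ collapses to a constant multiple of $D\,u_k^{(i)}(s,a)\sqrt{S\log(2At_k/\delta)/N_{t_k}^+(s,a)}$. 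Using $t_k\leq T$ to replace $\log(2At_k/\delta)$ by $\log(2AT/\delta)$ and summing over $k$ and $(s,a)$ then yields the claimed form, with $C_2$ absorbing the constant $14$.

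For $\Delta_k^{p_2(i)}$, the idea is to read $\sum_s\sum_{s'}u_k^{(i)}(s,\tilde{\pi}_k(s))\left(\mathcal{P}^{(i)}_k(s'\mid s,\pi_k(s))-\mathcal{I}_{s,s'}\right)w_k^{(i)}(s')$ as a per-step sum over the time steps $t$ of episode $k$, each term being $\left(\sum_{s'}\mathcal{P}^{(i)}_k(s'\mid s_t,a_t)w_k^{(i)}(s')\right)-w_k^{(i)}(s_t)$. Adding and subtracting $w_k^{(i)}(s_{t+1})$ splits it into a sequence $\sum_{s'}\mathcal{P}^{(i)}_k(s'\mid s_t,a_t)w_k^{(i)}(s')-w_k^{(i)}(s_{t+1})$ whose conditional expectation given the history vanishes (a martingale difference sequence, since within an episode the policy and $w_k^{(i)}$ are frozen at the episode start), plus a telescoping remainder $w_k^{(i)}(s_{t+1})-w_k^{(i)}(s_t)$. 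The telescoping part collapses per episode to $w_k^{(i)}(s_{t_{k+1}})-w_k^{(i)}(s_{t_k})$, bounded by the range of $w_k^{(i)}$, i.e. at most $D$; summing over episodes costs $D$ times the episode count $K$, and invoking the \texttt{UCRL2} bound $K\leq SA\log_2(8T/(SA))$ gives the $DSA\log_2(8T/SA)$ term. The martingale part is controlled by the Azuma--Hoeffding inequality over the full horizon of $T$ steps, where each increment is bounded in magnitude by $D$, producing the deviation $D\sqrt{4C_1T\log(8T/\delta)}$.

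The main obstacle I anticipate is twofold. First, establishing the episode-count bound $K\leq SA\log_2(8T/(SA))$ rigorously from the doubling stopping rule $u_k^{(i)}(s,a)=N_k^{(i)+}(s,a)$, which requires a careful counting argument over how many times each state-action count can double within $T$ steps. Second, justifying that a single Azuma--Hoeffding application is valid across episode boundaries despite the episode-dependent value vector $w_k^{(i)}$: one must verify that concatenating the per-episode martingale differences still forms a martingale difference sequence with respect to the natural filtration and that the uniform increment bound $D$ (the range of $\boldsymbol{w}^{(i)}_k$ at every iteration, guaranteed by $\|\boldsymbol{w}^{(i)}_k\|_\infty\leq D/2$) holds step by step, so the concentration constant $C_1$ is consistent with that of Lemma \ref{lem:ucrl_rb}.
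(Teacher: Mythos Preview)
Your proposal is correct and follows essentially the same route as the paper, which does not give an independent proof but defers to Eqs.~(16)--(17) of \cite{UCRL2}; the triangle-inequality-plus-diameter argument for $\Delta_k^{p_1(i)}$ and the martingale/telescoping split for $\Delta_k^{p_2(i)}$ are precisely the \texttt{UCRL2} derivation. The two ``obstacles'' you flag (the episode-count bound from the doubling rule and the validity of Azuma--Hoeffding across episode boundaries) are handled verbatim in \cite{UCRL2}, so there is no gap.
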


\subsubsection{Analysis for Group Regret with Limited-Range Communications}
Before analyzing the group regret in the multi-agent system within a constrained communication range, we start with some necessary notations and assume all CAVs constitute a graph. Considering the power graph $\mathcal{G}_{d,k}$, which is derived by selecting nodes from the original graph and connecting all node pairs in the new graph that are at a distance less than or equal to a specified value $d$ in the original graph. The neighborhood graph $\mathcal{G}_{d,k}^{(i)}=\left(\mathcal{F}_{d,k}^{(i)},\mathcal{E}^{(i)}_{d,k}\right)$ represents a sub-graph of $\mathcal{G}_{d,k}$, where $\mathcal{E}^{(i)}_{d,k}$ comprises the set of communication links connecting the agents $i$ and $i^{\prime} \in \mathcal{F}_{d,k}^{(i)}$. We also define the total number of state-action observations for agent $i$, which is calculated as the sum of observations from its neighbors within the communication range $d\in [0, D(\mathcal{G}_k)]$ before episode $k$, denoted as $N_{d,k}^{(i)+}(s, a):=N_{d,k-1}^{(i)+}(s, a) +\sum_{i^{\prime} \in \mathcal{G}_{d,k}^{(i)}} u_k^{(i^{\prime})+}(s, a)$. Given the complication to directly estimate $N_{d,k}^{(i)+}(s, a)$, we introduce the concept of clique cover\cite{CORNEIL1988109} for $\mathcal{F}_{d,k}^{(i)}$, which is a collection of cliques that can cover all vertices of a power graph. Further, let $\mathbf{C}_{d,k}$ denote a clique cover of $\mathcal{G}_{d,k}$ and a clique $\mathcal{C} \in \mathbf{C}_{d,k}$. Meanwhile, $|\mathcal{C}|$ represents the size of the clique $\mathcal{C}$. 
Additionally, the clique covering number $\bar{\chi}\left(\mathcal{G}_{d,k}\right)$ signifies the minimum number of cliques to cover the power graph $\mathcal{G}_{d,k}$  within the communication. Furthermore, characterized by a clique covering number $\bar{\chi}\left(\mathcal{G}_d\right):= \max_{k \in [K]} \{\bar{\chi}\left(\mathcal{G}_{d,k}\right)\}$, $\mathbf{C}_{d}$ constitutes the clique cover for the graph $\mathcal{G}_{d}$.
\begin{lemma}[Eq. (6) of \cite{9867146}]
\label{lemma:clique_number}
Let $\mathbf{C}_{d,k}$ represent the clique covering of the graph $\mathcal{G}_{d,k}$, where the graph $\mathcal{G}_{d,k}$ consists of $I$ nodes, and the cliques within $\mathbf{C}_{d,k}$ have node counts $|\mathcal{C}_{\mathcal{C} \in \mathbf{C}_{d,k}}|$. The minimum clique cover $\bar{\chi}\left(\mathcal{G}_{d,k}\right)$ maintains a consistent relationship  
 $\sum_{\mathcal{C} \in \mathbf{C}_{d,k}}\sqrt{|\mathcal{C}|}=\sqrt{\bar{\chi}\left(\mathcal{G}_d\right)I}$.

\end{lemma}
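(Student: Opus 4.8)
The plan is to read the claimed identity as the statement that $\sqrt{\bar\chi(\mathcal{G}_d)\,I}$ is the tight worst-case value of $\sum_{\mathcal{C}\in\mathbf{C}_{d,k}}\sqrt{|\mathcal{C}|}$ over admissible clique covers, and to prove it by establishing both a matching upper bound (via Cauchy--Schwarz) and its attainment (the balanced cover), rather than stopping at the inequality. First I would fix an episode $k$ and take $\mathbf{C}_{d,k}$ to be a minimum vertex clique cover of $\mathcal{G}_{d,k}$, which we may assume without loss of generality to be a partition of the $I$ vehicles, since any cover can be refined into a partition using no more cliques by deleting repeated vertices (a subset of a clique is again a clique). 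This fixes the two structural constraints used throughout: the number of summands is exactly $\bar\chi(\mathcal{G}_{d,k})$, and $\sum_{\mathcal{C}\in\mathbf{C}_{d,k}}|\mathcal{C}|=I$. I would also recall that $\bar\chi(\mathcal{G}_d):=\max_{k\in[K]}\bar\chi(\mathcal{G}_{d,k})$ records the worst (largest) cover number across episodes, which is precisely the quantity controlling the worst-case group regret in \eqref{eq:group_regret}.

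For the upper direction, I would apply the Cauchy--Schwarz inequality to the vectors $(1,\dots,1)$ and $(\sqrt{|\mathcal{C}|})_{\mathcal{C}\in\mathbf{C}_{d,k}}$, both of length $\bar\chi(\mathcal{G}_{d,k})$, obtaining $\sum_{\mathcal{C}}\sqrt{|\mathcal{C}|}\leq\sqrt{\bar\chi(\mathcal{G}_{d,k})}\,\sqrt{\sum_{\mathcal{C}}|\mathcal{C}|}=\sqrt{\bar\chi(\mathcal{G}_{d,k})\,I}$, and then invoke $\bar\chi(\mathcal{G}_{d,k})\leq\bar\chi(\mathcal{G}_d)$ together with the monotonicity of $m\mapsto\sqrt{mI}$ to reach $\sum_{\mathcal{C}}\sqrt{|\mathcal{C}|}\leq\sqrt{\bar\chi(\mathcal{G}_d)\,I}$. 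For the matching direction, I would use the concavity of $t\mapsto\sqrt{t}$ (Jensen's inequality): among all partitions of $I$ vertices into a fixed number $m$ of cliques, $\sum_{\mathcal{C}}\sqrt{|\mathcal{C}|}$ is maximized by the balanced cover $|\mathcal{C}|=I/m$, for which it equals $m\sqrt{I/m}=\sqrt{mI}$. Instantiating $m=\bar\chi(\mathcal{G}_d)$ at the episode achieving the maximal cover number shows that the value $\sqrt{\bar\chi(\mathcal{G}_d)\,I}$ is genuinely attained, so the upper bound is tight and the worst-case value of the sum equals $\sqrt{\bar\chi(\mathcal{G}_d)\,I}$; this is exactly the equality asserted by the lemma and the form in which it feeds the subsequent regret computation.

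The main obstacle will be justifying that the two relaxations used to reach the clean right-hand side --- replacing the per-episode cover number $\bar\chi(\mathcal{G}_{d,k})$ by its episode-wise maximum $\bar\chi(\mathcal{G}_d)$, and replacing the actual clique-size profile by the balanced one --- can be carried out simultaneously while keeping the statement an attained equality rather than a loose inequality. Concretely, I must argue that the balanced, maximum-cover configuration is both graph-realizable at the extremal episode and the worst case for the group regret, so that adopting it as the operative value yields an upper bound on regret that is met with equality in the worst case; this is what legitimizes writing ``$=$'' in the lemma and carrying $\sqrt{\bar\chi(\mathcal{G}_d)\,I}$ forward as a single factor. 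The supporting algebra --- Cauchy--Schwarz for the upper bound and Jensen's inequality for the attainment --- is routine once the partition structure and the counting identity $\sum_{\mathcal{C}}|\mathcal{C}|=I$ are in place.
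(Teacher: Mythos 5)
Your Cauchy--Schwarz half is exactly the right argument, and it is the only half the paper actually possesses: the paper gives no internal proof of this lemma at all, importing it directly from Eq.~(6) of \cite{9867146}, where the relation functions as a worst-case \emph{upper bound}. Writing $m=\bar{\chi}(\mathcal{G}_{d,k})$ for the number of cliques in a minimum cover (taken, as you correctly argue, to be a partition, so $\sum_{\mathcal{C}}|\mathcal{C}|=I$), one gets $\sum_{\mathcal{C}}\sqrt{|\mathcal{C}|}\leq\sqrt{m I}\leq\sqrt{\bar{\chi}(\mathcal{G}_d)I}$, and this inequality is all that the downstream regret computation in \eqref{eq:modified_visit} uses --- there the relation sits inside a chain of ``$\leq$'' steps, so replacing the lemma's ``$=$'' by ``$\leq$'' changes nothing in Theorem~\ref{thm:regret_bound_maucrl}. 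Your handling of the statement's mismatch between the per-episode quantity $\bar{\chi}(\mathcal{G}_{d,k})$ on the left and the episode-maximum $\bar{\chi}(\mathcal{G}_d)$ on the right, via monotonicity of $m\mapsto\sqrt{mI}$, is also the correct repair of a genuine sloppiness in the lemma as written.

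The gap is in your attainment half, and you have in fact named it yourself without closing it. Equality in Cauchy--Schwarz requires every clique in the minimum cover to have the same size $I/m$, but the clique-size profile of a minimum cover is dictated by the realized graph $\mathcal{G}_{d,k}$ (i.e., by the vehicles' actual positions and the range $d$); it is not a free parameter you may ``instantiate'' at the balanced value. A four-node graph whose minimum cover is a triangle plus an isolated vertex gives $\sqrt{3}+\sqrt{1}\approx 2.73<\sqrt{2\cdot 4}\approx 2.83$, so the asserted equality simply fails for that graph, and no appeal to the extremal episode rescues it: nothing forces the episode attaining $\max_k \bar{\chi}(\mathcal{G}_{d,k})$ to admit a balanced cover either. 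Your Jensen step shows only that the balanced profile \emph{would} maximize the sum among hypothetical partitions with $m$ parts, not that such a partition is graph-realizable --- which is precisely the obstacle your last paragraph concedes is unresolved. The honest conclusion is that the lemma's ``$=$'' is an over-claim (in the source it is a bound, not an identity), your first paragraph proves everything the paper needs, and your second paragraph attempts to prove something that is false in general; the proposal should be revised to assert and prove only $\sum_{\mathcal{C}\in\mathbf{C}_{d,k}}\sqrt{|\mathcal{C}|}\leq\sqrt{\bar{\chi}(\mathcal{G}_d)I}$, noting that equality holds exactly when the minimum cover is balanced and $k$ attains the maximum cover number.
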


In this setup, all agents within a clique $\mathcal{C}$ explore in proportion to the clique's size and share the collected samples among themselves. For simplicity, we assume that $\mathcal{G}_{d,k}$ is connected, meaning that all cliques can communicate with one another. Meanwhile, we take $N_{\mathcal{C},k}^+(s, a)$ to be the number of samples exchanged within the clique $\mathcal{C}$ before an episode $k$ that are available for all agents $i \in \mathcal{C}$. In the most challenging scenario, characterized by uniformly random exploration, the algorithm fails to capitalize on any inherent structures or patterns within the environment to potentially refine its learning approach. Consequently, this necessitates a maximum quantity of samples to ascertain an optimal policy, thus giving the worst case. Under these conditions, we present the following lemma.

\begin{lemma}[Theorem 1 of \cite{9867146}]
\label{lemma:clique_sample_number}
In the worst case, where the exploration is uniformly random, the number of samples within each clique can be approximated as $N_{\mathcal{C}, k}(s, a)=|\mathcal{C}|Hk/SA$, where $H$ is the maximum horizon, $K$ is the number of episodes, $S$ and $A$ represents the state and action spaces, respectively.
\end{lemma}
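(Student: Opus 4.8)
The plan is to establish the worst-case sample count $N_{\mathcal{C},k}(s,a) = |\mathcal{C}|Hk/SA$ by a direct counting argument on the aggregate exploration budget of a clique. The key observation is that, within a single clique $\mathcal{C}$, all $|\mathcal{C}|$ agents pool their collected samples, so the samples available to every agent $i \in \mathcal{C}$ are the union of the per-agent observations. First I would compute the total number of state-action observations accumulated across the clique up to episode $k$. By Assumption \ref{assump:episode_length}, each agent contributes exactly $H$ visits per episode (since $H = \sum_{s,a} u_k^{(i)}(s,a)$), so over $k$ episodes a single agent generates $Hk$ samples, and the entire clique generates $|\mathcal{C}|Hk$ samples, which are shared among all its members by the clique-cover communication structure.

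The second step is to characterize how these $|\mathcal{C}|Hk$ shared samples distribute across the $SA$ distinct state-action pairs under the worst-case assumption. The worst case is precisely the uniformly random exploration regime stated in the lemma hypothesis: because the algorithm exploits no structure in the environment, each observation is spread as evenly as possible over the $|\mathcal{S}^{(i)}| \cdot |\mathcal{A}^{(i)}| = SA$ pairs (invoking Assumption \ref{assump: quantization} to make $S$ and $A$ finite and well-defined). Dividing the total clique budget $|\mathcal{C}|Hk$ equally among the $SA$ pairs yields the per-pair count
\begin{align}
N_{\mathcal{C},k}(s,a) = \frac{|\mathcal{C}|Hk}{SA},
\end{align}
which is exactly the claimed approximation. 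The ``worst case'' framing matters here: uniform spreading minimizes the concentration of samples on any informative pair, thereby maximizing the samples needed to identify the optimal policy and giving the loosest (largest) regret contribution, consistent with the bound's intended use downstream.

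The main obstacle I anticipate is justifying the uniformity rigorously rather than treating it as a definitional stipulation. Strictly, uniformly random exploration produces a multinomial distribution over the $SA$ pairs, so $N_{\mathcal{C},k}(s,a)$ is a random quantity with expectation $|\mathcal{C}|Hk/SA$; the lemma's use of ``approximated as'' signals that I would argue at the level of expected counts (or equivalently a high-probability concentration around the mean), rather than claiming the exact deterministic equality for every realization. I would therefore frame the derivation as computing $\mathbb{E}[N_{\mathcal{C},k}(s,a)]$ and noting that concentration inequalities keep the realized count close to this mean for $k$ large, so that substituting the expectation into subsequent regret bounds introduces only lower-order error. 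Since this lemma is quoted directly as Theorem 1 of \cite{9867146}, I would lean on that reference for the concentration details and present here only the counting intuition: aggregate budget $|\mathcal{C}|Hk$, uniform split over $SA$ pairs, shared across the clique.
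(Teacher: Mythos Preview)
Your counting argument is correct and captures exactly the intended heuristic: each of the $|\mathcal{C}|$ agents in the clique contributes $H$ observations per episode, these are pooled across the clique, and under uniformly random exploration the $|\mathcal{C}|Hk$ total observations spread evenly over the $SA$ state-action pairs to give $|\mathcal{C}|Hk/SA$ per pair. The paper itself offers no proof of this lemma---it is simply quoted as Theorem~1 of \cite{9867146}---so your derivation is not so much a different route as it is the only argument on the table, and it matches the informal reasoning behind the cited result. Your remark that the equality should be read as an expected count (with concentration handling the fluctuations) is the right caveat and is consistent with the lemma's phrasing ``can be approximated as''.
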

\begin{comment}
\begin{lemma}
Intuitively, each agent in the clique receives no more than $d_{\mathcal{C}}^{\ast}-|\mathcal{C}|$ out-of-clique samples corresponding to step $h$ per episode. 
For simplicity, we take $N_{d,k}^+(s, a)$ to be the smallest number of samples generated by the clique before episode $k$ that are available for all agents $i \in [I]$. $d_{\mathcal{G}_d}^{\text {avg }}:=\left(\sum_{C \in \mathbf{C}_d}\left(d_{\mathcal{C}}^{\ast}-|\mathcal{C}|\right)^{-1}\right)^{-1}$ is a constant that represents the average cross-clique degree structure, and $\bar{\chi}\left(\mathcal{G}_d\right)$ denotes the clique covering number (i.e., minimum number of cliques in a clique covering) of the communication power graph $\mathcal{G}_d$. In the worst case, when exploration is uniformly random, the number of clique samples $N_{\mathcal{C}, k}(s, a)=|\mathcal{C}|^2k/SA$\cite{9867146}. 
\end{lemma}
\end{comment}
% This analysis enables agents to engage in off-policy learning by receiving messages containing the explored experiences of other agents. In this setup, all agents within a clique $\mathcal{C}$ explore in proportion to the clique's size and share the collected samples among themselves. For simplicity, we assume that $\mathcal{G}_d$ is connected, meaning that all cliques communicate with one another, and $d_{\mathcal{C}}^{\ast} > |\mathcal{C}|$.

Based on the results of the single-agent regret bound described in Section \ref{sec:single-regret-boud}, it is ready to analyze the upper bound on the group regret of the multi-agent setting under limited communication range, as shown in Theorem \ref{thm:regret_bound_maucrl}.

\begin{theorem}{(Hoeffding Regret Bound for Parallel MDP with Limited-Range Communications)}
\label{thm:regret_bound_maucrl}
With probability at least $1- \delta$, it holds that for all initial state distributions and after any $T$ steps, the group regret with limited communication range is upper bounded by \eqref{eq:groupregret_maucrl}.
\begin{figure*}[t]
\begin{align}
\operatorname{Regret}_G(T)  &\leq \sqrt{C_1IT \log ({8IT}/{\delta})}+I\sqrt{T}\left[1+(1+\sqrt{2}) \sqrt{SA}\right] +D\sqrt{4C_1 IT \log ({8IT}/{\delta})} \nonumber\\
&\quad +DSAI \log _2\left({8T}/{SA}\right) +(1+\sqrt{2})DS \sqrt{C_2 \bar{\chi}\left(\mathcal{G}_d\right)IAT \log \left({2AT}/{\delta}\right)} 
\label{eq:groupregret_maucrl}
\end{align}
    \hrulefill
\end{figure*} 
\end{theorem}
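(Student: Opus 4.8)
The plan is to assemble the group bound \eqref{eq:groupregret_maucrl} by starting from the definition \eqref{eq:group_regret}, $\operatorname{Regret}_G(T)=\sum_{i=1}^I\operatorname{Regret}^{(i)}(T)$, and controlling each agent's regret with the single-agent machinery of Section \ref{sec:single-regret-boud}, while carefully tracking which contributions benefit from inter-agent communication and which do not. First I would apply Lemma \ref{lem:ucrl_rb} to every agent to split $\operatorname{Regret}^{(i)}(T)$ into a reward-martingale term and the episodic regret $\sum_k\Delta_k^{(i)}$, and then partition the latter according to whether the true MDP $M^{(i)}$ lies in the plausible set $\mathcal{M}_k^{(i)}$. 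The failing-confidence part is handled directly by Lemma \ref{lem:failing_CI_regret}, contributing $\sqrt{T}$ per agent; the in-set part is expanded through Corollary \ref{cor:fallin_regret}, the decomposition \eqref{eq:bound_p}, and Lemma \ref{lem:bound_p}, separating it into the transition-estimation term $\Delta_k^{p_1(i)}$, the Azuma--Hoeffding term and the deterministic logarithmic term of $\Delta_k^{p_2(i)}$, and the EVI approximation-error term $2\sum_{s,a}u_k^{(i)}(s,a)/\sqrt{t_k}$ arising from \eqref{eq:gap_rho}.

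The key structural observation I would exploit is that the stochastic (martingale) contributions should be aggregated across agents \emph{jointly}, whereas the deterministic and count-based contributions are summed agent-by-agent. Concretely, the reward martingale of Lemma \ref{lem:ucrl_rb} and the value-function martingale inside $\Delta_k^{p_2(i)}$ each form, when concatenated over all $I$ agents, a single bounded martingale difference sequence of length $IT$; applying Azuma--Hoeffding once to each yields $\sqrt{C_1IT\log(8IT/\delta)}$ and $D\sqrt{4C_1IT\log(8IT/\delta)}$, explaining the $\sqrt{I}$ scaling and the $\log(IT)$ factors in the first and third terms of \eqref{eq:groupregret_maucrl}. The purely deterministic pieces instead scale linearly in $I$: the failing-confidence regret gives $I\sqrt{T}$, the logarithmic part of $\Delta_k^{p_2(i)}$ gives $DSAI\log_2(8T/SA)$, and the EVI-gap term is bounded using $N^{(i)+}_{t_k}(s,a)\le t_k$ together with the standard UCRL2 summation inequality $\sum_k u_k^{(i)}(s,a)/\sqrt{N_k^{(i)+}(s,a)}\le(\sqrt{2}+1)\sqrt{N^{(i)}(s,a)}$ and Cauchy--Schwarz over $(s,a)$, giving $(1+\sqrt{2})\sqrt{SAT}$ per agent, hence the bracketed term $I\sqrt{T}[1+(1+\sqrt{2})\sqrt{SA}]$.

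The heart of the argument, and the step where communication pays off, is bounding the aggregated transition-estimation term $\sum_i\sum_k\Delta_k^{p_1(i)}$, i.e.\ controlling $\sum_i\sum_k\sum_{s,a}u_k^{(i)}(s,a)/\sqrt{N^{(i)+}_{d,k}(s,a)}$. Here I would group the agents by the clique cover $\mathbf{C}_d$ of the power graph $\mathcal{G}_d$: since every agent in a clique $\mathcal{C}$ shares its samples, its confidence count satisfies $N^{(i)+}_{d,k}(s,a)\ge N_{\mathcal{C},k}(s,a)$, and in the worst case of uniformly random exploration Lemma \ref{lemma:clique_sample_number} gives $N_{\mathcal{C},k}(s,a)=|\mathcal{C}|Hk/SA$. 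Replacing the per-agent denominator by the shared clique count and applying the summation inequality per clique and per state-action pair reduces the clique's contribution to $(\sqrt{2}+1)\sum_{s,a}\sqrt{N_{\mathcal{C}}(s,a)}$; Cauchy--Schwarz over the $SA$ pairs, using $\sum_{s,a}N_{\mathcal{C}}(s,a)=|\mathcal{C}|T$, bounds this by $(\sqrt{2}+1)\sqrt{SA|\mathcal{C}|T}$. Summing over all cliques and invoking Lemma \ref{lemma:clique_number}, $\sum_{\mathcal{C}\in\mathbf{C}_d}\sqrt{|\mathcal{C}|}=\sqrt{\bar{\chi}(\mathcal{G}_d)I}$, collapses the $I$-agent sum into $(\sqrt{2}+1)\sqrt{\bar{\chi}(\mathcal{G}_d)ISAT}$, and multiplying by the prefactor $D\sqrt{C_2S\log(2AT/\delta)}$ from Lemma \ref{lem:bound_p} produces exactly the last term $(1+\sqrt{2})DS\sqrt{C_2\bar{\chi}(\mathcal{G}_d)IAT\log(2AT/\delta)}$. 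Finally I would collect all pieces and discharge the failure probabilities by a union bound over agents and time to obtain the claimed $1-\delta$ guarantee.

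I expect the main obstacle to be precisely this clique-based counting step: one must justify that the shared-sample lower bound $N^{(i)+}_{d,k}\ge N_{\mathcal{C},k}$ is compatible with the doubling/episode-termination criterion so that the summation inequality still applies, and that replacing individual counts by clique counts and then aggregating through Lemma \ref{lemma:clique_number} indeed yields the $\sqrt{\bar{\chi}(\mathcal{G}_d)I}$ dependence rather than a naive linear-in-$I$ factor, which is what converts parallel exploration into a sub-linear group regret. A secondary subtlety is keeping straight which terms earn the joint Azuma--Hoeffding $\sqrt{I}$ improvement (the stochastic terms, whose noise is independent across agents) versus those that only admit linear-in-$I$ aggregation (notably the EVI gap, which depends on each agent's own clock $t_k$ and does not benefit from communication).
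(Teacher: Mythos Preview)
Your proposal is correct and follows essentially the same route as the paper: you reproduce the paper's decomposition via Lemma~\ref{lem:ucrl_rb}, Lemma~\ref{lem:failing_CI_regret}, Corollary~\ref{cor:fallin_regret}, and Lemma~\ref{lem:bound_p}, you aggregate the two martingale terms jointly over the $IT$ increments to obtain the $\sqrt{I}$ scaling, and you handle the transition-estimation sum by the same clique-cover argument (Lemmas~\ref{lemma:clique_number}--\ref{lemma:clique_sample_number} together with the summation inequality of Lemma~\ref{lem: inequality}) to arrive at the $\sqrt{\bar{\chi}(\mathcal{G}_d)I}$ factor. The paper's proof differs only cosmetically, e.g.\ it cites Jensen's inequality where you invoke Cauchy--Schwarz for the $\sum_{s,a}\sqrt{N(s,a)}\le\sqrt{SAT}$ step, and it passes directly to $u_{\mathcal{C},k}=\sum_{i\in\mathcal{C}}u_k^{(i)}$ before applying Lemma~\ref{lem: inequality} rather than first invoking $N^{(i)+}_{d,k}\ge N_{\mathcal{C},k}$.
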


\begin{proof}
The proof of Theorem \ref{thm:regret_bound_maucrl} stands consistently with that of Lemma \ref{lem:ucrl_rb}. However, it contains significant differences, due to the information exchange-induced distinctive empirical sample size for the transition probabilities. 

In particular, the $T$-step group regret in the MDP settings could be bounded based on \eqref{eq:group_regret} and \eqref{eq:single_agent_rb}, that is, by Lemma \ref{lem:ucrl_rb} with a high probability at least $1-\frac{\delta}{12(IT)^{5/4}}$,
\begin{align}
\operatorname{Regret}_G(T)& = \sum_{i}^I\left[T\rho^{\ast(i)}-\sum_{t=1}^T r_t^{(i)}(s_t,a_t)\right] \nonumber\\
&\leq \sum_{i}^I\sum_k^K \left[\Delta_{k,M^{(i)} \notin \mathcal{M}^{(i)}_k}^{(i)}+\Delta_{k,M^{(i)} \in \mathcal{M}^{(i)}_k}^{(i)}\right]\nonumber\\
&\quad + \sqrt{C_1IT \log ({8IT}/{\delta})}
\label{eq:bound_maucrl}
\end{align}

As elucidated in Lemma \ref{lem:failing_CI_regret}, the portion of group regret $\sum_{i}^I\sum_k \Delta_{k,M^{(i)} \notin \mathcal{M}^{(i)}_k}^{(i)}$ resulting from failed confidence regions can be bounded by $I\sqrt{T}$ with a probability of at least $\frac{\delta}{15t^6}$. 

On the other hand, for each episode $k$, under the premise that the true MDP is encompassed within the set of plausible MDPs ($\mathcal{M}_k^i, \forall k$), the term $\Delta_{k, M^{(i)} \in \mathcal{M}^{(i)}_k}^{(i)}$ is further dissected according to Corollary \ref{cor:fallin_regret}. With Lemma \ref{lem:bound_p}, we have
\begin{align}
\label{eq:ma_delta_p}
&\sum_{i}^I\sum_k^K \Delta_{k,M^{(i)} \in \mathcal{M}^{(i)}_k}^{(i)} \nonumber\\
&\leq \sum_{i}^I \sum_{k=1}^K\Delta_{k,M^{(i)} \in \mathcal{M}^{(i)}_k}^{p(i)}  +2 \sum_{i}^I\sum_k^K\sum_{s, a} \frac{u_k^{(i)}(s, a)}{\sqrt{N_{k}^{(i)+}(s, a)}}\nonumber\\
&\leq D \sqrt{C_2S \log \left({2AT}/{\delta}\right)} \sum_{k=1}^K \sum_{\mathcal{C} \in \mathbf{C}_{d,k}} \sum_{i \in \mathcal{C}}  \sum_{s, a} \frac{u_k^{(i)}(s, a)}{\sqrt{ N_{\mathcal{C},k}^+(s, a)}} \nonumber\\
&+DSAI \log _2\left({8T}/{S A}\right) 
+D \sqrt{4C_1TI \log \left({8TI}/{\delta}\right)} \nonumber\\
& +2 \sum_{i}^I\sum_{k=1}^K\sum_{s, a} \frac{u_{k,M^{(i)} \in \mathcal{M}^{(i)}_k}^{(i)}(s, a)}{\sqrt{N_{k}^{(i)+}(s, a)}}.
\end{align}

Leveraging the definition that $N_{k+1}^{(i)+}(s, a) = \max \left\{1, \sum_{i=1}^k u_{k}^{(i)}(s, a)\right\}$, % where $0 \leq u_{k}^{(i)}(s, a) \leq N_{k}^{(i)+}(s, a)$  and $\sum_{(s,a)}N_{k}^{(i)+}(s,a)=kH$, 
and applying Jensen’s inequality and the inequality in Lemma \ref{lem: inequality} in Appendix, we obtain the following results
\begin{align}
    \sum_{i}^I\sum_{k=1}^K\sum_{s, a} \frac{u_{k,M^{(i)} \in \mathcal{M}^{(i)}_k}^{(i)}(s, a)}{\sqrt{N_{k}^{(i)+}(s, a)}} \leq(\sqrt{2}+1)I\sqrt{SAT}.
\end{align}
Furthermore, by Lemma \ref{lemma:clique_sample_number}, $\sum_{(s,a)}N_{\mathcal{C}, k}^+(s, a)=|\mathcal{C}|Hk$. Thus, we have
\begin{align}
\label{eq:modified_visit}
   &\sum_{k=1}^K \sum_{\mathcal{C} \in \mathbf{C}_{d,k}}    \sum_{s, a} \frac{\sum_{i \in \mathcal{C}}u_k^{(i)}(s, a)}{\sqrt{ N_{\mathcal{C},k}^+(s, a)}}\nonumber\\
    &\leq \sum_{k=1}^K \sum_{\mathcal{C} \in \mathbf{C}_{d,k}} \sum_{s, a}  \frac{u_{\mathcal{C},k}(s, a)}{\sqrt{N_{\mathcal{C},k}^+(s, a)}}\nonumber\\
& \leq \sum_{\mathcal{C} \in \mathbf{C}_d} (\sqrt{2}+1)\sqrt{|\mathcal{C}|SAKH}\nonumber\\
&= (\sqrt{2}+1)\sqrt{\bar{\chi}\left(\mathcal{G}_d\right)ISAT},
\end{align}
where $u_{\mathcal{C},k}(s, a):= \sum_{i \in \mathcal{C}} u_k^{(i)}(s, a)$  represents the frequency of visits within the clique $\mathcal{C}$  to a state-action pair after the communications at the end of episode $k$.

In summary, we conclude the proof of Theorem \ref{thm:regret_bound_maucrl} and establish that the total regret is bounded by \eqref{eq:groupregret_maucrl} with a probability of $1-\frac{\delta}{4 T^{5/4}}$. When considering all values of $T=2, \ldots$, it is evident that this bound holds simultaneously for all $T \geq 2$ with a probability of at least $1-\delta$.
\end{proof}
Theorem \ref{thm:regret_bound_maucrl} unveils the group regret bound and demonstrates that concerning $T$, a sub-linear increase in the group regret bound is attained in \eqref{eq:groupregret_maucrl}. On the contrary, in the standard non-communicative reinforcement learning setting for the same type of algorithm, the group regret is essentially equal to the sum of the single-agent regrets, as implied in Lemma \ref{lem:ucrl_rb}.
 
\section{Experimental Settings and Numerical Results}
\label{sec: numerical experiments}
\subsection{Experimental Settings}
\begin{figure*}[!t]
	\centering
        \captionsetup{font=small}
	\subfigure[``Figure Eight" loop]{
		\begin{minipage}[b]{0.21\linewidth}
			\centering
			\includegraphics[height=1.3in]{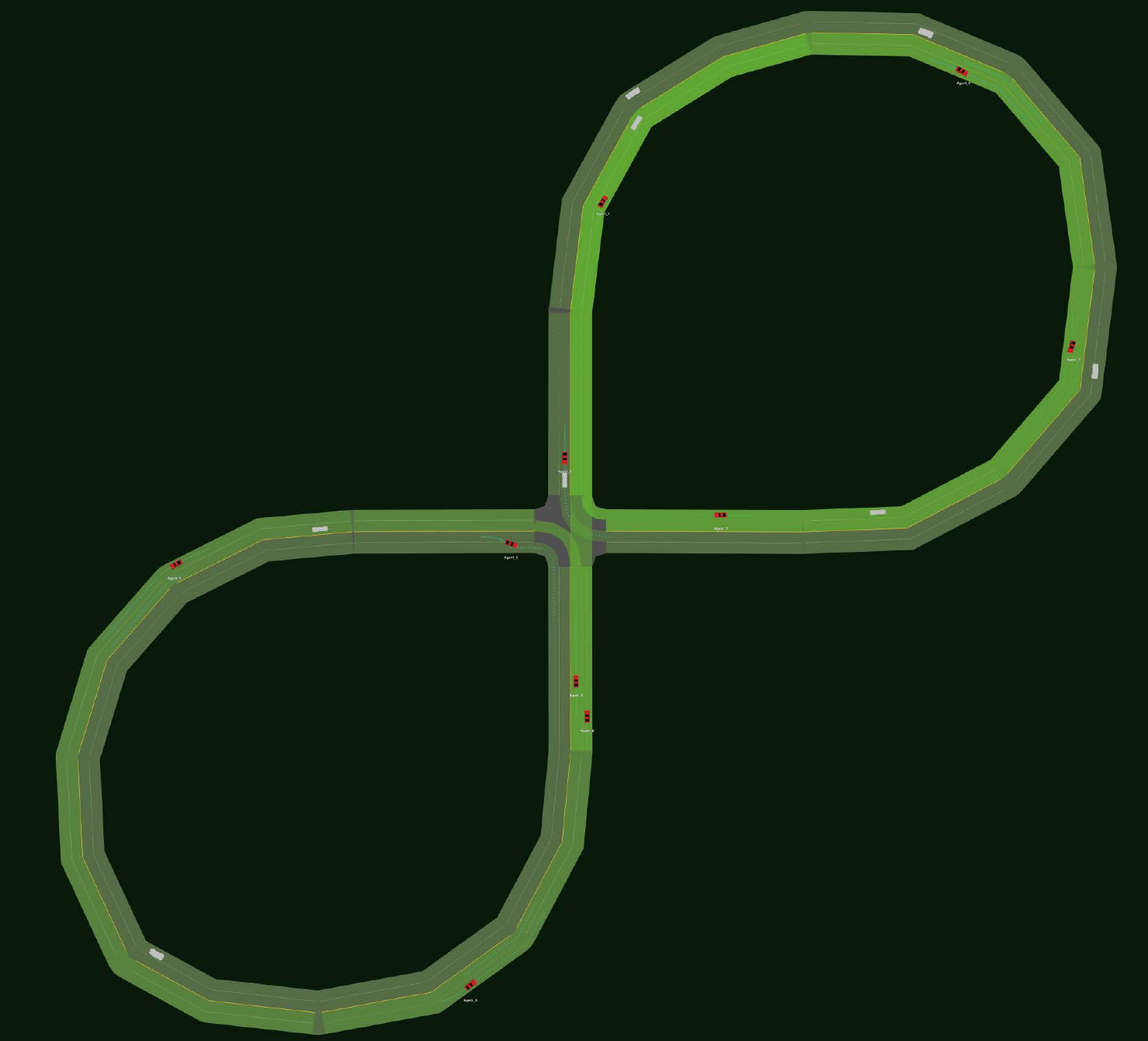}
            \label{subfig:aerial_view}
		\end{minipage}
	}%
	\subfigure[Single-lane ``Unprotected Intersection'']{
		\begin{minipage}[b]{0.34\linewidth}
			\centering
			\includegraphics[height=1.3in]{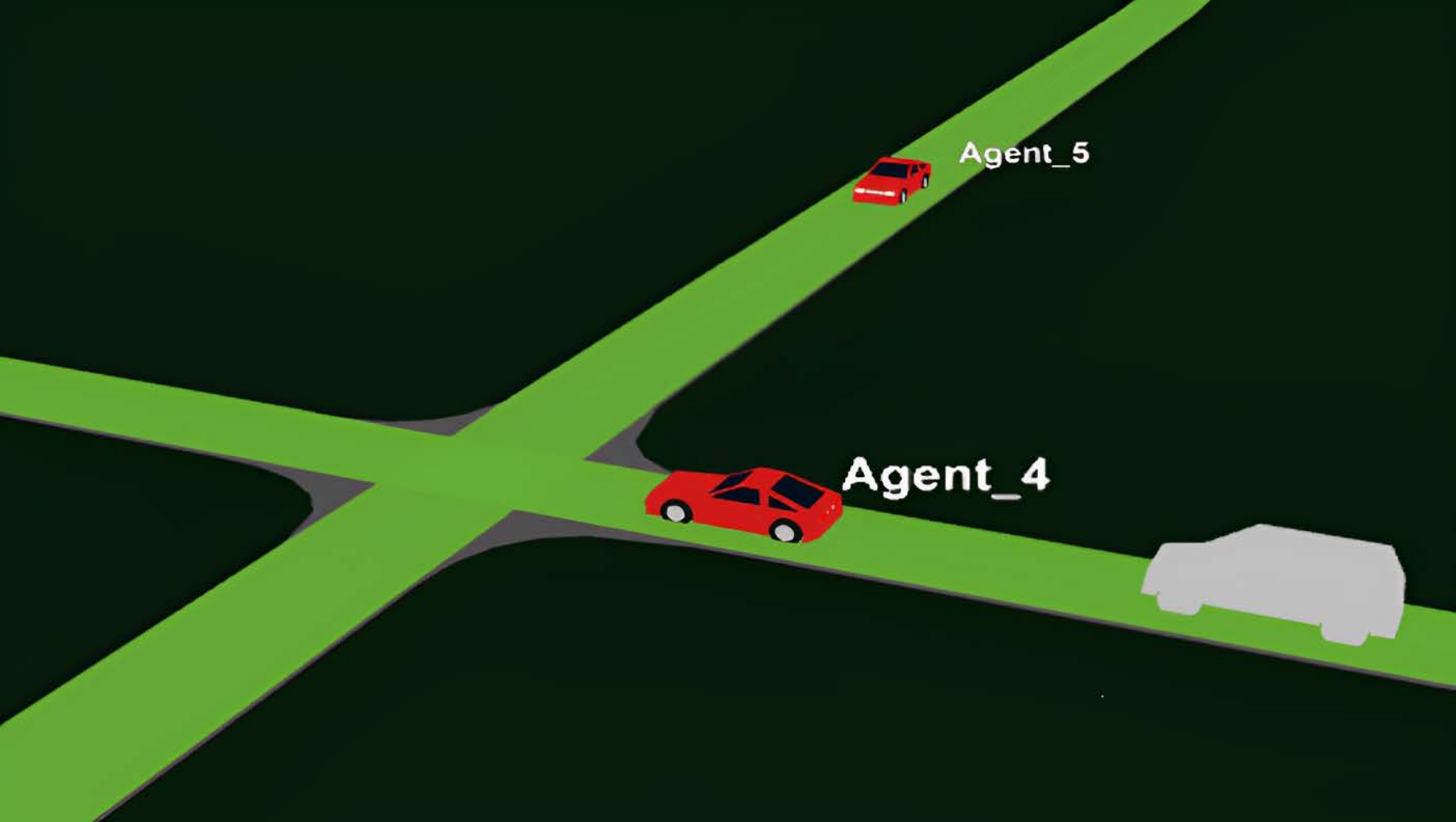}
            \label{subfig:regional_view_single_lane}
		\end{minipage}
	}%
    \subfigure[Multi-lane ``Unprotected Intersection'']{
		\begin{minipage}[b]{0.37\linewidth}
			\centering
			\includegraphics[height=1.3in]{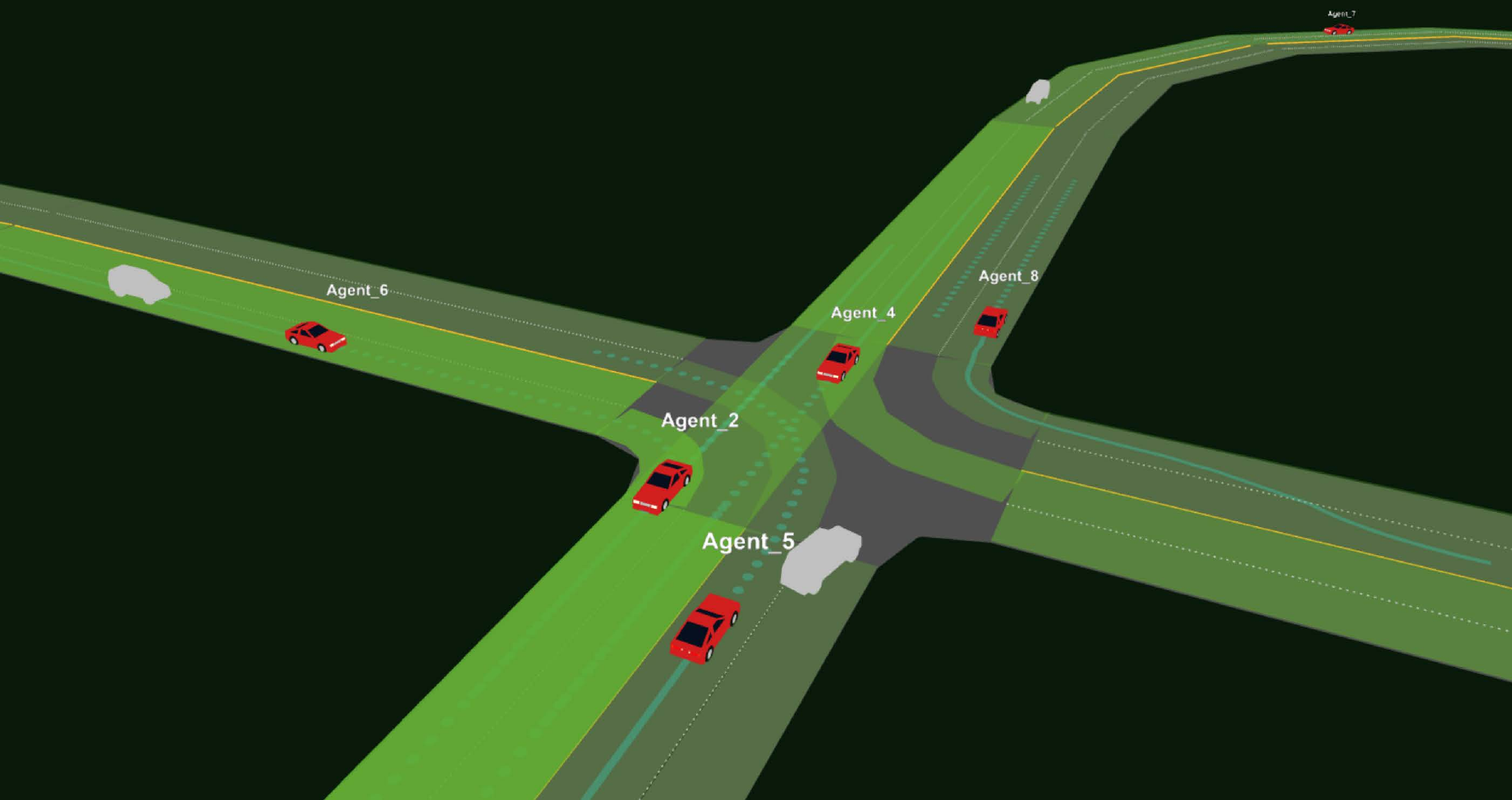}
            \label{subfig:regional_view_multi_lane}
		\end{minipage}
	}%
	\centering
	\caption{The ``Unprotected Intersection'' scenario in the closed single-lane and multi-lane ``Figure Eight" loop for Simulations. (a) presents an aerial view of the ``Figure Eight" loop, while (b) and (c) provide the regional enlarged view of the single-lane and multi-lane ``Unprotected Intersection'', respectively.}
 \label{fig3}
\end{figure*}

In this section, we evaluate the performance of \texttt{MA-PETS} within the domain of autonomous vehicle control and demonstrate the superiority of our proposed algorithm over several other state-of-the-art RL methods, including \texttt{FIRL}~\cite{FIRL},  \texttt{SVMIX}~\cite{SVMIX}, \texttt{MAPPO}~\cite{MAPPO}, \texttt{MADDPG}~\cite{MADDPG}, \texttt{DQN}~\cite{mnih2015human}, \texttt{SAC}~\cite{haarnoja2018soft}. Specifically, we run our experiments using the CAV simulation platform SMARTS~\cite{zhou2020smarts} and select the ``Unprotected signal-free Intersection'' scenario in the closed single-lane and multi-lane ``Figure Eight" loop, which is a typical mixed autonomy traffic scenario as illustrated in Fig. \ref{fig3}, for the evaluation. In these experiments, we deploy a fixed number (i.e., $I$) CAVs respectively, managed by our \texttt{MA-PETS} algorithm, and a random number $I_\text{hv}$ of human-driven vehicles (HVs) controlled by the environment within the SMARTS framework. All CAVs will start from a one-way lane with an intersection, and drive circularly by passing through the intersection while avoiding collisions and congestion. Following the MDP defined in Section \ref{sec: preliminaries}, the corresponding MDP for CAVs in “Unprotected Intersection” is defined as below.
\begin{itemize}
    \item \emph{State and Observation}: Except for information about the vehicle itself, each CAV can only observe the information of the vehicle ahead and behind. Hence, it has the information about current state containing the velocity $v_t \in \mathbb{R}$, position $z_t=\left(x_t, y_t\right) \in \mathbb{R}^2$ of itself, the speed and distance of the vehicles ahead and behind $v_{t, a},v_{t,e},l_{t, a},l_{t,e} \in \mathbb{R}$. Hence, as mentioned earlier, the state of vehicle $i$ can be represented as $s_{t}^{(i)}=\left(v_t^{(i)},x_t^{(i)}, y_t^{(i)},v_{t,a}^{(i)},v_{t,e}^{(i)},l_{t,a}^{(i)},l_{t,e}^{(i)} \right)$.
     \item \emph{Action}: As each CAV is controlled via the target velocity $v \in \mathbb{R}$ and the special actions indicating whether to make a lane change $c \in \{0,1\}$, decided by itself, the action of vehicle $i$ can be represented as $a_t^{(i)}=\{v_t^{(i)},c_t^{(i)}\}$. Besides, for the single-lane scenario, $c$ nulls.
    \item \emph{Reward}: CAVs aim to maintain a maximum velocity based on no collision. Accordingly, the reward function $\mathcal{R}^{(i)}$ can be defined as 
    \begin{align}
        \label{eq:rewardfunction}
        r_t^{(i)} =\mathcal{R}^{(i)}\left(s_t^{(i)},a_t^{(i)}\right) =v_t^{(i)}+v_{t,a}^{(i)}+v_{t,e}^{(i)}+\beta,
    \end{align}
    where the extra term $\beta$ imposes a penalty term on collision, that is, $\beta=-10$ if a collision occurs; and it nulls otherwise. 
\end{itemize}

Notably, given the definition of the reward function in \eqref{eq:rewardfunction}, a direct summation of the rewards corresponding to all agents could lead to the computation duplication of some vehicles, thus misleading the evaluation. Therefore, we develop two evaluation metrics from the perspective of system agility and safety. In particular, taking account of the cumulative travel distance ${\Lambda}x_t^{(i)}$ of vehicle $i$ at time-step $t$ and the lasting-time $t_c$ before the collision, we define 
\begin{align}
\label{eq:utilityMetric}
    {\rm Agility}\left(K\right)=\mathbb{E}_{\boldsymbol{\pi} \sim \tilde{\mathcal{P}}_k}\frac{1}{I} \sum\nolimits_{i=1}^I \frac{1}{t_c^{(i)}}\sum\nolimits_{t=0}^{t_c^{(i)}} {\Lambda}x_t^{(i)}
\end{align} 
and 
\begin{align}
\label{eq:safetyMetric}
    {\rm Safety}\left(K\right) =\mathbb{E}_{\boldsymbol{\pi} \sim \tilde{\mathcal{P}}_k}\frac{1}{I} \sum\nolimits_{i=1}^I \frac{t_c^{(i)}}{H},
\end{align} 
where $\tilde{\mathcal{P}}_k$ ($\forall k\in[K]$) refers to a learned dynamics model until the episode $k$.

In our configuration, the number of episodes is $K=15$, and in the event of no collisions, the length of each episode can reach up to $H=200$ for single-lane scenarios and $H=400$ for multi-lane scenarios, respectively. All of our experiments are conducted on the NVIDIA GeForce RTX 4090 with $5$ independent simulations. Moreover, typical parameters, including the simulation environment setup, deep neural network, and \texttt{MA-PETS}, are summarized in Table \ref{tb: setting}.

\begin{table}[h!]
    \caption{\centering{List of Key Parameter Settings for the Simulation.}}%居中标题
    \begin{center}
    \label{tb: setting}
    \begin{threeparttable} 
            \begin{tabular}{m{4.8cm} |m{3.2cm} } 
            \hline \textbf{Parameters Description} & \textbf{Value} \tabularnewline
            \hline \textbf{Simulator} & \tabularnewline
            Discrete-time step  &  $\tau =0.1 \mathrm{~s}$ \tabularnewline
            Vehicle numbers  & $I=8$ \text{or} $22$ \tabularnewline
            Human-driving Vehicles & $I_\text{hv} \in [8,20]$ \tabularnewline
            Distance of ``Figure Eight'' scenario  & $480\mathrm{~m}$ \tabularnewline
            Maximum velocity of ``Figure Eight'' scenario & $v_{\max } = 13.89 \mathrm{~m} / \mathrm{s}$ \tabularnewline
            Minimum velocity of ``Figure Eight'' scenario & $v_{\min }= 0 \mathrm{~m} / \mathrm{s}$ \tabularnewline
            \hline \textbf{MA-PETS} & \tabularnewline
             Speed per vehicle at time-step $t$ & $v_t \in [0,13.89]\mathrm{~m} / \mathrm{s}$ \tabularnewline
             Position per vehicle at $t$ & $z_t \in \mathbb{R}^2$ \tabularnewline
             Target velocity per vehicle at $t$ & $\bar{v}_{t} \in [0,13.89]\mathrm{~m}$ \tabularnewline
             Speed of vehicles \underline{a}head and b\underline{e}hind & $v_{t,a},v_{t,e} \in [0,13.89]\mathrm{~m} / \mathrm{s}$ \tabularnewline
             Distance of vehicles \underline{a}head and b\underline{e}hind & $l_{t,a},l_{t,e}\in [0,75]\mathrm{~m}$ \tabularnewline
             Travel distance per vehicle at $t$ & ${\Lambda}x_t^{(i)} \in \mathbb{R}$ \tabularnewline
             No. of episodes & $K=15$ \tabularnewline
             Maximal length per episode & $H=200$ \text{or} $400$ \tabularnewline
             Communication range  & $d \in [0,200]\mathrm{~m}$ \tabularnewline
             Horizon of MPC  & $w=25$ \tabularnewline
             Candidate action sequences in CEM & $Q=400$ \tabularnewline
             Elite candidate action sequences in CEM & $X=40$ \tabularnewline
             No. of particles & $P=20$ \tabularnewline
             Max iteration of CEM & $Y=5$ \tabularnewline
             Proportion of elite candidate action sequences & $\alpha=10$\tabularnewline
             Accuracy of CEM Optimizer & $\epsilon=0.001$ \tabularnewline
             Seed number & $5$ \tabularnewline
             \hline \textbf{Probabilistic Ensemble Neural Network} & \tabularnewline
             No. of Ensembles for dynamics Model & $B=5$ \tabularnewline
             Hidden layer number & $3$ \tabularnewline
             Hidden units number & $300$ or $400$ \tabularnewline
             Buffer length & $2, 048$ \tabularnewline
             Batch size  & $128$ \tabularnewline
             Epochs  & $5$ \tabularnewline
             Learning rate  & $lr=0.001$ \tabularnewline
             Optimizer & Adam \tabularnewline
            \hline
            \end{tabular}
    \begin{tablenotes}
        \footnotesize
        \item $\ast$ The parameters setting used only by \texttt{MA-PETS}.
      \end{tablenotes}
  \end{threeparttable}
  \end{center}
\end{table}
\subsection{Numerical Results}
Firstly, we evaluate the performance of \texttt{MA-PETS} for CAVs with $I =8$ and communication range $d=100$ in the single-lane scenario, and present the performance comparison with MFRL algorithms in Fig. \ref{utility_single_lane}. Besides, we testify the real-time performance for each step of the $10$-th episode in the simulation process in Fig. \ref{real-time}. It can be observed from Fig. \ref{utility_single_lane} and Fig. \ref{real-time} that our algorithm \texttt{MA-PETS} significantly yields superior outcomes than the others in terms of both agility and safety. More importantly, \texttt{MA-PETS} converges at a faster pace.

\begin{figure}[!h]
	\centering  %图片全局居中
         \subfigcapskip=-5pt %设置子图与子标题之间的距离
        % \subfigbottomskip=-5pt %设置子图之间的距离
	\subfigure[Agility]{
		\includegraphics[width=0.48\linewidth]{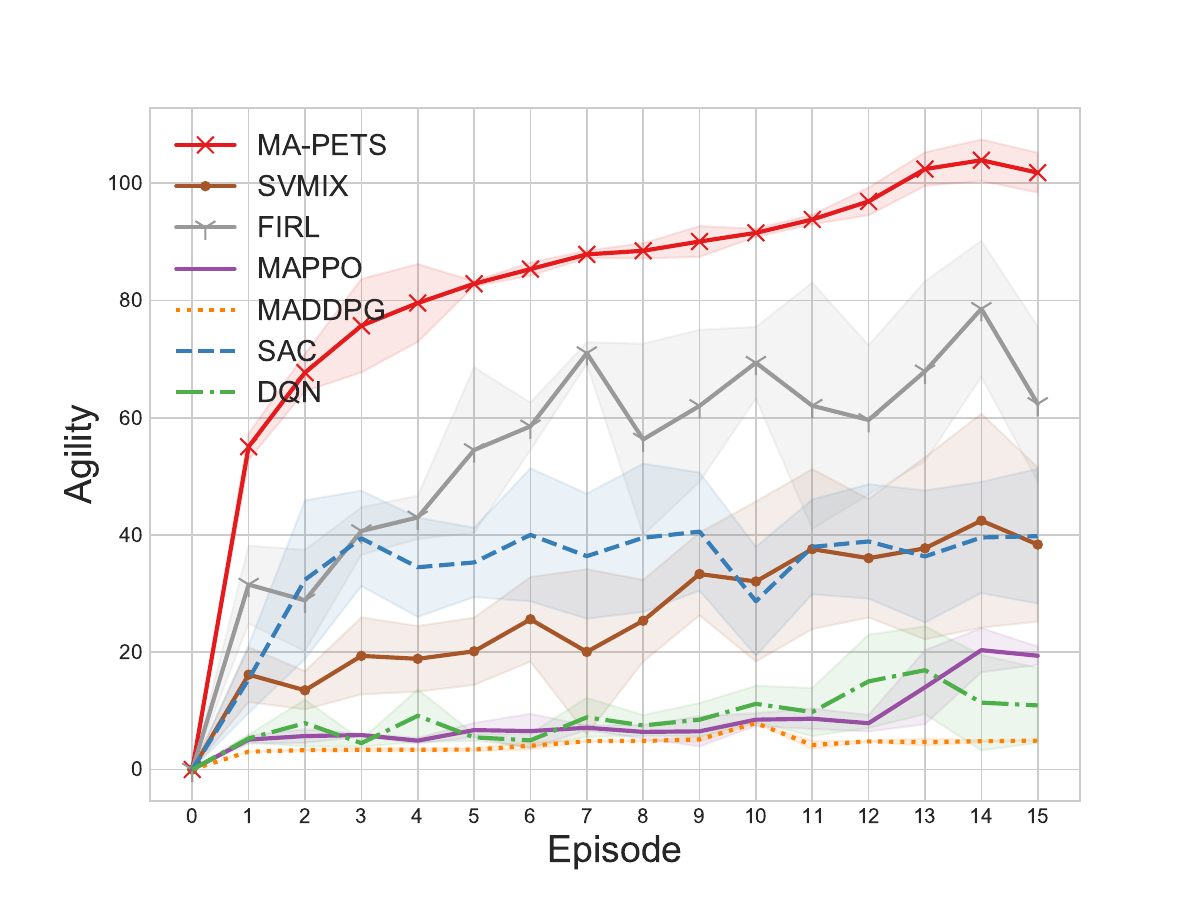}}
	\subfigure[Safety]{
		\includegraphics[width=0.48\linewidth]{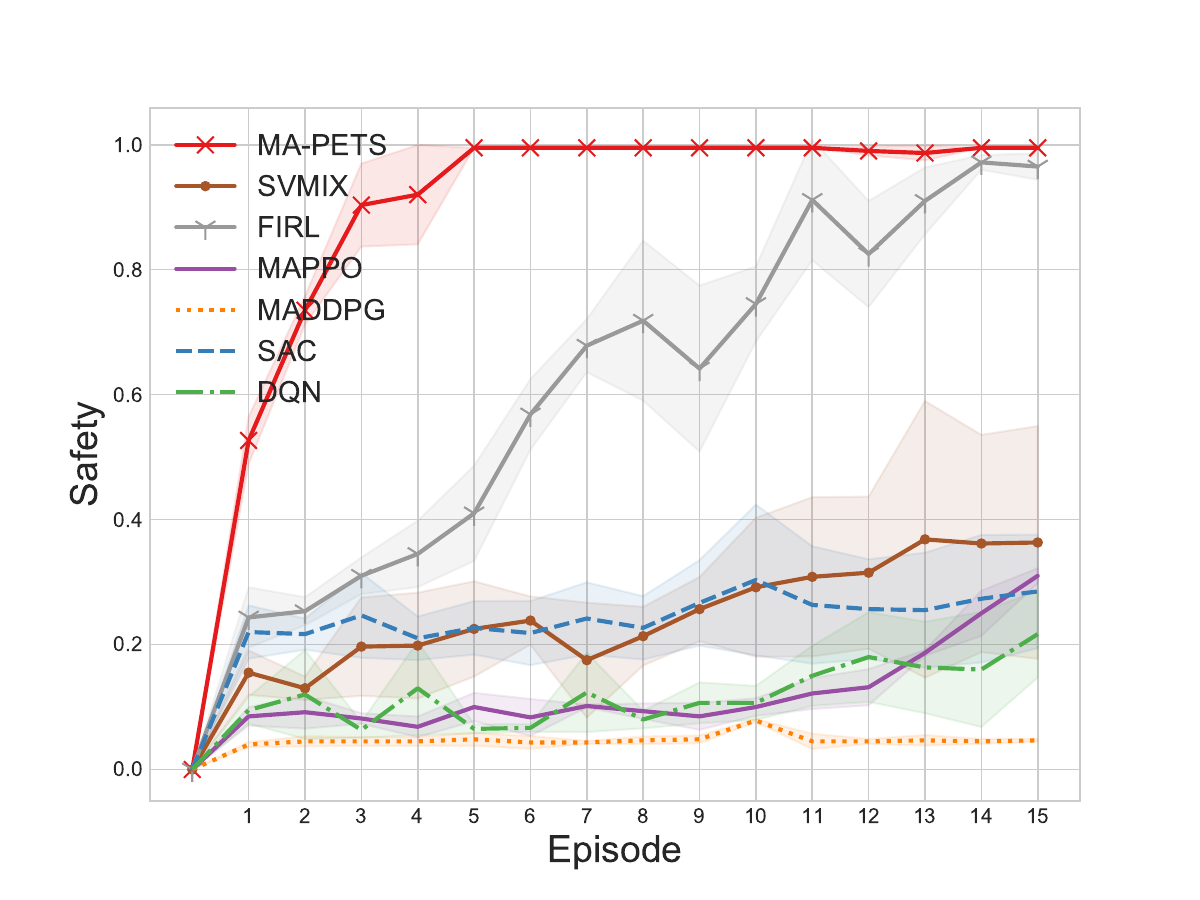}}
	\caption{Comparison of utility in the single-lane ``Unprotected Intersection'' scenario.}
     \vspace{-15pt}
\label{utility_single_lane}
\end{figure}

\begin{figure}[!h]
	\centering  %图片全局居中
         \subfigcapskip=-5pt %设置子图与子标题之间的距离
         % \subfigbottomskip=-5pt %设置子图之间的距离
	\subfigure[Agility]{
		\includegraphics[width=0.48\linewidth]{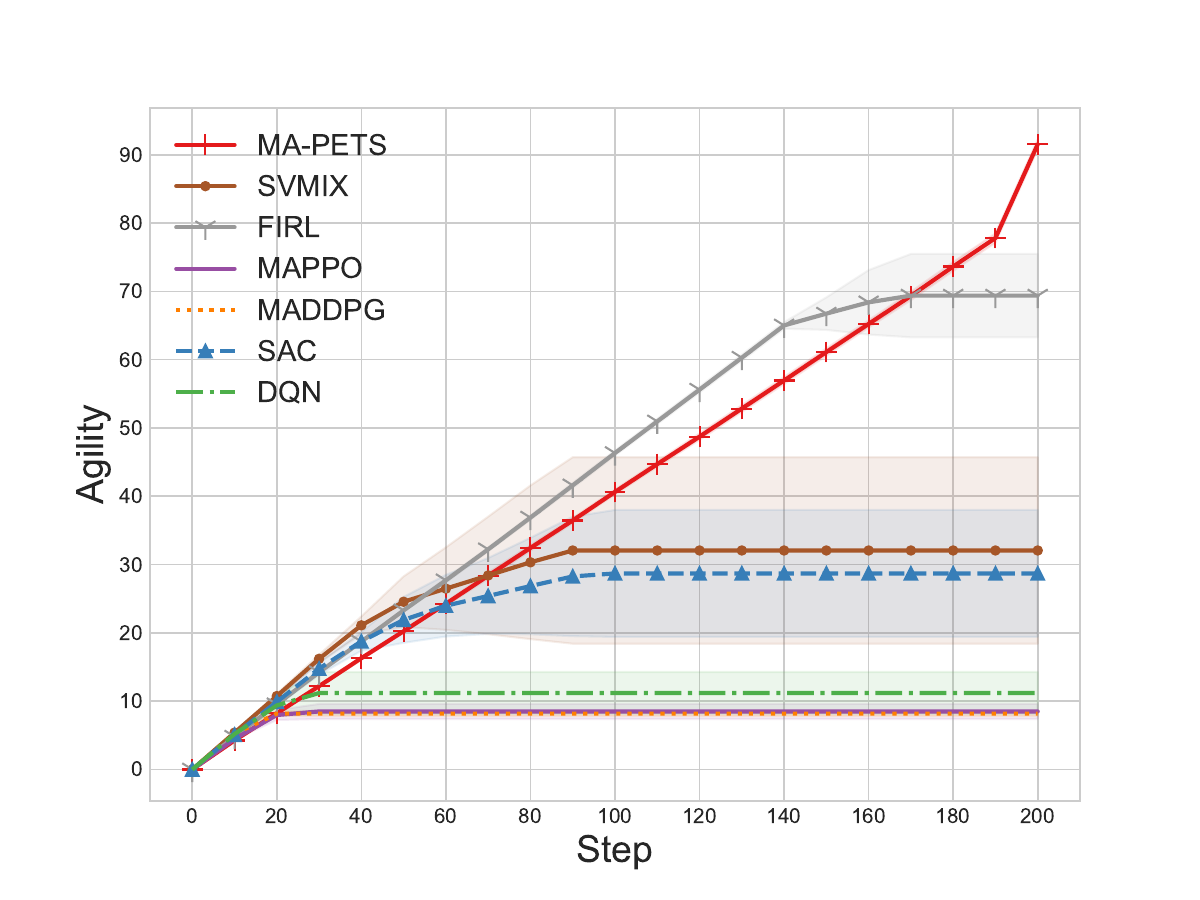}}
	\subfigure[Safety]{
		\includegraphics[width=0.48\linewidth]{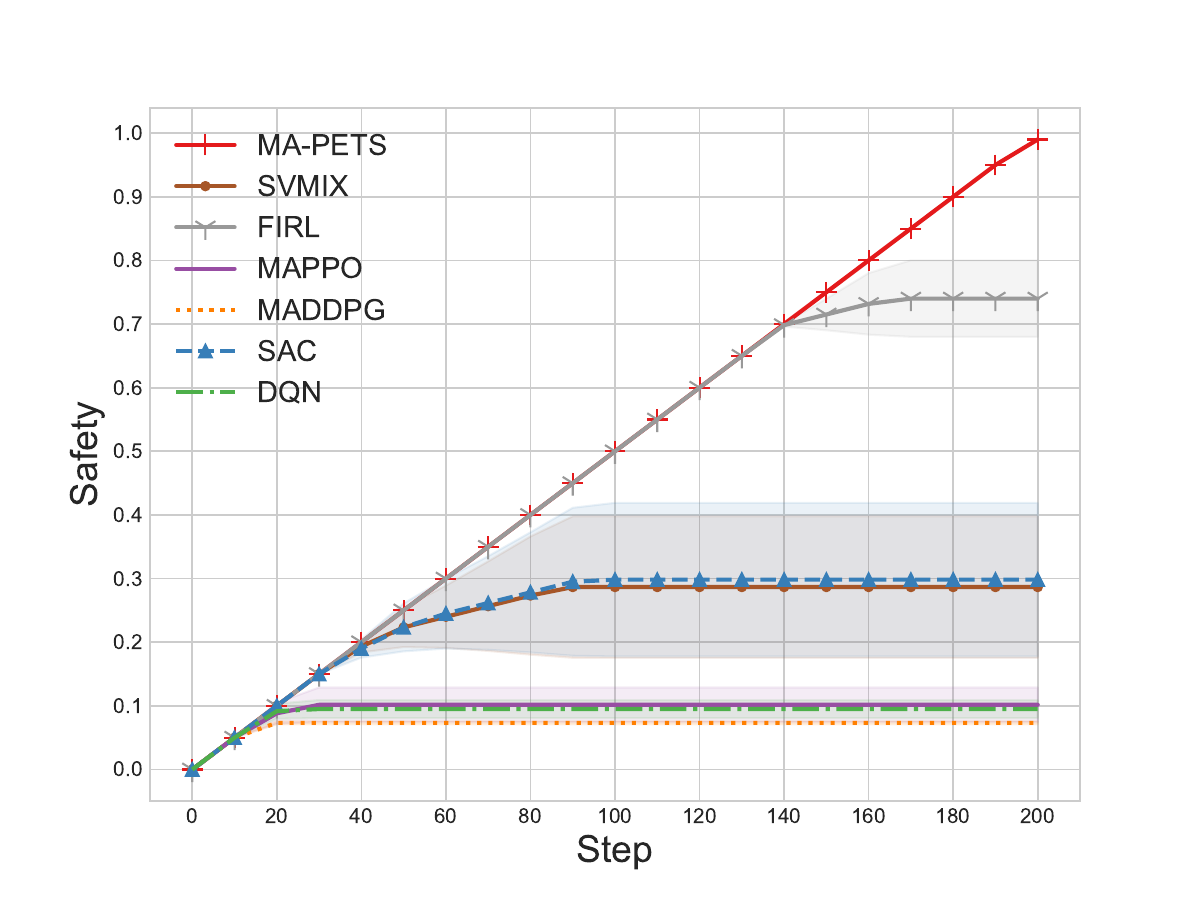}}
	\caption{Real-time utility comparison for each step of the $10$-th episode under the single-lane ``Unprotected Intersection'' scenario.}
     \vspace{-15pt}
\label{real-time}
\end{figure}

Building upon these single-lane results, we further increase the complexity of the simulation scenarios by establishing a multi-lane intersection without traffic lights and utilizing $I=22$ CAVs controlled by our algorithm \texttt{MA-PETS}, as depicted in Fig. \ref{utility_multi_lane}. In this setting, we perform a comparative evaluation of our algorithm against a suite of state-of-the-art RL methods with a communication range $d=100$. As can be observed from Fig. \ref{utility_multi_lane}, while our algorithm may initially exhibit a marginally reduced efficacy in terms of collision avoidance relative to \texttt{MAPPO}, \texttt{SVMIX}, and \texttt{SAC}, it nearly achieves a collision-free state by the $13$-th episode. Simultaneously, our algorithm \texttt{MA-PETS} notably excels in agility and achieves convergence at a significantly accelerated pace compared to other algorithms.

\begin{figure}[!t]
	\centering  %图片全局居中
         \subfigcapskip=-5pt %设置子图与子标题之间的距离
        % \subfigbottomskip=-5pt %设置子图之间的距离
	\subfigure[Agility]{
		\includegraphics[width=0.48\linewidth]{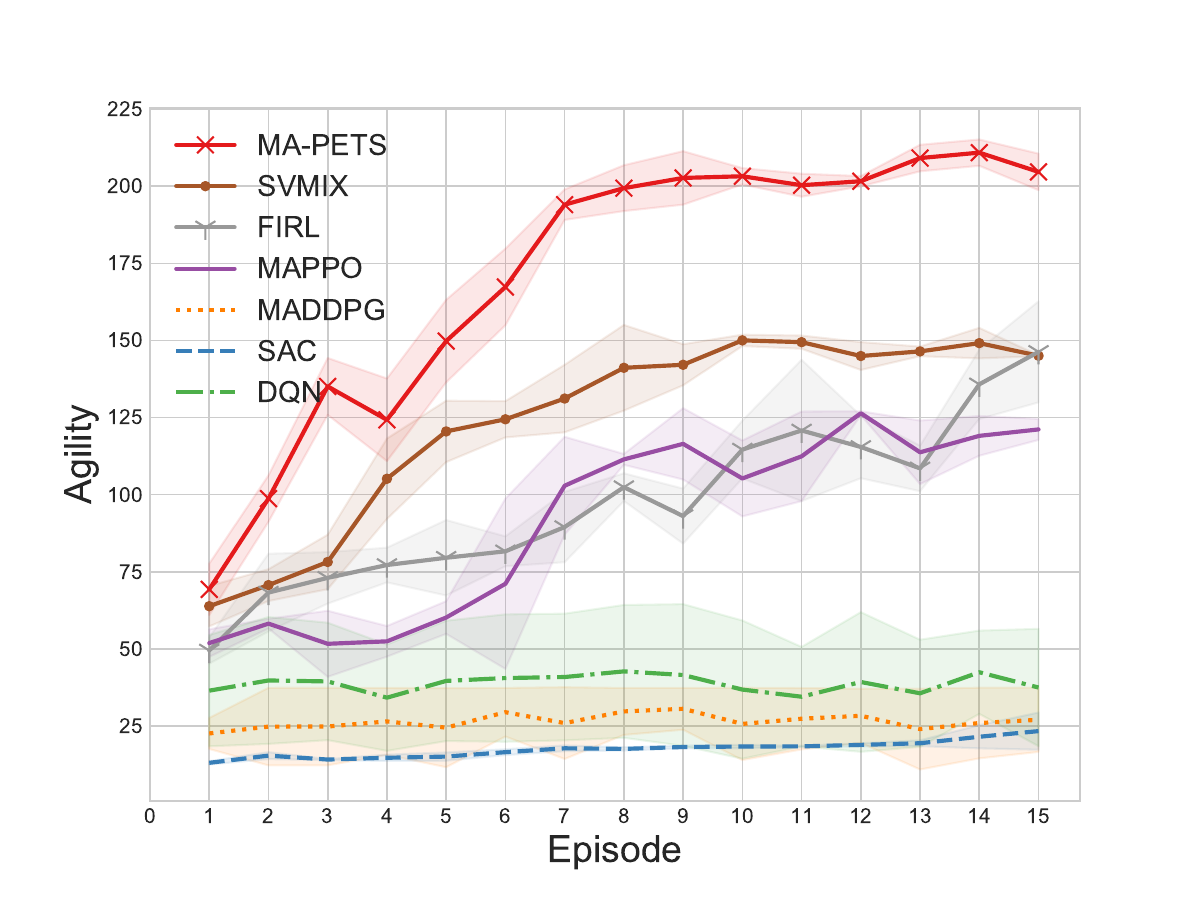}}
	\subfigure[Safety]{
		\includegraphics[width=0.48\linewidth]{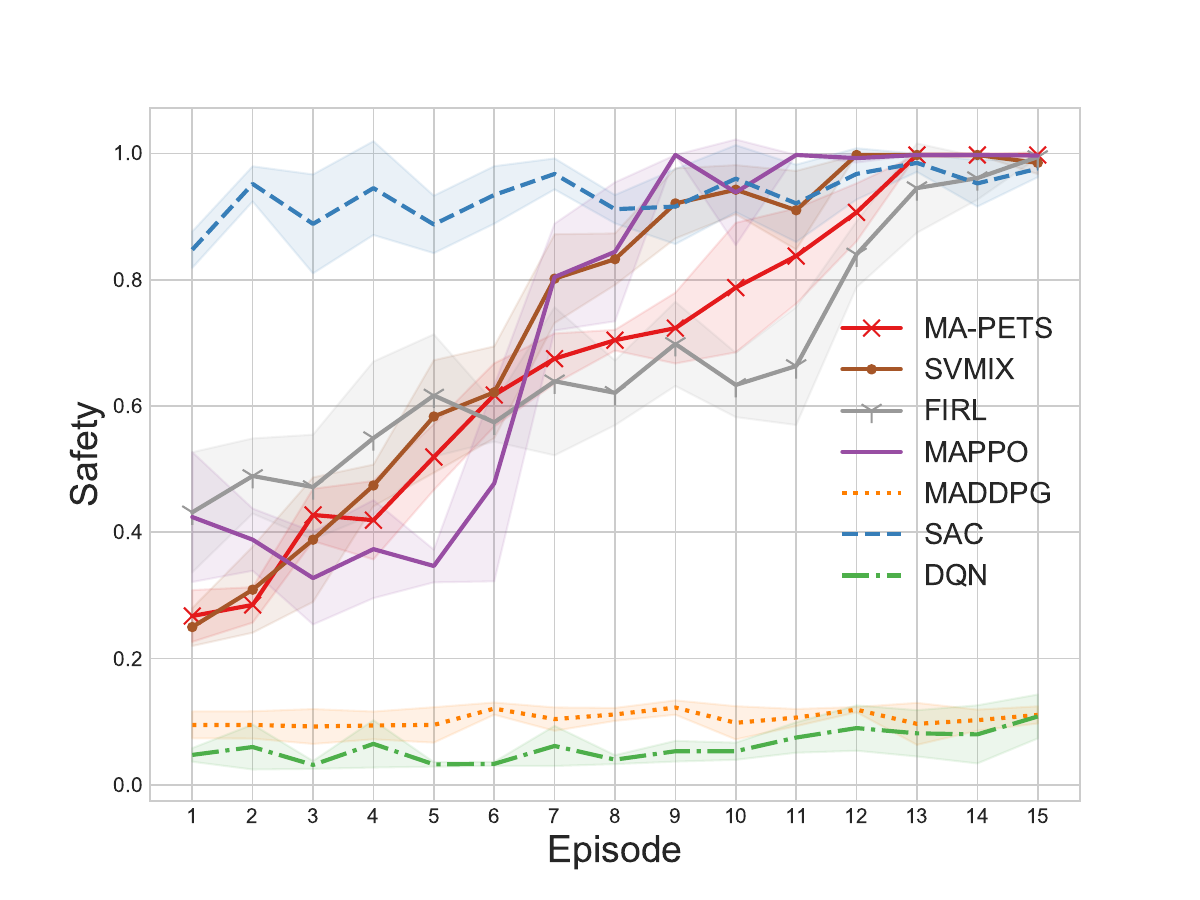}}
	\caption{Comparison of utility in the multi-lane ``Unprotected Intersection'' scenario.}
     \vspace{-5pt}
\label{utility_multi_lane}
\end{figure}

To validate the rationality of Assumption \ref{assump: quantization} and the feasibility of \texttt{MA-PETS} after quantization, we conduct supplementary simulation experiments. We compare the performance of \texttt{MA-PETS} using the tile coding method \cite{sutton2018reinforcement} to discretize our six-dimensional continuous state space and two-dimensional continuous action space with \texttt{MA-PETS} operating in continuous state and action spaces within a single-lane ``Figure Eight" loop scenario, depicted in Fig.\ref{simulation_discretization}. It can be observed that although the discretized \texttt{MA-PETS} algorithm after quantization exhibits larger fluctuations during the learning process, with the increase in the number of training episodes (e.g., post-$15$ episodes), the \texttt{MA-PETS} with discretization can also achieve performance similar to the continuous \texttt{MA-PETS} in terms of agility and safety. This demonstrates that \texttt{MA-PETS} remains effective even in a discretized setting after quantization.
\begin{figure}[!t]
    \centering
    \subfigcapskip=-5pt %设置子图与子标题之间的距离
        % \subfigbottomskip=-5pt %设置子图之间的距离
    \subfigure[Agility]{
		\includegraphics[width=0.48\linewidth]{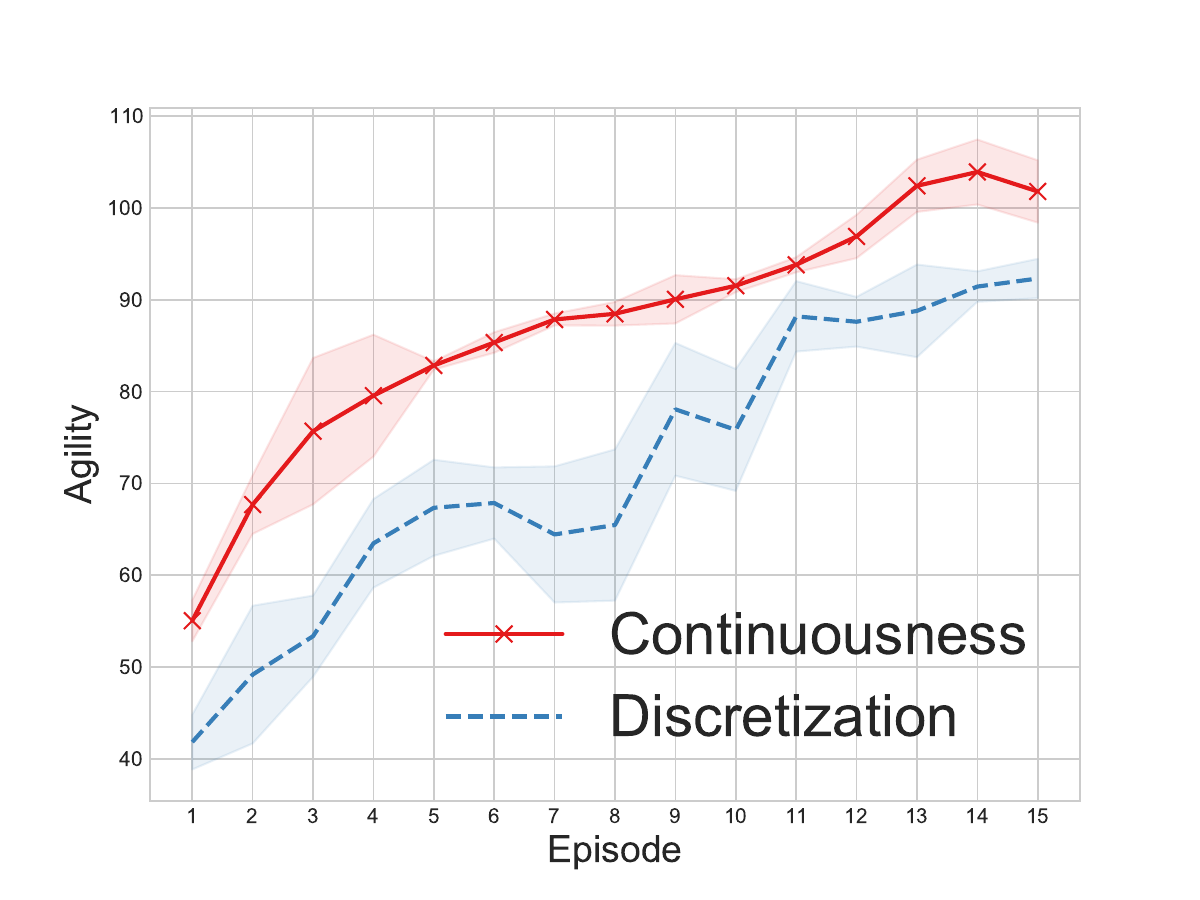}}
  \subfigure[Safety]{
		\includegraphics[width=0.48\linewidth]{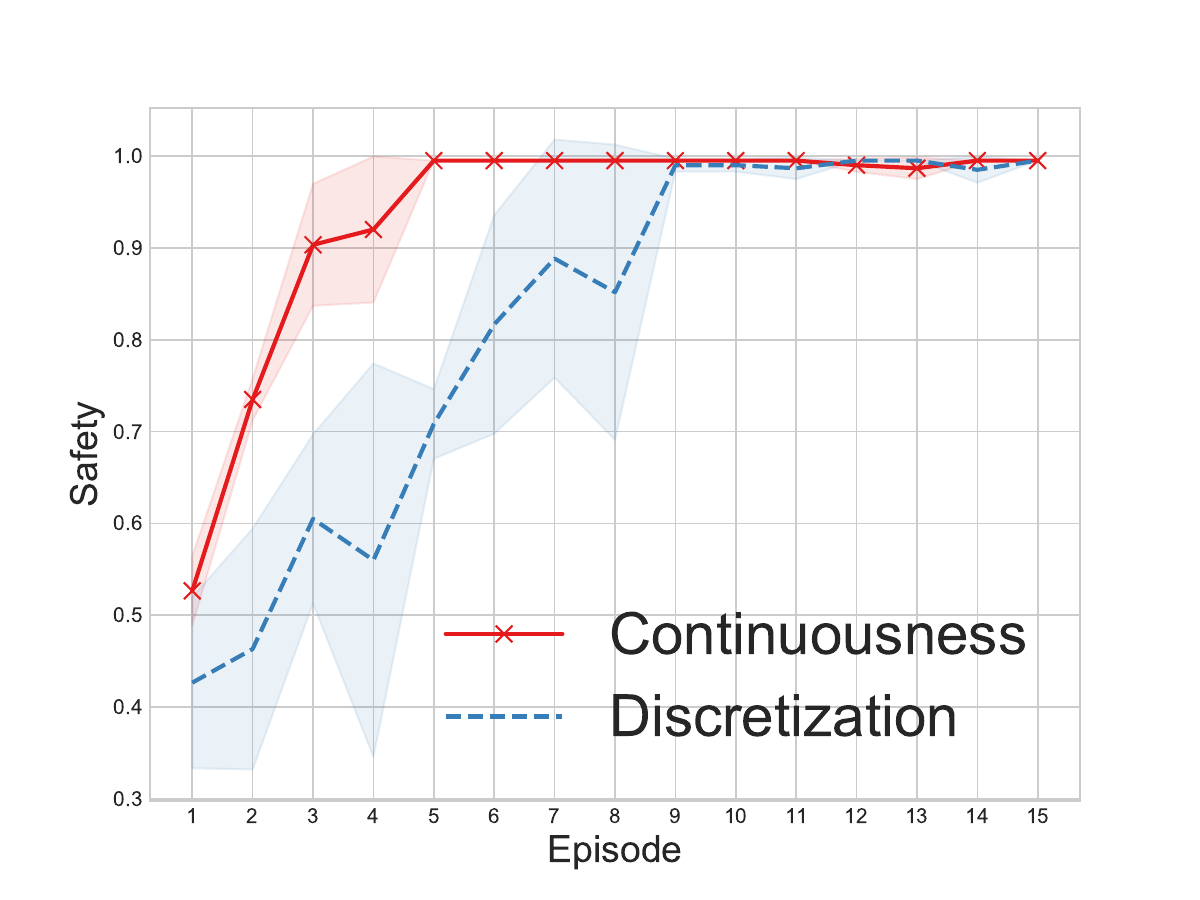}}
    \caption{ Performance comparison of the \texttt{MA-PETS} with continuousness and discretization at the single-lane “Unprotected Intersection” scenario.}
    \vspace{-15pt}
    \label{simulation_discretization}
\end{figure}

Furthermore, we show the performance of \texttt{MA-PETS} concerning different values of communication range $d$ by varying from $0$ to $200$ in Fig. \ref{fig_d} and Table \ref{tb: communications overhead}. It can be observed from Table \ref{tb: communications overhead} that consistent with our intuition, the increase in communication range leads to a significant boost of the communications overheads. Meanwhile, as depicted in Fig. \ref{fig_d}, it also benefits the learning efficiency of the CAVs, thus greatly upgrading agility and safety. On the other hand, Fig. \ref{fig_d} also unveils that when the communication range increases to a certain extent, a further increase of $d$ contributes trivially to the agility and safety of learned policies. In contrast, Table \ref{tb: communications overhead} indicates an exponential increase in the average communications overhead. In other words, it implies a certain trade-off between the learning performance and communication overheads. Consistent with the group regret bound detailed in Section \ref{sec: convergence proof}, we further investigate the numerical interplay between the minimum number of clique covers $\bar{\chi}\left(\mathcal{G}_{d,k}\right)$ and communication distance $d$ in $15$ independent trials. It can be observed from Fig. \ref{clique_cover} that while the minimum number of clique covers remains constant beyond a specific communication range threshold, a generally inverse correlation exists between the minimum clique cover number and the communication radius, which aligns with both our theoretical proofs and the experimental findings presented in Fig. \ref{fig_d}.
\begin{figure}
	\centering  %图片全局居中
         \subfigcapskip=-5pt %设置子图与子标题之间的距离
         % \subfigbottomskip=-5pt %设置子图之间的距离
	\subfigure[Agility]{
		\includegraphics[width=0.48\linewidth]{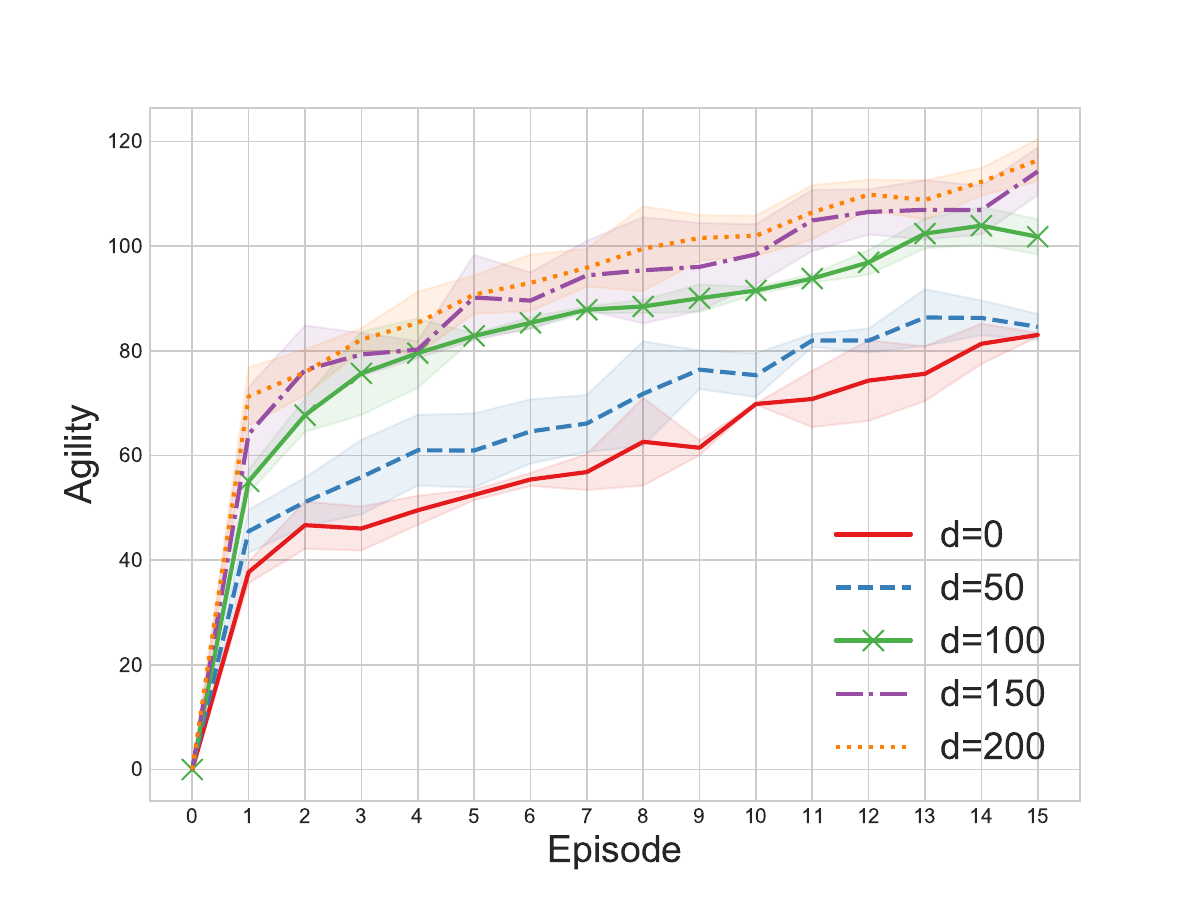}}
	\subfigure[Safety]{
		\includegraphics[width=0.48\linewidth]{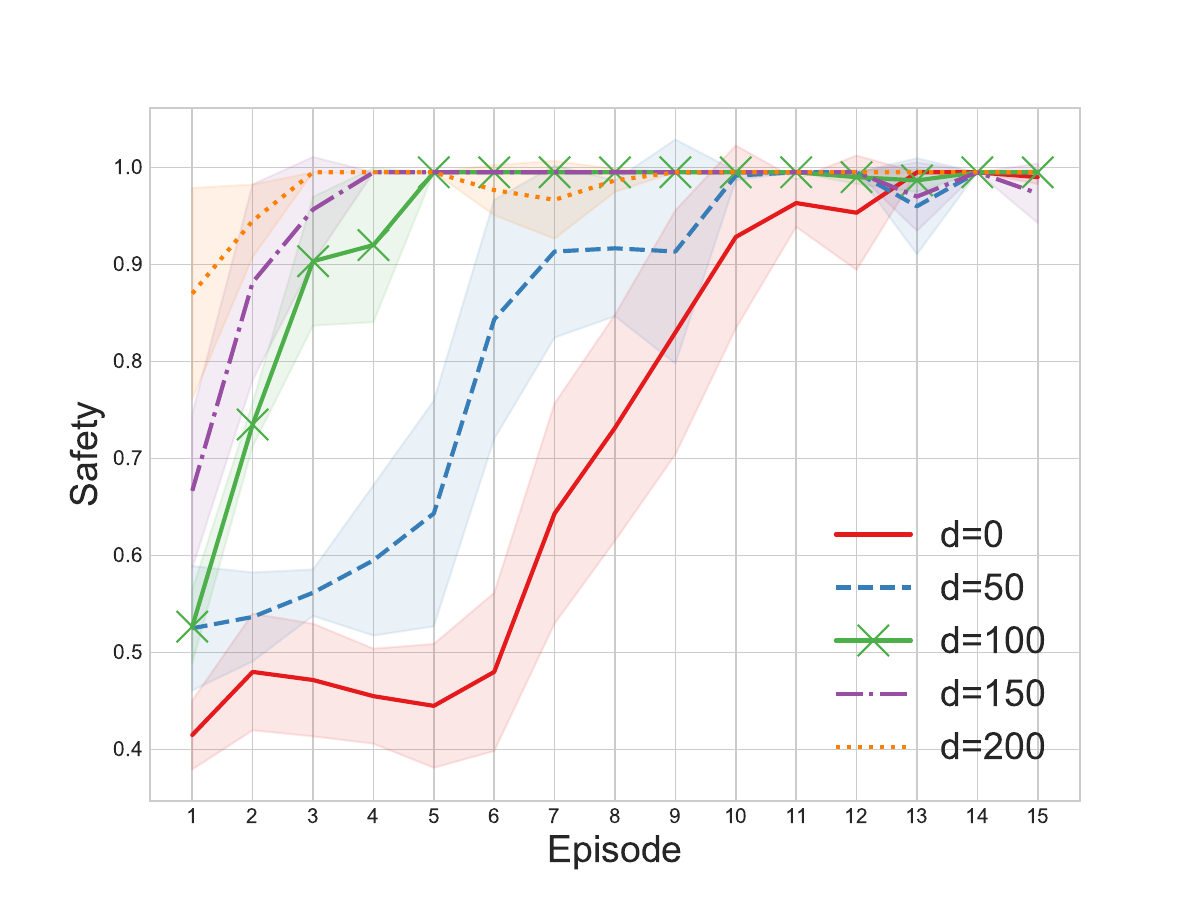}}
	\caption{Performance comparison under different communication range $d$.}
 \vspace{-15pt}
\label{fig_d}
\end{figure}

\begin{table}[t]
    	\centering
            \caption{\centering{The communication overheads of different communication range $d$.}}%居中标题
        \label{tb: communications overhead}
	    \begin{tabular}{m{2cm}|ccccc}
            \toprule
            \textbf{Name} & \multicolumn{5}{c}{\textbf{Value}}\\
	    	\midrule
	    	Communication Distance $ d $ &$0$& $50$& $100$& $150$& $200$\\
	    	\hline 
	    	Communication Overheads &$0$& $330.13$& $1,866$& $6,139.7$& $10,961.06$\\
	    	\bottomrule
	    \end{tabular}
    \end{table}	
    
\begin{figure}[!t]
	\centering
	\includegraphics[width=0.8\linewidth]{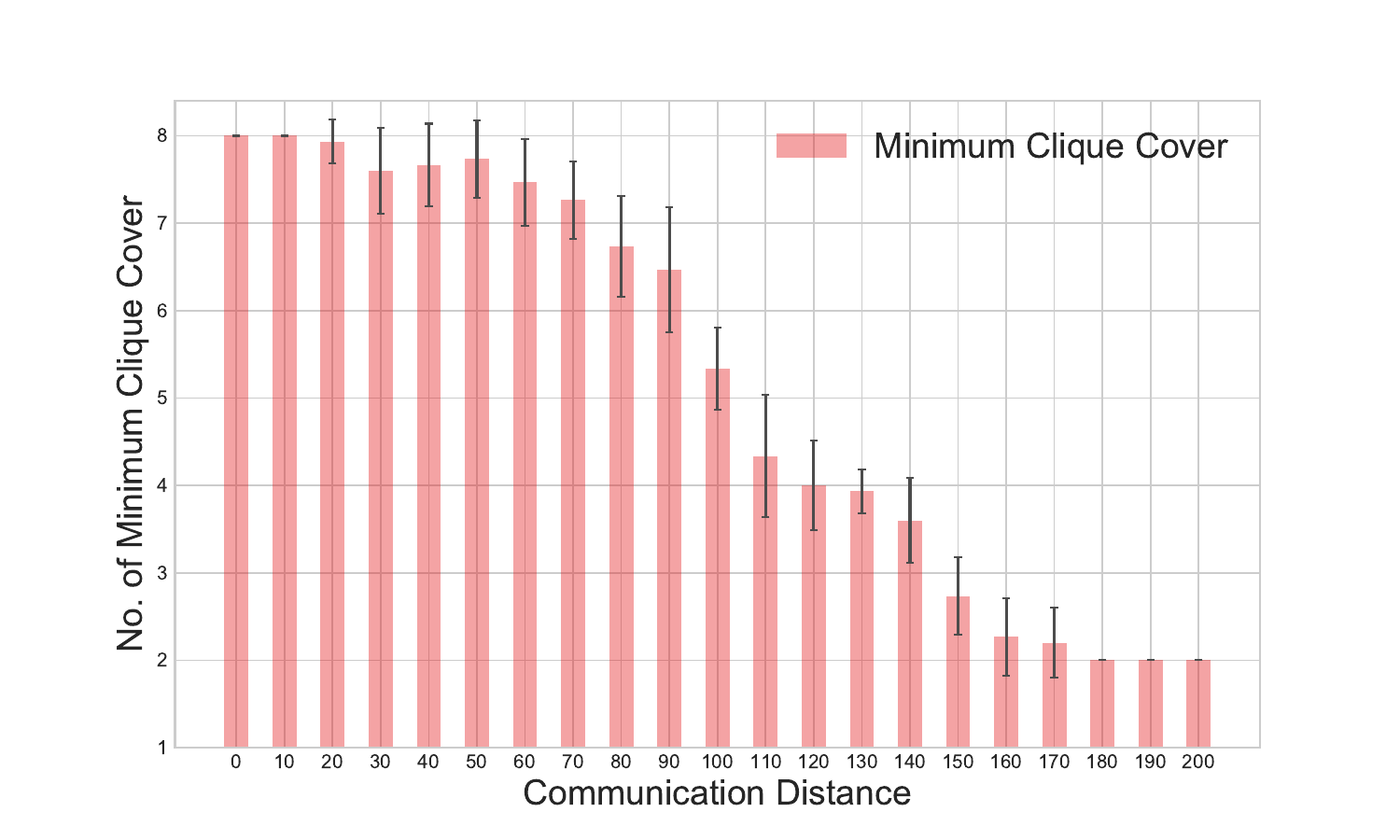}
	\caption{Correlation between number of minimum clique cover $\bar{\chi}\left(\mathcal{G}_{d,k}\right)$ and communication distance $d$.}
	\label{clique_cover}
        \vspace{-5pt}
\end{figure}
To further illustrate the feasibility and robustness of \texttt{MA-PETS} under more complex and variable real-world communication scenarios, such as vehicles experiencing communication hindrances while passing through tunnels or other challenging environments, we conduct simulations under dynamic and unpredictable communication constraints. In our simulations, we assume the possibilities of each CAV controlled by the \texttt{MA-PETS} to potentially encounter communication blockages or failures as $0\%$, $25\%$, and $50\%$ when attempting to communicate with other vehicles within a communication range of $d=100$ at the end of each episode. The results of these simulations are presented in Fig. \ref{com_constraint}. From Fig. \ref{com_constraint}, it is evident that dynamic and unpredictable communication constraints at varying probabilities can cause fluctuations in the \texttt{MA-PETS}'s safety and agility to some extent. However, the algorithm can effectively learn within $15$ episodes, demonstrating its feasibility and robustness in real-world autonomous vehicle control environments.
\begin{figure}
    \centering  %图片全局居中
         \subfigcapskip=-5pt %设置子图与子标题之间的距离
        % \subfigbottomskip=-5pt %设置子图之间的距离
    \subfigure[Agility]{
        \includegraphics[width=0.48\linewidth]{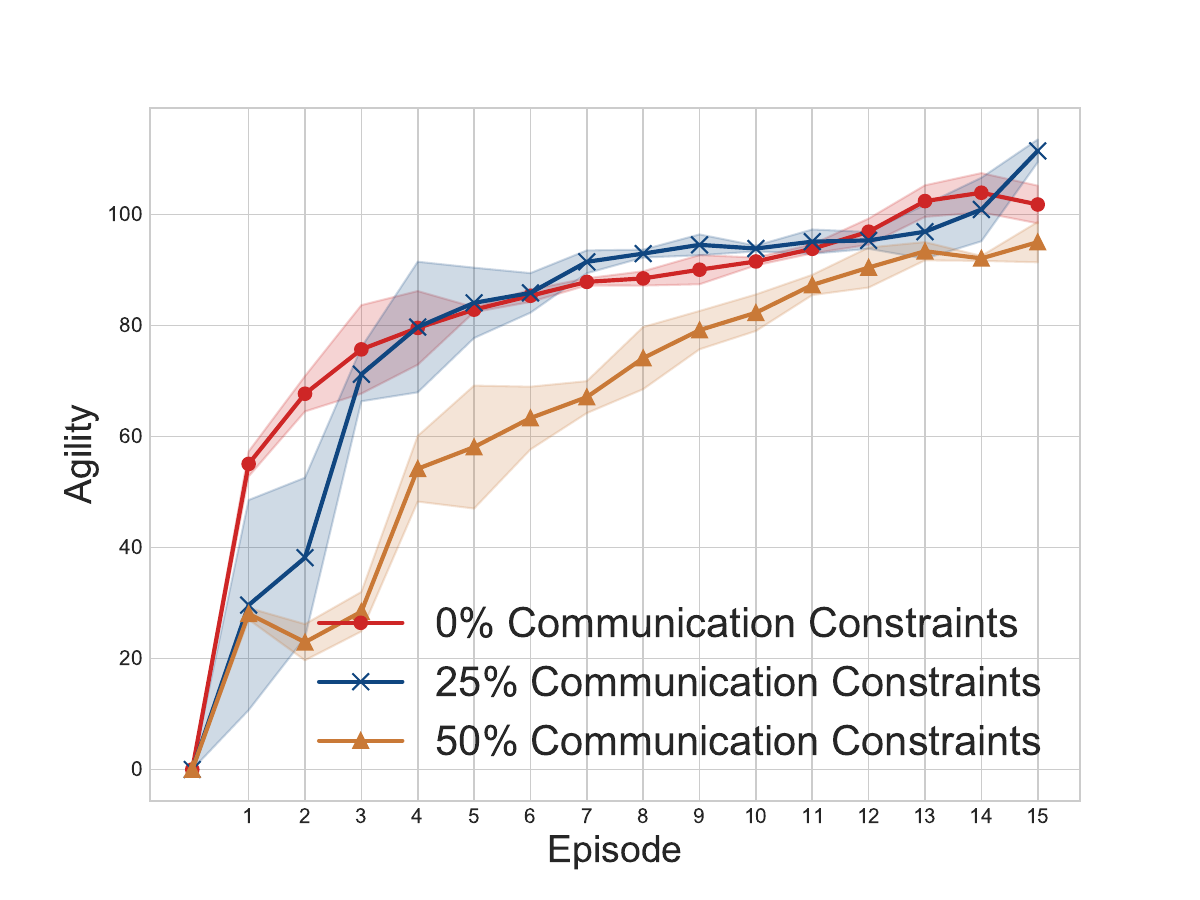}}
    \subfigure[Safety]{
        \includegraphics[width=0.48\linewidth]{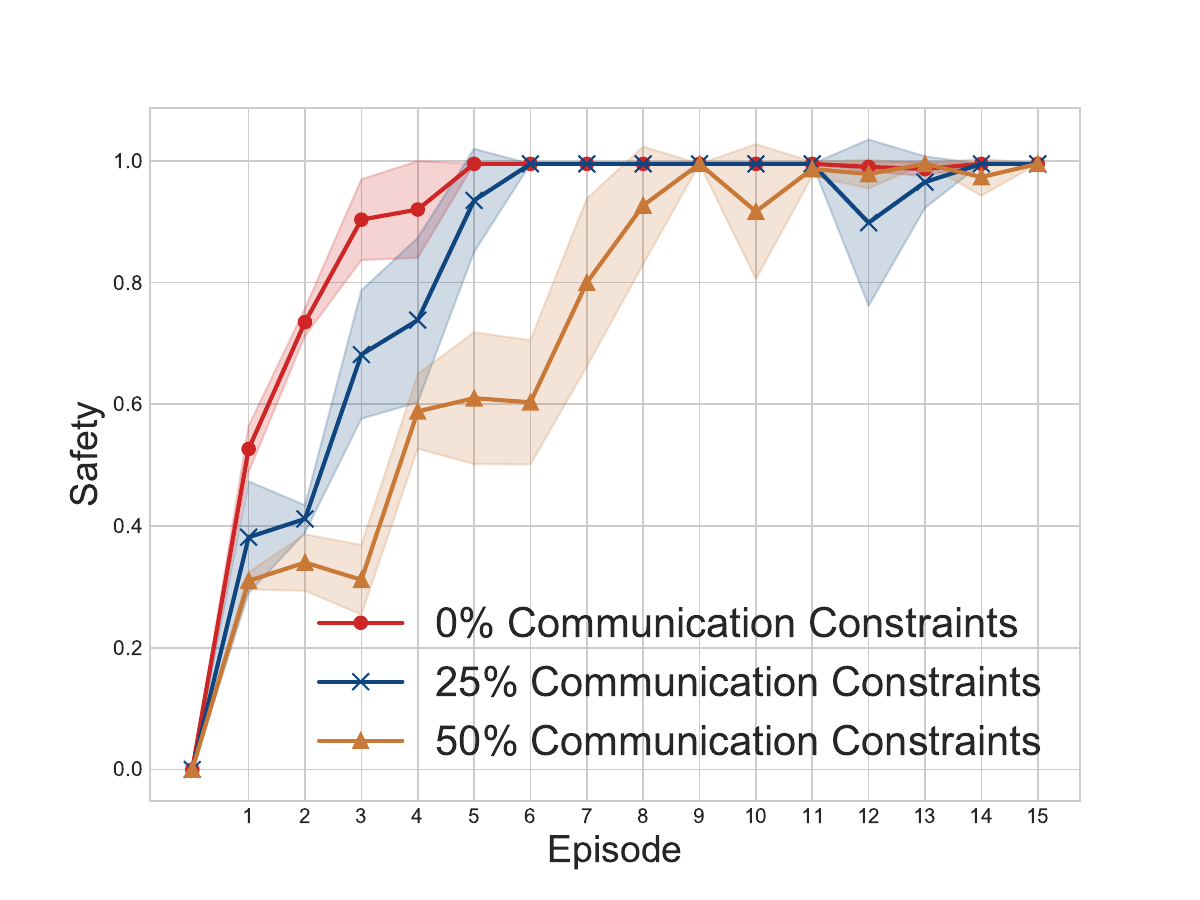}}
    \caption{Comparison of utility for different communication constraints.}
         \vspace{-5pt}
    \label{com_constraint}
\end{figure}

To clarify the effect of the MPC horizon $w$ in \texttt{MA-PETS}, we perform supplementary experiments. As depicted in Fig. \ref{fig_w}, from a security point of view, extending the MPC horizon $w$ gradually enhances security. However, beyond a horizon length of $w=25$, further increases yield only marginal security improvements. In terms of agility, the choice of the MPC horizon $w$ is crucial for the performance of the system. In particular, a ``too short'' horizon introduces a substantial bias in the performance of \texttt{MA-PETS}, as it hampers the ability to make accurate long-term predictions due to the scarcity of time steps.
Conversely, a ``too long'' horizon also results in a noticeable bias.  This is due to the increased divergence of particles over extended periods, which reduces the correlation between the currently chosen action and the long-term expected reward. This bias escalates as the horizon $w$ increases. Hence choosing an excessively long horizon adversely affects performance.

\begin{figure}
	\centering  %图片全局居中
         \subfigcapskip=-5pt %设置子图与子标题之间的距离
        % \subfigbottomskip=-5pt %设置子图之间的距离
	\subfigure[Agility]{
		\includegraphics[width=0.48\linewidth]{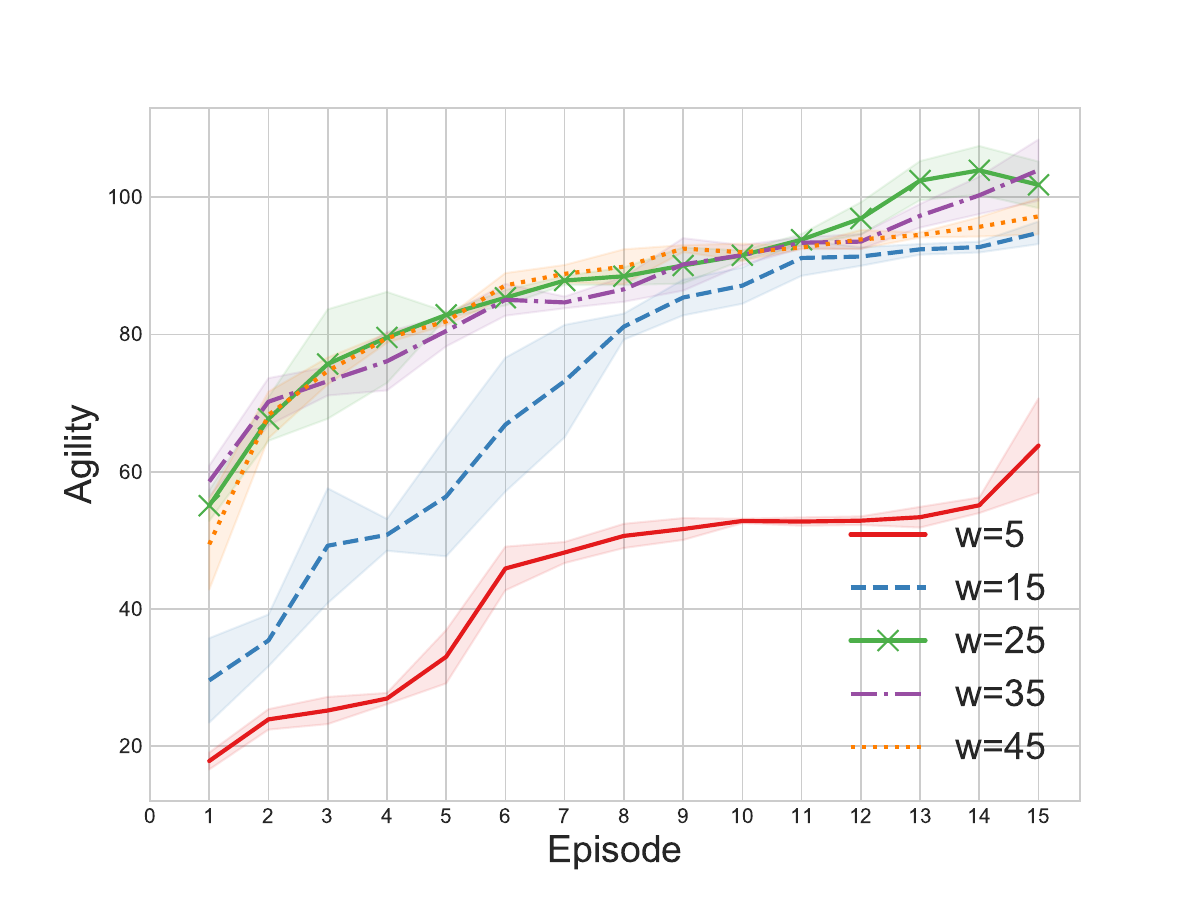}}
	\subfigure[Safety]{
		\includegraphics[width=0.48\linewidth]{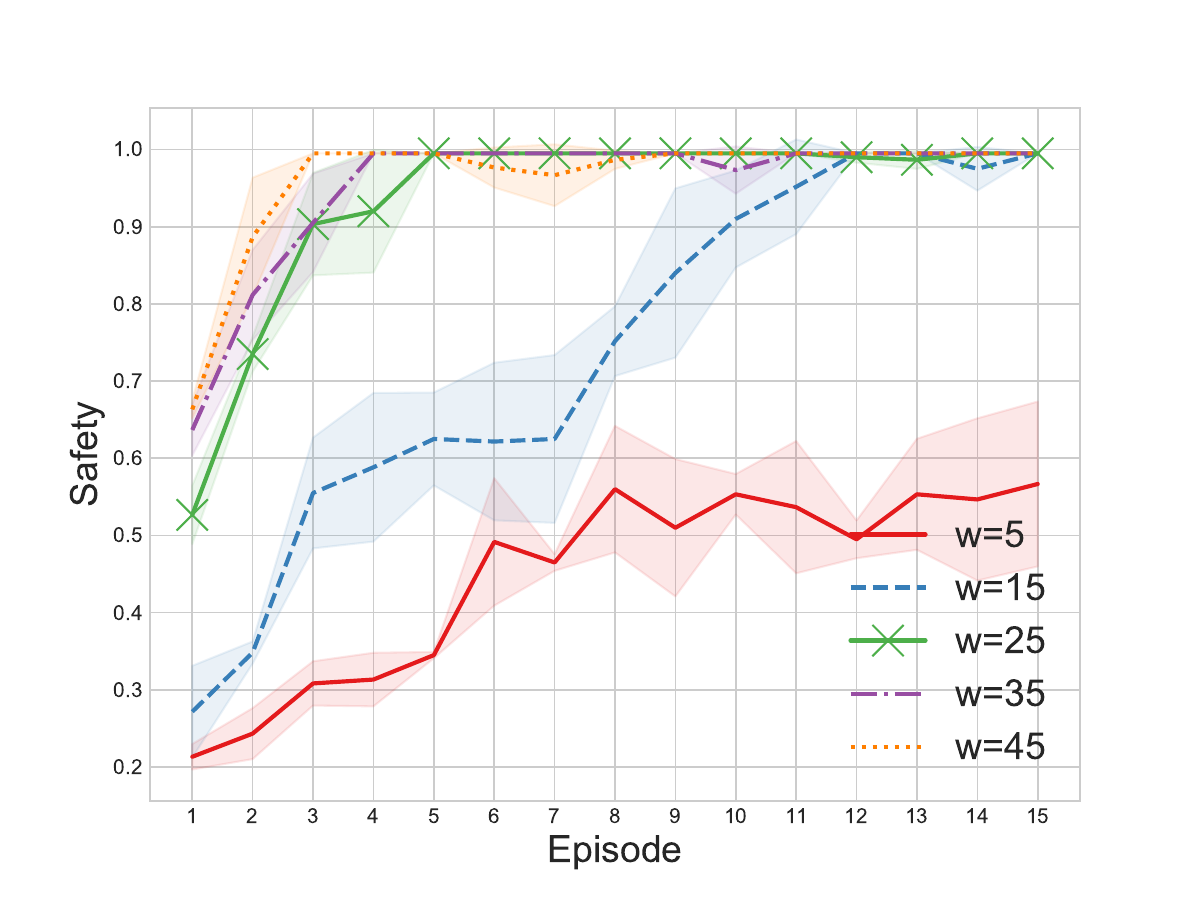}}
	\caption{Performance comparison under different horizon $w$ of MPC.}
\label{fig_w}
\vspace{-15pt}
\end{figure}

Our research also investigates the influence of the number of ensembles $B$ and the number of particles $P$ in \texttt{MA-PETS}. In terms of agility, Fig. \ref{box_ensembles} and Fig. \ref{box_particles} show the corresponding results respectively, which are derived from $15$ training episodes through $15$ independent simulation runs. It can be observed from Fig. \ref{box_ensembles} that along with the increase in $B$, the learning becomes more regularized and the performance improves. However, the performance improvement is no longer apparent for a sufficiently large $B$, and this improvement is more pronounced in more challenging and complex environments that require the learning of intricate dynamical models, leaving more scope for effective exploitation of the strategy without the use of model integration. In Fig. \ref{box_particles}, superior performance can be reaped for a larger number of particles, because more particles allow for a more accurate estimation of the reward for state-action trajectories influenced by transition probabilities. 
\begin{figure}%[!t]
	\centering
	\includegraphics[width=0.6\linewidth]{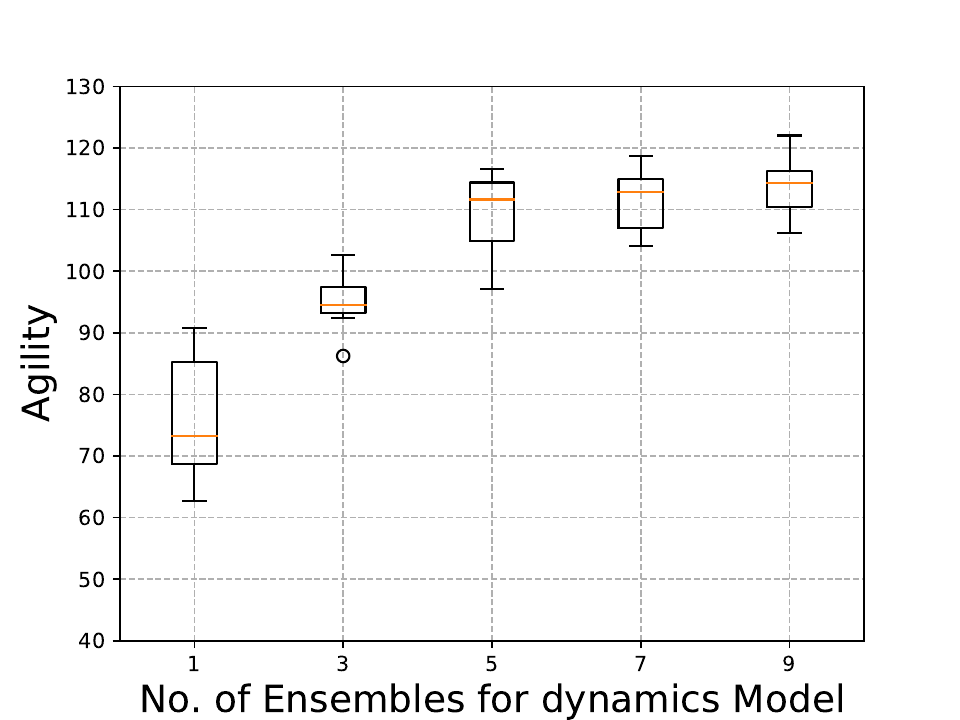}
	\caption{Performance comparison under different ensembles for dynamics model.}
    % \vspace{-15pt}
	\label{box_ensembles}
\end{figure}
\begin{figure}%[!t]
	\centering
	\includegraphics[width=0.6\linewidth]{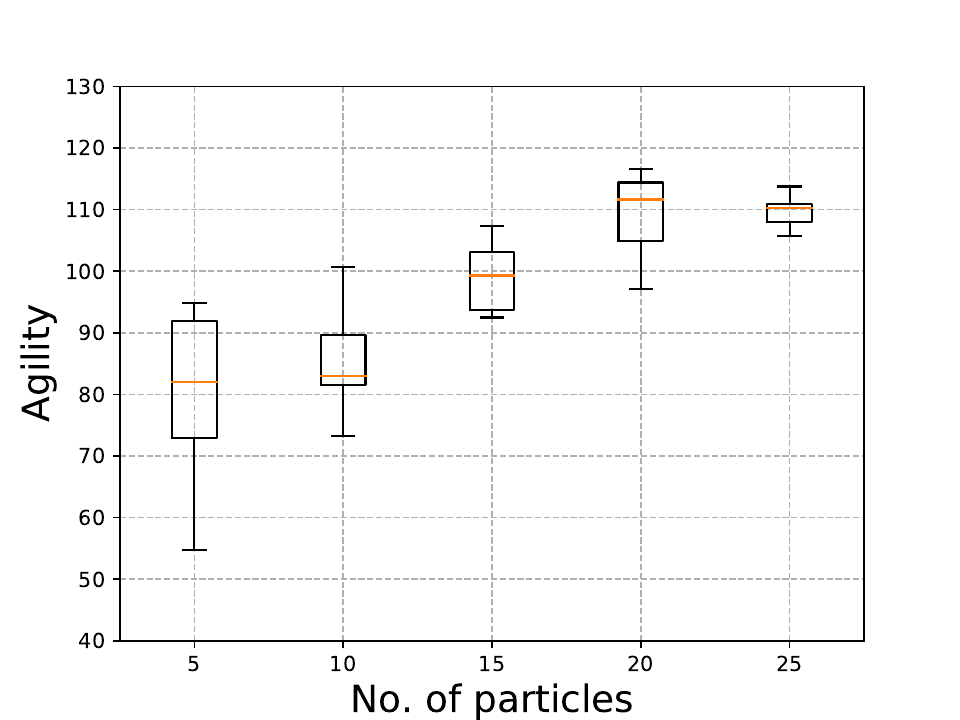}
	\caption{Performance comparison under different particles for TS.}
	\label{box_particles}
 \vspace{-5pt}
\end{figure}

To conduct a more in-depth numerical analysis of the complexity and computational time of \texttt{MA-PETS}, we compare the average decision-making time and its standard deviation of different MARL methods after learning 15 episodes, in both single-lane loop and multi-lane loop scenarios as shown in Table \ref{tb: decision_making_time}. The results reveal that the average decision time of \texttt{MA-PETS} is slightly higher than that of \texttt{FIRL} and \texttt{MAPPO} in both scenarios, but faster than \texttt{SVMIX}. Overall, compared to \texttt{FIRL} and \texttt{MAPPO}, \texttt{MA-PETS} has smaller variance and lower variability in decision times, demonstrating greater stability.
\begin{table*}
     \caption{\centering{The Comparison of Means and Standard Deviations of the Average Decision-making Time per Step in Testing MARL Methods in ``Figure Eight". (Unit: \num{1e-2} Seconds)}}
    \label{tb: decision_making_time}
    % \vspace{-0.2cm}
    % \tiny
    % \scriptsize
    \footnotesize
    \begin{center}
      \renewcommand\arraystretch{1.5} {
            \begin{tabular}{|c|c|c|c|c|}
                \hline	\multirow{2}{*} {\textbf{Scenarios}} & \multicolumn{4}{c|}{\textbf{The Different MARL Methods}}       \\
                \cline{2-5} & \texttt{MA-PETS} & \texttt{FIRL}& \texttt{MAPPO} & \texttt{SVMIX}\\
               % \midrule \textbf{Single-lane loop} & $ \num{8.851e-2}\pm\num{3.105e-3}$ & $ \num{7.604e-2}\pm\num{3.034e-3}$ & $ \num{8.007e-2}\pm\num{5.486e-3}$ & $ \num{1.028e-1}\pm\num{2.668e-3}$ \\
                %\hline \textbf{Multi-lane loop} & $ \num{9.726e-2}\pm\num{2.615e-3}$ & $ \num{8.104e-2}\pm\num{4.519e-3}$ & $ \num{8.779e-2}\pm\num{4.607e-3}$  & $\num{1.106e-1}\pm\num{3.025e-3}$    \\
                \hline \textbf{Single-lane loop} & $ 8.851 \pm 0.3105 $ & $ \num{7.604}\pm\num{0.3034}$ & $ \num{8.007}\pm\num{0.5486}$ & $ \num{10.28}\pm\num{0.2668}$ \\
                \hline \textbf{Multi-lane loop} & $ \num{9.726}\pm\num{0.2615}$ & $ \num{8.104}\pm\num{0.4519}$ & $ \num{8.779}\pm\num{0.4607}$  & $\num{11.06}\pm\num{0.3025}$    \\
                \hline
            \end{tabular}
        }
    \end{center}
    \vspace{-0.6cm}
\end{table*}
\section{Conclusion and Discussion}
\label{sec: conclusion}
In this paper, we have studied a decentralized MBRL-based control solution for CAVs with a limited communication range. In particular, we have proposed the \texttt{MA-PETS} algorithm with a significant performance improvement in terms of sample efficiency. Specifically, \texttt{MA-PETS} learns the environmental dynamics model from samples communicated between neighboring CAVs via PE-DNNs. Subsequently, \texttt{MA-PETS} efficiently develops TS-based MPC for decision-making. Afterward, we have derived \texttt{UCRL2}-based group regret bounds, which theoretically manifests that in the worst case, limited-range communications in multiple agents still benefit the learning. We have validated the superiority of \texttt{MA-PETS} over classical MFRL algorithms and demonstrated the contribution of communication to multi-agent MBRL.

There are many interesting directions to be explored in the future. For example, \texttt{MA-PETS} encounters significant challenges in more realistic scenarios. As environmental complexity escalates, scenarios with numerous variables or agents heighten the learning and computational requirements of \texttt{MA-PETS}. Moreover, if the dynamics of the environment are unpredictable, \texttt{MA-PETS} may face difficulties due to the probabilistic nature of its ensemble predictions. Another critical factor is the availability of computational resources. The computational power at hand can significantly influence the performance of \texttt{MA-PETS}, especially as more intricate environments or accurate ensemble methods demand robust processing capabilities. The key next steps involve optimizing the algorithm, reducing its reliance on computational resources, minimizing its decision delays, and improving its adaptability to rapidly changing and complex environments. Furthermore, \texttt{MA-PETS} confronts significant OOD challenges resulting from scarce training data, which can lead to learning instability and substantial overhead. Consequently, designing more accurate models to mitigate OOD issues is a critical priority that requires urgent attention. Finally, we have only derived the worst case of group regret bound, and the results for more general cases can be explored.

\appendix
\begin{lemma}[Appendix C.3 of \cite{UCRL2}]
\label{lem: inequality}
For any sequence of numbers $x_1, \ldots, x_n$ with $0 \leq x_k \leq X_{k-1}:=\max \left\{1, \sum_{i=1}^{k-1} x_i\right\}$, we have
\begin{align}
 \sum\nolimits_{k=1}^n \frac{x_k}{\sqrt{X_{k-1}}} \leq(\sqrt{2}+1) \sqrt{X_n}.   
\end{align}
\end{lemma}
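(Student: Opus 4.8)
The plan is to prove the inequality by separating the summation index into a \emph{warm-up} regime, in which the running sum has not yet reached the floor value $1$, and a \emph{steady-state} regime, in which the denominator $X_{k-1}$ genuinely tracks the accumulated mass. Throughout I write $S_k := \sum_{i=1}^k x_i$ so that $X_k = \max\{1, S_k\}$, and I first record the two structural facts that drive everything. Since $x_k \ge 0$, the sequence $(X_k)$ is non-decreasing with $X_0 = 1$. More importantly, the hypothesis $x_k \le X_{k-1}$ yields the doubling bound $X_k \le 2X_{k-1}$: indeed $X_k = \max\{1, S_{k-1} + x_k\} \le \max\{1, X_{k-1} + x_k\} \le \max\{1, 2X_{k-1}\} = 2X_{k-1}$, using $S_{k-1} \le X_{k-1}$, $x_k \le X_{k-1}$, and $X_{k-1} \ge 1$. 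This single inequality is exactly what produces the constant $\sqrt{2}+1$.

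Next I define $m$ to be the first index with $S_m \ge 1$ (handling the degenerate case $S_n < 1$ separately, where every $X_{k-1}=1$ and the sum is simply $S_n < 1 \le (\sqrt{2}+1)\sqrt{X_n}$). For the warm-up terms $k \le m$ one has $S_{k-1} < 1$, hence $X_{k-1}=1$ and $\frac{x_k}{\sqrt{X_{k-1}}} = x_k$; summing gives exactly $S_m = X_m$. Because $S_{m-1} < 1$ and $x_m \le X_{m-1} = 1$, I get $X_m = S_m < 2$, so $\sqrt{X_m} < \sqrt{2} < \sqrt{2} +1$, and therefore the warm-up contribution satisfies $\sum_{k \le m} \frac{x_k}{\sqrt{X_{k-1}}} = X_m \le (\sqrt{2}+1)\sqrt{X_m}$.

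For the steady-state terms $k > m$ the running sum has crossed $1$, so $X_{k-1} = S_{k-1}$ and $X_k - X_{k-1} = x_k$. Writing $\sqrt{X_k} - \sqrt{X_{k-1}} = \frac{x_k}{\sqrt{X_k}+\sqrt{X_{k-1}}}$ and invoking the doubling bound in the form $\sqrt{X_k}+\sqrt{X_{k-1}} \le (\sqrt{2}+1)\sqrt{X_{k-1}}$ (equivalently $\sqrt{X_k}\le\sqrt{2}\,\sqrt{X_{k-1}}$), I obtain the per-term telescoping estimate $\frac{x_k}{\sqrt{X_{k-1}}} \le (\sqrt{2}+1)(\sqrt{X_k}-\sqrt{X_{k-1}})$. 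Summing over $k = m+1, \dots, n$ telescopes to $(\sqrt{2}+1)(\sqrt{X_n}-\sqrt{X_m})$, and adding the warm-up bound gives $(\sqrt{2}+1)\sqrt{X_m} + (\sqrt{2}+1)(\sqrt{X_n}-\sqrt{X_m}) = (\sqrt{2}+1)\sqrt{X_n}$, as claimed.

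The main obstacle is precisely the boundary/warm-up regime: the clean per-term telescoping inequality $\frac{x_k}{\sqrt{X_{k-1}}} \le (\sqrt{2}+1)(\sqrt{X_k}-\sqrt{X_{k-1}})$ \emph{fails} there, since one can have $X_k = X_{k-1} = 1$ (zero right-hand side) while $x_k > 0$. Recognizing that these terms must be handled in bulk --- controlled crudely by the fact that the entire accumulated mass at the crossing index is strictly below $2$ --- rather than term-by-term, is the key to keeping the constant at $\sqrt{2}+1$; the remainder is routine telescoping.
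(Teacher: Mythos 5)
Your proof is correct, and it takes a genuinely different route from the one the paper relies on. The paper does not prove Lemma \ref{lem: inequality} itself: it imports it verbatim from Appendix C.3 of the UCRL2 paper, where the statement is established by induction on $n$ --- the inductive step bounds $(\sqrt{2}+1)\sqrt{X_{n-1}} + x_n/\sqrt{X_{n-1}}$ by $(\sqrt{2}+1)\sqrt{X_{n-1}+x_n}$, a scalar inequality in $r = x_n/X_{n-1} \in [0,1]$ that holds by concavity of $\sqrt{1+r}$ with equality at both endpoints, and then needs $X_{n-1}+x_n \le X_n$, which is only legitimate once the running sum has reached the floor value $1$; below the floor one must argue directly, which is precisely the boundary phenomenon your warm-up phase isolates explicitly. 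Your argument replaces the induction with a telescoping of $\sqrt{X_k}$: the doubling bound $X_k \le 2X_{k-1}$, which in the inductive proof appears as the endpoint $r=1$ of the scalar inequality, here enters as $\sqrt{X_k}+\sqrt{X_{k-1}} \le (\sqrt{2}+1)\sqrt{X_{k-1}}$, while the pre-threshold terms (where $X_{k-1}=1$ and the per-term telescoping estimate can fail) are absorbed in bulk via the crossing bound $X_m < 2$. All steps check out, including the edge cases: the degenerate regime $S_n < 1$, the crossing index $m=n$ (empty steady-state sum), and the identity $X_k - X_{k-1} = x_k$ valid only past the crossing, which you correctly restrict to $k>m$. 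What the inductive proof buys is brevity once the boundary case is dispatched; what yours buys is transparency --- it makes visible both where the constant $\sqrt{2}+1$ originates (the at-most-doubling of $X_k$ forced by $x_k \le X_{k-1}$) and why the floor demands separate treatment --- and the telescoping template extends immediately to variants such as $\sum_k x_k/X_{k-1}^{\alpha}$ for $\alpha \in (0,1)$, where an induction would have to be redone from scratch.
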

% 参考文献中期刊名称要么全部全称、要么全部缩略词
% 快速看了一下，部分会议论文如NIPS或者ICML也没有统一，如20、33、42
% 一些会议缺少会议地点
\bibliographystyle{IEEEtran}
\bibliography{bib1}

\vspace{-1.5cm}
\begin{IEEEbiography}[{\includegraphics[width=1in,height=1.25in,clip,keepaspectratio]{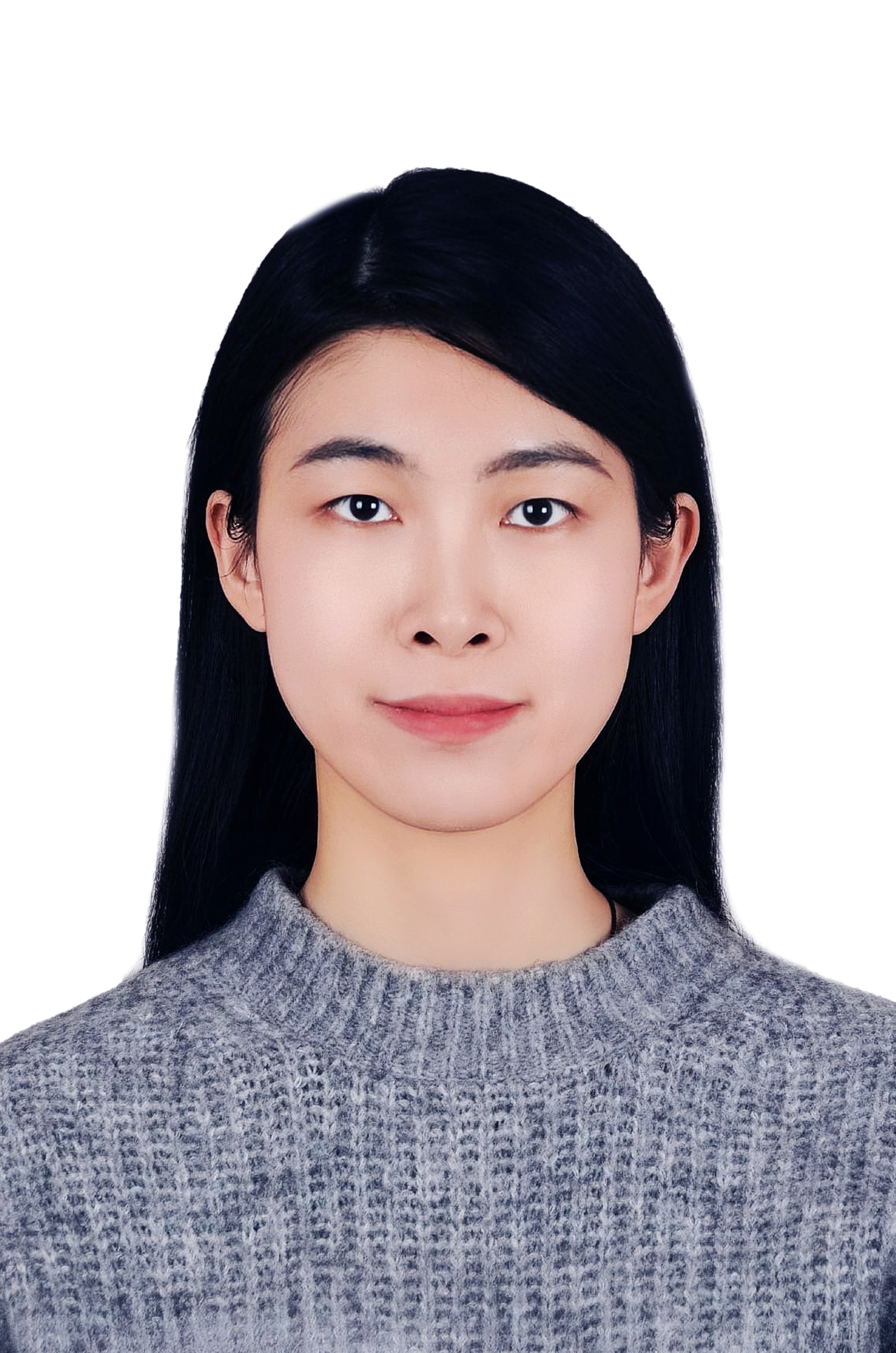}}]{Ruoqi Wen}
received the B.Sc. degree in Mathematics and Applied Mathematics from Hefei University of Technology, Hefei, China, in June 2021. She is currently pursuing a Ph.D. degree with the College of Information Science and Electronic Engineering at Zhejiang University,
Hangzhou, China. Her research interests include multi-agent reinforcement learning, regret bound, endogenous and exogenous uncertainty analysis, wireless networks, and autonomous vehicle control.
\end{IEEEbiography}

\vspace{-1cm} 
\begin{IEEEbiography}
[{\includegraphics[width=1in,height=1.25in, clip,keepaspectratio]{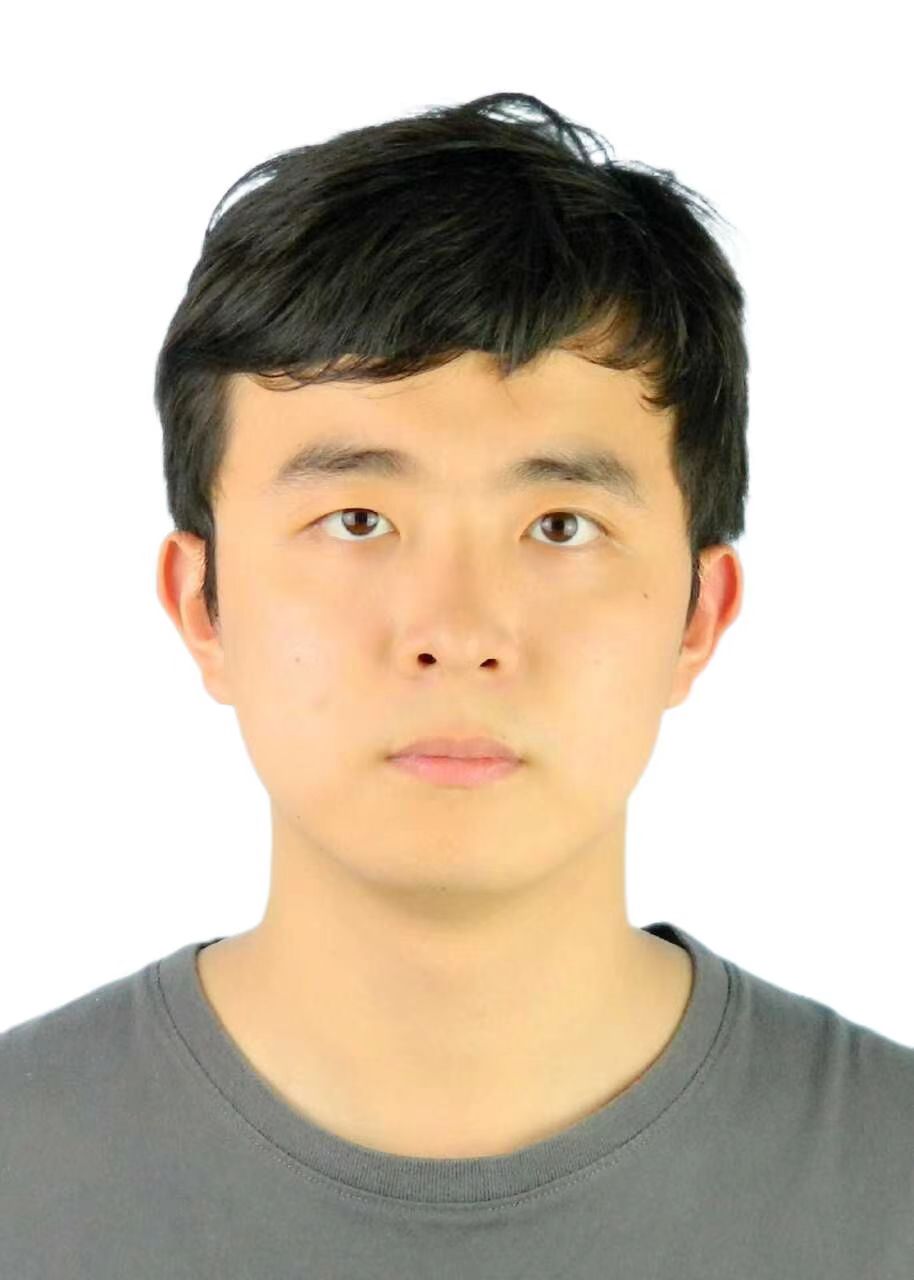}}]{Jiahao Huang} 
received the B.E. degree in information engineering from Zhejiang University, Hangzhou, China, in 2023. He is currently working toward a Master's degree in the Department of Information Science and Electronic Engineering at Zhejiang University, Hangzhou, China. His research interests include machine learning, signal detection, vehicular communications, and radio resource management.
\end{IEEEbiography}

\vspace{-1cm} 
\begin{IEEEbiography}[{\includegraphics[width=1in,height=1.25in,clip,keepaspectratio]{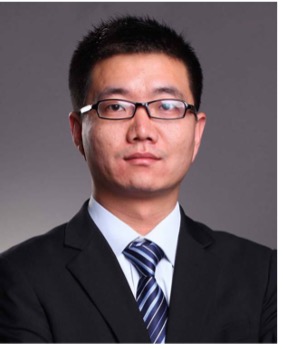}}]{Rongpeng Li}
(Member, IEEE) is currently an Associate Professor with the College of Information Science and Electronic Engineering, Zhejiang University, Hangzhou, China. He was a Research Engineer with the Wireless Communication Laboratory, Huawei Technologies Company, Ltd., Shanghai, China, from August 2015 to September 2016. He was a Visiting Scholar with the Department of Computer Science and Technology, University of Cambridge, Cambridge, U.K., from February 2020 to August 2020. His research interest currently focuses on networked intelligence for communications evolving (NICE). He received the Wu Wenjun Artificial Intelligence Excellent Youth Award in 2021. He serves as an Editor for China Communications.
\end{IEEEbiography}

\vspace{-1cm} 
\begin{IEEEbiography}[{\includegraphics[width=1in,height=1.25in,clip,keepaspectratio]{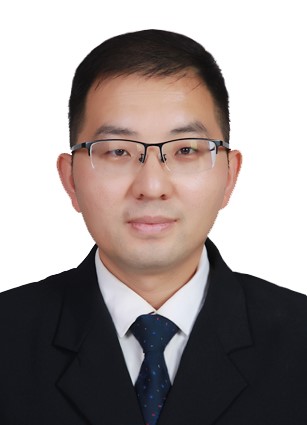}}]{Guoru Ding}
is currently a Professor at the College of Communications Engineering, Nanjing, China. He received a B.S. (Hons.) degree in electrical engineering from Xidian University, Xi’an, China, in 2008, and a Ph.D. (Hons.) degree in communications and information systems from the College of Communications Engineering, Nanjing, China, in 2014. From 2015 to 2018, he was a Post-Doctoral Research Associate with the National Mobile Communications Research Laboratory, Southeast University, Nanjing, China. His research interests include cognitive radio networks, massive MIMO, machine learning, and data analytics over wireless networks.

He received the Excellent Doctoral Thesis Award of the China Institute of Communications in 2016, the Alexander von Humboldt Fellowship in 2017, and the 14th IEEE COMSOC Aisa-Pacific Outstanding Young Researcher Award in 2019. He was a recipient of the Natural Science Foundation for Distinguished Young Scholars of Jiangsu Province, China, and six best paper awards from international conferences such as the IEEE VTC-FALL 2014. He has served as a Guest Editor for the IEEE JOURNAL ON SELECTED AREAS IN COMMUNICATIONS (special issue on spectrum sharing and aggregation in future wireless networks) and the Chinese Journal of Aeronautics (special issue on when aeronautics meets 6G and AI). 
\end{IEEEbiography}
\vspace{-1cm}

\begin{IEEEbiography}[{\includegraphics[width=1in,height=1.25in,clip,keepaspectratio]{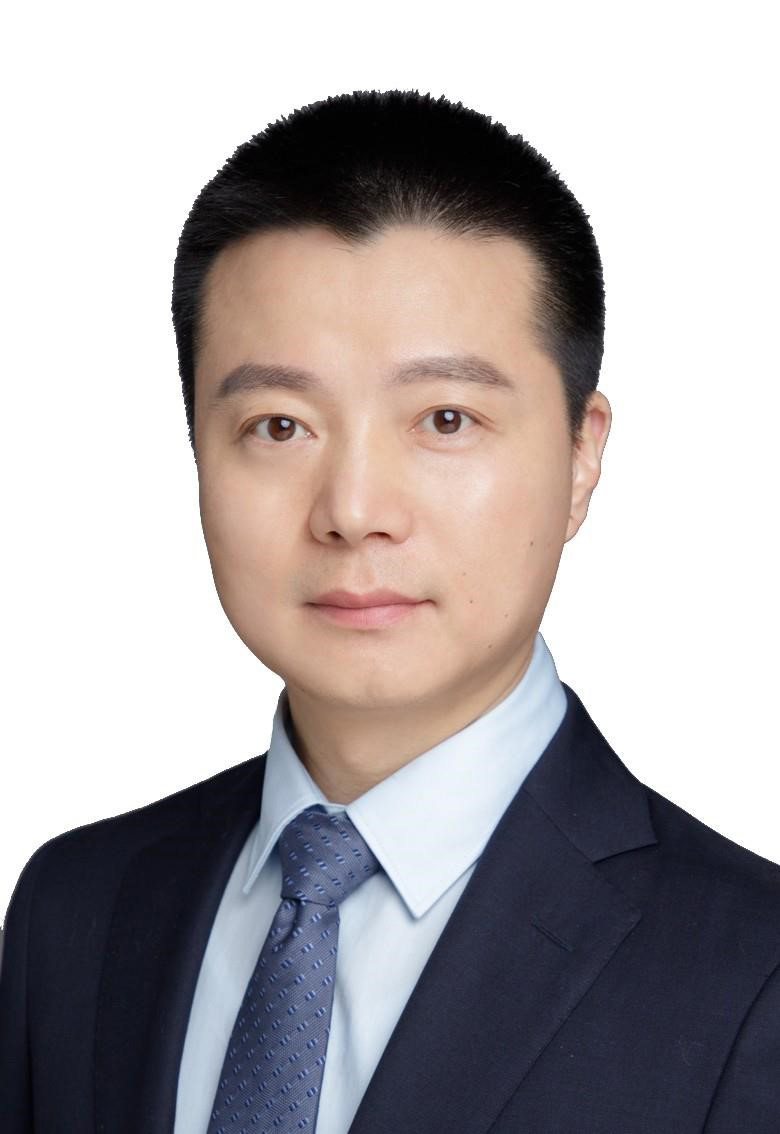}}]{Zhifeng Zhao}
(Member, IEEE) received the B.E. degree in computer science, the M.E. degree in communication and information systems, and the Ph.D. degree in communication and information systems from the PLA University of Science and Technology, Nanjing, China, in 1996, 1999, and 2002, respectively. From 2002 to 2004, he acted as a Post-Doctoral Researcher at Zhejiang University, Hangzhou, China, where his researches were focused on multimedia next-generation networks (NGNs) and soft switch technology for energy efficiency. From 2005 to 2006, he acted as a Senior Researcher with the PLA University of Science and Technology, where he performed research and development on advanced energy-efficient wireless routers, ad-hoc network simulators, and cognitive mesh networking test-bed. From 2006 to 2019, he was an Associate Professor at the College of Information Science and Electronic Engineering, Zhejiang University. Currently, he is with the Zhejiang Lab, Hangzhou as the Chief Engineering Officer. His research areas include software-defined networks (SDNs), wireless networks in 6G, computing networks, and collective intelligence. He is the Symposium Co-Chair of ChinaCom 2009 and 2010. He is the Technical Program Committee (TPC) Co-Chair of the 10th IEEE International Symposium on Communication and Information Technology (ISCIT 2010).
\end{IEEEbiography}

\end{document}